\documentclass{article} 
\usepackage{iclr2027_conference, times}

\usepackage{amsmath,amsfonts,bm,amssymb}

\def\eqref#1{equation~\ref{#1}}

\def\1{\bm{1}}

\def\mbx{{\mathbf{x}}}

\def\mbX{{\mathbf{X}}}

\def\mcX{{\mathcal{X}}}
\def\mcY{{\mathcal{Y}}}
\def\mcH{{\mathcal{H}}}
\def\mcA{{\mathcal{A}}}

\def\mcL{{\mathcal{L}}}

\def\mcS{{\mathcal{S}}}

\DeclareMathAlphabet{\mathsfit}{\encodingdefault}{\sfdefault}{m}{sl}
\SetMathAlphabet{\mathsfit}{bold}{\encodingdefault}{\sfdefault}{bx}{n}

\def\sP{{\mathbb{P}}}

\newcommand{\E}{\mathbb{E}}

\newcommand{\VoI}{\textsc{VoI}(\mbx)}

\newcommand{\hVoI}{\widehat{\textsc{VoI}}(\mbx)}

\usepackage{hyperref}
\usepackage{url}

\usepackage[capitalize,nameinlink]{cleveref}
\usepackage{subcaption}  
\usepackage{hyperref}
\usepackage{natbib}
\usepackage{multirow}
\usepackage{booktabs}
\usepackage{float}
\usepackage{graphicx}
\usepackage{amsthm}
\theoremstyle{plain}
\usepackage{bbm}
\usepackage[most]{tcolorbox}
\usetikzlibrary{shapes.geometric}
\usepackage[dvipsnames]{xcolor}
\usepackage{cleveref}
\crefname{appendix}{Appendix}{Appendices}
\Crefname{appendix}{Appendix}{Appendices}

\usepackage{tikz}
\definecolor{hNoReq}{HTML}{CC78BC}  
\definecolor{hReq}{HTML}{DE8F05}   
\definecolor{hpcolor}{HTML}{56B4E9}
\definecolor{mpcolor}{HTML}{cc78bc}
\definecolor{mpfullcolor}{HTML}{fbafe4}

\newcommand{\TLearner}{\texttt{T-Learner}}
\newcommand{\ClasswiseRisk}{\texttt{ClassWise}}
\newcommand{\Random}{\texttt{Random}}
\newcommand{\SLearner}{\texttt{S-Learner}}
\newcommand{\Confidence}{\texttt{Confidence}}

\newcommand{\SynthB}{\texttt{SynthBin}}
\newcommand{\SynthM}{\texttt{SynthMulti}}
\newcommand{\Synth}{\texttt{Synth}}
\newcommand{\Email}{\texttt{Email}}
\newcommand{\Image}{\texttt{ImageNet-16H}}
\newcommand{\hzero}{\texttt{NoDisc}}
\newcommand{\hone}{\texttt{FullDisc}}
\newcommand{\ie}[0]{\textit{i.e.,}}

\newcommand{\mkoct}[1]{\tikz[baseline=-0.55ex]{\node[regular polygon,regular polygon sides=8,  fill=#1,draw=black,line width=0.35pt,inner sep=0pt,minimum size=1.46ex] {};}}
\newcommand{\mkcross}[1]{\tikz[baseline=-0.55ex]{\draw[fill=#1,draw=black,line width=0.35pt]
  ( 0.30ex, 0.30ex) -- ( 0.92ex, 0.30ex) -- ( 0.92ex,-0.30ex) -- ( 0.30ex,-0.30ex) --
  ( 0.30ex,-0.92ex) -- (-0.30ex,-0.92ex) -- (-0.30ex,-0.30ex) -- (-0.92ex,-0.30ex) --
  (-0.92ex, 0.30ex) -- (-0.30ex, 0.30ex) -- (-0.30ex, 0.92ex) -- ( 0.30ex, 0.92ex) -- cycle;}}

\newcommand{\mkex}[1]{\tikz[baseline=-0.55ex]{\draw[fill=#1,draw=black,line width=0.35pt,rotate=45,scale=0.84]
  ( 0.30ex, 0.30ex) -- ( 0.92ex, 0.30ex) -- ( 0.92ex,-0.30ex) -- ( 0.30ex,-0.30ex) --
  ( 0.30ex,-0.92ex) -- (-0.30ex,-0.92ex) -- (-0.30ex,-0.30ex) -- (-0.92ex,-0.30ex) --
  (-0.92ex, 0.30ex) -- (-0.30ex, 0.30ex) -- (-0.30ex, 0.92ex) -- ( 0.30ex, 0.92ex) -- cycle;}}

\newcommand{\HP}{\mkex{hpcolor}}     
\newcommand{\MP}{\mkcross{mpcolor}}     
\newcommand{\MPFull}{\mkoct{mpfullcolor}}

\title{Should I Stay or Should I Show?\\Learning to Selectively Disclose Information}

\author{Carlotta Giacchetta$^{1}$\thanks{Corresponding author.} \quad
Alessando Bogani$^{1}$  \quad
Cesare Barbera$^{2,1}$ \quad
Giovanni De Toni$^{3}$\thanks{Work done while at Fondazione Bruno Kessler (FBK).} \\
\textbf{Michele Caprio}$^{4}$ \quad
\textbf{Andrea Pugnana}$^{1,}$\thanks{Joint last authorship.} \quad
\textbf{Andrea Passerini}$^{1,\ddagger}$ \\[4pt]
$^{1}$University of Trento, Trento, Italy \quad
$^{2}$University of Pisa, Pisa, Italy \\
$^{3}$ETH AI Center \& ETH, Z\"urich, Switzerland \quad
$^{4}$University of Warwick, Coventry, UK \\[4pt]
\texttt{\{name.surname\}@unitn.it}, \quad
\texttt{cesare.barbera@phd.unipi.it}, \\
\texttt{giovanni.detoni@ai.ethz.ch}, \quad
\texttt{michele.caprio@warwick.ac.uk}
}

\newtheorem{theorem}{Theorem}
\newtheorem{proposition}{Proposition}

\iclrfinalcopy 
\begin{document}

\maketitle

\begin{abstract}

In many high-stakes settings, human decision-makers can acquire support information before making a decision. However, acquiring information is costly, and disclosure may fail to improve human decisions or may even impair them.
We tackle this problem by studying \emph{selective disclosure}, i.e., the problem of learning when to reveal support information to a human decision-maker under a budget constraint. We first show that the optimal policy is a threshold rule on the Value of Information (VoI), i.e., the expected reduction in human decision risk induced by disclosure.
Since VoI is unknown in practice, we estimate the regime-specific human risks and bound the possible degradation of the resulting plug-in policy relative to lack of disclosure, as well as its regret relative to the optimal policy.
Experiments on benchmark datasets show that selective disclosure outperforms both no disclosure and full disclosure, regardless of whether the support information is beneficial or harmful.
Two user studies show that human-AI team performance can improve when disclosure is led by our learned policy and not human-selected, although this advantage varies across tasks. A counterfactual benchmark, which replaces participants' predictions with a machine-learning prediction when disclosure occurs, suggests that these differences might depend on lower adherence to advice when the information is automatically provided rather than self-requested.
\end{abstract}

\section{Introduction}

In many decision-making tasks, humans predict uncertain outcomes with the aid of support information. Financial analysts may conduct costly research to inform investment decisions~\citep{xiong2023secret}, physicians may order invasive diagnostic tests~\citep{kasivisvanathan2018mri}, and zoologists may collect biological samples to identify endangered species~\citep{li2017applying}. Such information can improve decisions, but acquiring it is often costly~\citep{saar2009active}. Moreover, support information may also be redundant or even harmful when it introduces noise, confusion, or over-reliance on imperfect external feedback~\citep{DBLP:journals/ais/RomeoC26}. Information disclosure must therefore balance the cost of providing information against its potential effect on decision quality.

To tackle this challenge, we introduce \emph{Learning to Selectively Disclose} (LSD), a decision-theoretic framework that learns when to show additional support information to a human decision-maker within a limited disclosure budget (\cref{fig:lsd}). Rather than asking whether information is informative in general, LSD asks a more decision-relevant question: \emph{will revealing information improve the final decision?} We formalize this benefit through the conditional Value of Information (VoI), defined as the reduction in human decision risk caused by disclosure. 

More precisely, we show that the optimal policy prescribes disclosure for cases with the largest positive VoI. Because VoI is not observable, we cast its estimation as a causal inference problem: we map VoI to the negative conditional average treatment effect of information disclosure on human decision loss and we establish conditions for its identification. Then, we derive guarantees for plug-in policies relative to no disclosure and the optimal policy, and introduce a class-wise VoI estimator for classification under the $0-1$ loss. Experiments on both synthetic and real-world data show that the learned policy identifies instances in which disclosure improves human decisions. 
We further evaluate our approach with two user studies on two different tasks, showing that our learned policy either outperforms or performs on par with human-selected disclosure. A counterfactual benchmark, which replaces human predictions with Machine Learning (ML) predictions for disclosed items, suggests that lower adherence rates to the ML predictions may partly account for these results.

\textbf{Our Contributions.}
Our main contributions are:
\begin{enumerate}
\item We present selective disclosure as a two-regime decision problem
and show that the optimal budget-constrained policy discloses information for
instances with the largest positive conditional Value of Information (VoI);
\item We formulate selective disclosure as a causal intervention on the decision-maker's information, characterizing the conditional VoI as the causal reduction in human decision risk induced by disclosure. Then, we establish the assumptions under which VoI is identifiable;
\item We derive performance guarantees for plug-in disclosure policies in terms of their VoI estimation error, both relative to the no-disclosure baseline and relative to the optimal policy under the same budget constraint;
\item For classification under the $0$-$1$ loss, we introduce a class-wise estimator of the regime-specific risks and demonstrate its effectiveness on synthetic and real-world datasets;
\item We run two user studies to compare human-AI team performance in classification tasks under policy-selected and human-selected disclosure.

\end{enumerate}%
\begin{figure}[t]
    \centering
    \includegraphics[width=.8\linewidth]{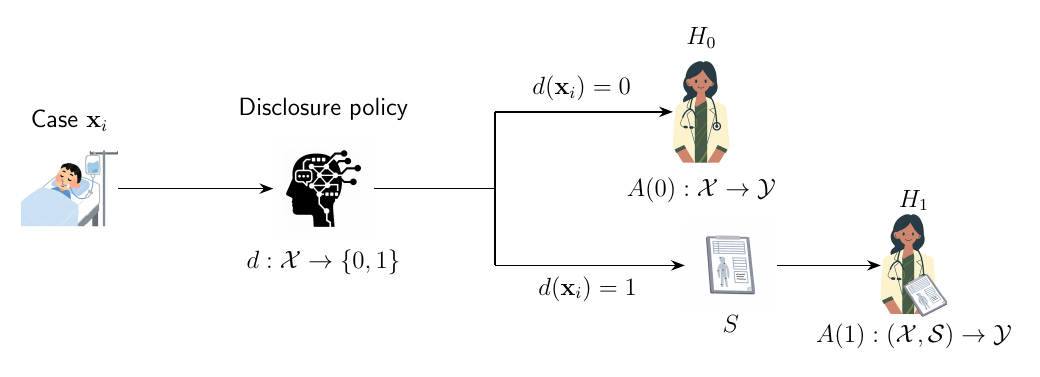}
    \caption{\textbf{LSD pipeline}. For each case $\mbx_i$, a disclosure policy $d : \mcX \to \{0,1\}$ decides when to disclose support information $S$ to a human expert $H$. If no support information is disclosed ($d(\mbx_i) = 0$), a human expert makes a decision using only baseline features. When support information is disclosed ($d(\mbx_i) = 1$), a human expert uses both baseline and support information.}
    \label{fig:lsd}
\end{figure}

\section{Preliminaries} 
\label{sec:background}

We study a decision-making setting in which a human observes fixed baseline information in order to perform an action. Before selecting the action, the human may additionally receive support information to help its decision. More formally, let $\mcX\subseteq\mathbb{R}^m$ be the baseline information space and let $\mcY= \{1,\ldots,|\mcY|\}$ be a finite target space. A case consists of baseline information $\mbX\in\mcX$, a target $Y\in\mcY$, and support information $S\in\mcS$, where $S$ may, e.g., be an ML model prediction. Although the realization of $S$ may differ across cases, we treat it as either disclosed in full or withheld.
A binary variable $D\in\{0,1\}$ denotes the disclosure regime. 
We define the action space $\mcA$ and, throughout the paper, we set $\mcA=\mcY$, so that the human action is a prediction of the target label. 
Let $\mcH$ denote a population of human decision-makers and let $\pi$ be a distribution over $\mcH$. For a decision-maker $H\sim\pi$, we denote by $A_H(0)\in \mcA$ and $A_H(1)\in\mcA$ the actions that $H$ would take under no disclosure and disclosure, respectively. These actions may vary across decision-makers even for the same case.
We further assume access to a dataset  $\mathcal{T} =\{(\mbx_i,y_i,a_{Hi}(0),a_{Hi}(1))\}_{i=1}^n,$ sampled from some unknown $P$ over $\mcX\times\mcY\times \mcA\times\mcA$, where $a_{Hi}(0)$ and $a_{Hi}(1)$ are decisions collected for case $i$ under the two disclosure regimes. In particular, each case is assigned to (at least) two distinct decision-makers $H_{0},H_{1}\sim\pi$, with $H_{0}$ operating under $D=0$ and $H_{1}$ operating under $D=1$. Notably, the dataset contains observations from both regimes for each case, but it does not contain both potential actions for the same human-case episode. Thus, our objective is to learn a disclosure policy for a decision-maker drawn from the population $\pi$, rather than a policy for a fixed human.

\section{Learning to Selectively Disclose}
\label{sec:methodology}
The objective of LSD is to determine when a decision-maker should be given access to support information under a budget constraint.
We frame the problem from a decision-theoretic perspective.  Given a non-negative loss function \(\ell:\mcA\times\mcY\to\mathbb{R}^+\) we define the regime-specific conditional risks:
\begin{equation}
\label{eq:risks}
    r_D(\mbx)= 
\E\left[
\ell\bigl(A_H(D),Y\bigr)
\mid \mbX=\mbx
\right],
\qquad D\in{0,1},
\end{equation}
where the expectation is taken over the target $Y$, the decision-maker $H\sim\pi$, and any randomness in the human response, conditional on $\mbX=\mbx$. Thus, $r_D(\mbx)$ measures the expected loss under disclosure regime $D$ for an \emph{average} decision-maker drawn from $\pi$, rather than on any particular individual. Smaller values correspond to more reliable decisions, while larger values indicate greater expected error.
We define the \emph{conditional Value of Information} (VoI) as the expected reduction in decision risk induced by disclosure, namely:
\begin{equation}
\label{eq:voi}
\VoI =  r_0(\mbx) - r_1(\mbx)
\end{equation}
with positive values indicating that revealing $S$ improves decision quality. 

Let $\mathcal{D}$ be the set of measurable disclosure policies $d:\mcX\to\{0,1\}$, where $d(\mbx)=1$ indicates that the support information is disclosed. We denote the population disclosure risk with $R(d) = \E_\mbx[(1-d(\mbx))r_0(\mbx) + d(\mbx)r_1(\mbx)]$, i.e.\ the expected loss incurred when each instance is judged under the regime that $d$ selects for it; in particular, $R(0)$ and $R(1)$ are the no- and full-disclosure baselines, obtained by never and always revealing $S$.
Typically, access to more information is costly, thus we assume there is a budget level $B \in [0, 1]$ that limits the expected disclosure rate. The Learning to Selectively Disclose problem can be formulated as follows:
\begin{equation}
    \min_{d \in \mathcal D} \quad R(d) \qquad \text{s.t.} \quad \mathbb{E}_{\mbx}     [d(\mbx)] \le B
\end{equation}

In the next subsections, we characterize the optimal disclosure policy for the LSD problem, formalize the causal estimation of VoI, provide theoretical guarantees for the corresponding plug-in approaches and then showcase how to estimate them when $\ell$ is the 0-1 loss, a common measure in human-AI collaboration literature~\citep{DBLP:conf/aaai/RuggieriP25}.

\subsection{Optimal Disclosure Policy}
\label{subsec:ODP}
A first question to address is what an optimal policy for LSD looks like. We show that the optimal policy consists of a threshold rule over the VoI, as stated in the following Theorem:
\begin{theorem}[Optimal Disclosure Policy]
\label{thm:optimal_disclosure}
Assume that the cumulative distribution of $\VoI$ is continuous. Let $q_{1-B}$ denote the $(1-B)$-quantile of  $\VoI$ and let $\lambda^* = \max\{0,\, q_{1-B}\}$. For any budget $B \in [0,1]$, the optimal disclosure policy is the threshold rule:
\begin{equation}
d^*(\mbx) =
\begin{cases}
1 & \text{if} \quad \VoI \geq \lambda^*, \\
0 & \text{otherwise}.
\end{cases}
\end{equation}
\end{theorem}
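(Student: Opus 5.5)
We rewrite the objective so that only the disclosed set matters. For any $d\in\mathcal D$,
\[
R(d)=\E_\mbx[r_0(\mbx)]-\E_\mbx\bigl[d(\mbx)\,\VoI\bigr].
\]
The first term does not depend on $d$. Hence the LSD problem is equivalent to maximizing $\E[d(\mbx)\,\VoI]$ subject to $\E[d(\mbx)]\le B$. This is a fractional-knapsack (Neyman--Pearson type) problem, and we will solve it by a pointwise Lagrangian argument.

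\textbf{Feasibility.} First we check that $d^*$ is feasible and record its properties. By continuity of the CDF of $V:=\VoI$, we have $\sP(V\ge q_{1-B})=B$. Since $\lambda^*\ge q_{1-B}$, this gives $\E[d^*]=\sP(V\ge\lambda^*)\le B$. Moreover, whenever $\lambda^*>q_{1-B}$ we have $\lambda^*=0$, because $\lambda^*=\max\{0,q_{1-B}\}$. Hence complementary slackness holds: $\lambda^*\bigl(\E[d^*]-B\bigr)=0$. Indeed, either $\lambda^*=0$, or $\lambda^*=q_{1-B}$ and then $\E[d^*]=B$. The edge cases $B=0$ and $B=1$ use the conventions $q_1=\sup V$ and $q_0=\inf V$. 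There $\sP(V\ge q_1)=0$ by continuity, so the same conclusions hold; we will handle these cases explicitly.

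\textbf{Optimality.} Take any feasible $d$. Since $\lambda^*\ge0$ and $\E[d]\le B$,
\[
\E[d\,V]\le \E[d\,V]+\lambda^*\bigl(B-\E[d]\bigr)=\lambda^*B+\E\bigl[d\,(V-\lambda^*)\bigr].
\]
Pointwise, $d(\mbx)(V-\lambda^*)\le (V-\lambda^*)^+=d^*(\mbx)(V-\lambda^*)$, because $d\in\{0,1\}$ and $d^*$ selects exactly the points where $V-\lambda^*\ge0$. Therefore
\[
\E[dV]\le \lambda^*B+\E[d^*(V-\lambda^*)]=\E[d^*V]+\lambda^*\bigl(B-\E[d^*]\bigr)=\E[d^*V],
\]
where the last equality is complementary slackness. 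It follows that $R(d^*)\le R(d)$.

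\textbf{Main obstacle.} The delicate step is the complementary slackness and feasibility bookkeeping. This is exactly where continuity of the CDF is needed. Without it, $V$ could have an atom at $q_{1-B}$, so that $\sP(V\ge q_{1-B})>B$, and a randomized tie-break would be required. I would make sure the quantile convention ($q_{1-B}=\inf\{t:F(t)\ge 1-B\}$) yields $\sP(V\ge q_{1-B})=B$ under continuity, including the degenerate budgets. I would also note that ties at $V=\lambda^*$ have probability zero (or contribute zero when $\lambda^*=0$). Consequently, the choice of $\ge$ versus $>$ in $d^*$ is immaterial, and $d^*$ is optimal up to null sets.
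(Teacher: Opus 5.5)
Your proof is correct and follows the paper's route. You rewrite $R(d)=\E_\mbx[r_0(\mbx)]-\E_\mbx[d(\mbx)\,\VoI]$, relax the budget with a multiplier, threshold the Lagrangian pointwise, and verify complementary slackness at $\lambda^*=\max\{0,q_{1-B}\}$. The differences are presentational: you check complementary slackness in one step instead of splitting into binding and non-binding cases, and you write out the weak-duality chain that the paper leaves implicit; the equality cases of that chain also give the paper's claim that every optimal policy agrees with $d^*$ almost everywhere.
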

\begin{proof}
    We provide the proof in~\cref{app:proof_th1}. 
\end{proof}
\Cref{thm:optimal_disclosure} shows that selective disclosure admits an intuitive interpretation: support information is revealed only when it is expected to \emph{benefit} the decision-maker, i.e., when the expected risk reduction is positive and exceeds a budget-dependent threshold. 
However, $\VoI$ is not directly observable, since a decision-maker either receives $S$ or does not. As a result, the two risks defining $\VoI$ cannot be jointly observed for the same decision instance. 
Estimating $\VoI$ is therefore naturally  a causal inference problem. %
In the following section, we formalize the corresponding causal framework and state the assumptions under which $\VoI$ is identifiable from observed data.

\subsection{A causal inference interpretation}
\label{subsec:causal}
Since $\VoI$ is never observed for the same human decision-maker, a natural way to handle its estimation is resorting to the Neyman-Rubin potential outcomes framework~\citep{rubin1974estimating}.
However, our framework differs from the canonical treatment-allocation setting in an important aspect. In standard policy learning~\citep{kitagawa2018should}, a treatment is assigned to a unit and directly affects that unit's outcome. In LSD, instead, the intervention acts on a decision-maker: disclosure changes the information available to a human who must make a decision about a separate case. The case outcome $Y$ is not affected by disclosure; what may change is the human action $A_H(D)$ and, consequently, the decision loss $\ell(A_H(D),Y)$. 
Thus, $\ell(A_H(0),Y)$ and $\ell(A_H(1),Y)$ are the two potential outcomes of interest, and the causal effect of our interest is not the effect of an intervention on the case itself, but the effect of disclosing support information on the quality of a human decision. 
More precisely, for a fixed decision-maker $h$, we define the human-specific conditional effect of disclosure on decision loss as  $\tau_h(\mbx) = \mathbb{E}\!\left[ \ell\!\left(A_h(1),Y\right) - \ell\!\left(A_h(0),Y\right) \mid X=\mbx,H=h \right]$. Because our objective is to learn a policy for a future decision-maker drawn from a target population $\pi$, we consider the population-average effect 
$\tau_\pi(\mbx) = \mathbb{E}_{H\sim\pi}\!\left[\tau_H(\mbx)\right]$ and note that 
$-\tau_\pi(\mbx)=\VoI{}.$
Hence, $\VoI{}$ measures the causal effect of disclosing information on the quality of a decision made about a case with covariates $\mbx$, averaged over decision-makers from $\pi$.

Following causal inference practice~\citep{nogueira2022methods}, we now investigate under which assumptions $\VoI{}$ can be retrieved from empirical data. We assume\footnote{For the sake of space, we present these assumptions in detail in~\cref{app:causal}.}: $(A1)$ \textbf{Unconfoundedness}, \ie{} conditional on baseline covariates $\mbX$, the disclosure decision is as good as random with respect to the potential regime-specific risks~\citep{rosenbaum1983central}; 
$(A2)$ \textbf{Positivity}, \ie{} for all covariate profiles in the population, we observe both disclosure regimes with non-zero probability~\citep{10.1214/aos/1176344064}; 
$(A3)$ \textbf{Consistency}, \ie{} if a decision episode is observed under a specific regime, then the observed loss is equal to the corresponding potential loss~\citep{cole2009consistency}; 
$(A4)$ \textbf{Stable Unit Treatment Value Assumption (SUTVA)}, \ie{} for each decision episode, the potential action of the human decision-maker depends only on the information disclosed to that decision-maker in that episode~\citep{rubin1981estimation} and 
$(A5)$ \textbf{Exchangeability of Human Decision-Makers}, \ie{} the decision-makers are drawn independently of the covariates and of the regime. 
Under assumptions \textit{A1-A5}, we can estimate our causal estimand from data:

\begin{proposition}[Identifiability of the Value of Information]
\label{prop:voi_identifiability}
Under Assumptions A1--A5, the regime-specific potential risks are
identifiable from observed data. Let
$
\ell^{\mathrm{obs}}
\bigl(A_{H_D}(D),Y\bigr)
=
(1-D)\ell\bigl(A_{H_0}(0),Y\bigr)
+
D\ell\bigl(A_{H_1}(1),Y\bigr)
$
denote the loss observed in a decision episode, where \(H_D\) is the
decision-maker assigned under regime \(D\). Then, for
\(d\in\{0,1\}\), the Value of Information
\[
\VoI
=
r_0(\mbx)-r_1(\mbx)
=
\mathbb E\!\left[
\ell\bigl(A_H(0),Y\bigr)
-
\ell\bigl(A_H(1),Y\bigr)
\mid \mbX=\mbx
\right]
\]
is identifiable as
\[
\VoI
=
\mathbb E\!\left[
\ell^{\mathrm{obs}}\bigl(A_H(D),Y\bigr)
\mid \mbX=\mbx,D=0
\right]
-
\mathbb E\!\left[
\ell^{\mathrm{obs}}\bigl(A_H(D),Y\bigr)
\mid \mbX=\mbx,D=1
\right].
\]
\end{proposition}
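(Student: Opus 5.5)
We prove the identity one regime at a time: for each $d\in\{0,1\}$ we show $r_d(\mbx)=\E[\ell^{\mathrm{obs}}\mid \mbX=\mbx,D=d]$, and then take the difference of the two cases to get the displayed formula for $\VoI$. Since the risks were defined in \eqref{eq:risks} as the potential risks for a decision-maker $H\sim\pi$, we need to connect three objects: the potential loss of a generic population decision-maker, the potential loss of the decision-maker $H_d$ actually assigned to regime $d$, and the observed loss.

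For a fixed $d$ we use the following chain of equalities. Throughout, Positivity (A2) guarantees $P(D=d\mid \mbX=\mbx)>0$, so every conditional expectation given $\{\mbX=\mbx,D=d\}$ is well defined.
\begin{align*}
\E\bigl[\ell^{\mathrm{obs}}\mid \mbX=\mbx,D=d\bigr]
&=\E\bigl[\ell(A_{H_d}(d),Y)\mid \mbX=\mbx,D=d\bigr]\\
&=\E\bigl[\ell(A_{H_d}(d),Y)\mid \mbX=\mbx\bigr]\\
&=\E\bigl[\ell(A_{H}(d),Y)\mid \mbX=\mbx\bigr]=r_d(\mbx).
\end{align*}
The steps are justified as follows.
\begin{itemize}
\item The first equality is Consistency (A3) together with the definition of $\ell^{\mathrm{obs}}$: on $\{D=d\}$ the observed loss equals the potential loss under regime $d$.
\item SUTVA (A4) is used in the first step as well. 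It ensures the potential action $A_{H_d}(d)$ depends only on what is disclosed to $H_d$ in that episode, not on other episodes or on the regime of the paired decision-maker. This makes the potential outcome well defined as a function of $(H_d,d,\text{case})$.
\item The second equality is Unconfoundedness (A1): given $\mbX$, the regime is independent of the potential losses, so we may drop the conditioning on $D=d$.
\item The third equality is Exchangeability (A5): $H_d$ is drawn from $\pi$ independently of $\mbX$ and $D$. Hence the conditional joint law of $(H_d,Y,\text{response noise})$ given $\mbX=\mbx$ matches that of $(H,Y,\text{noise})$ with $H\sim\pi$. Formally, condition on $H_d=h$ (tower property), use independence to write the inner expectation as $\E[\ell(A_h(d),Y)\mid \mbX=\mbx,H=h]$, and integrate against $\pi$.
\end{itemize}
Subtracting the case $d=1$ from the case $d=0$, and using linearity of conditional expectation, gives the claimed expression for $\VoI$. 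The same computation also recovers $\tau_\pi(\mbx)=-\VoI$.

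We expect the main obstacle to be the exchangeability step, that is, making precise that replacing the specific assigned decision-maker $H_d$ by a generic $H\sim\pi$ leaves the conditional risk unchanged. The subtlety is that A1 must be applied jointly with the randomness in $H_d$. We would handle this by stating A1 in the form $\{\ell(A_{H_d}(d),Y)\}_{d}\perp D\mid \mbX$, where the potential loss is indexed by the episode-level decision-maker, and A5 in the form $H_d\perp(\mbX,Y,D)$ with $H_d\sim\pi$. We would also require the human response noise to be independent of $D$ given $(\mbX,H)$. With these precise forms, each equality reduces to a single application of the tower property. The remaining ingredient is integrability of $\ell$, which we assume so that all expectations are finite.
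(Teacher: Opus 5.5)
Your argument is correct and is essentially the paper's proof: the same chain of equalities, justified by A2, by A3 together with A4, and by A1 and A5. The only difference is the order of the last two steps. The paper invokes A5 first, replacing $H_d$ by a generic $H\sim\pi$ while still conditioning on $\{\mbX=\mbx, D=d\}$, which is exactly what A5 (stated as $H_d\mid \mbX=\mbx, D=d\sim\pi$) provides; only then does it drop $D=d$ using A1 in its stated generic-$H$ form, so it never needs the episode-level restatement of A1 that you identify as the main obstacle.
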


\begin{proof}
    We provide the proof in~\cref{proof:voi_identifiability}
\end{proof}

\Cref{prop:voi_identifiability} reduces the $\VoI$ to two identifiable, regime-specific conditional risks. Estimating it becomes a standard CATE estimation problem, for which any estimator from the causal inference literature can be used. The simplest one (T-learner~\cite{kunzel2019metalearners}) fits a risk model per regime, $\hat r_0$ and $\hat r_1$, and sets $\hVoI = \hat r_0(\mbx) - \hat r_1(\mbx)$. Still, accurate prediction of $\VoI$, is not sufficient for effective disclosure: the policy depends on the sign~\citep{DBLP:conf/nips/FrauenMSSF25} and ranking~\citep{DBLP:journals/corr/abs-2602-03517} of $\hVoI$, and estimation errors can cause the policy to disclose information where it is harmful. In the next subsection, we quantify how such errors affect the learned policy's performance.

\subsection{Guarantees for the plug-in policy}
\label{subsec:guarantees}

Since $\VoI$ is identifiable from data collected under the two regimes (\cref{prop:voi_identifiability}), one can threshold an estimate $\hVoI$ in its place, 
obtaining what we call a \emph{plug-in} policy. 
Let us denote with $\hat d(\mbx) = \mathbbm{1}\{\hVoI \geq \hat\lambda\}$ the plug-in disclosure policy, with threshold $\hat\lambda \ge 0$ for a given budget $B$, with $\hat B = \E_\mbx[\hat d(\mbx)]$ its realized disclosure rate and with $\varepsilon(\mbx) = \hVoI - \VoI$ the estimation error of $\hVoI$. Thresholding $\hVoI$ raises two questions: whether the resulting policy can be worse than no-disclosure and how far it can fall short of the optimal policy.

First, we address whether deploying the plug-in policy can achieve higher risk than simply never disclosing. We stress that this is possible in principle because disclosure can be harmful on average ($\E_\mbx[\VoI] < 0$) and a policy that mis-ranks instances can spend its budget where disclosure is harmful.
In the following, we quantify how large this degradation can be:
\begin{theorem}[No-disclosure degradation]
\label{thm:degradation} Given a non-negative loss function \(\ell:\mcA\times\mcY\to\mathbb{R}^+\), for any estimator $\hVoI$ such that $\|\varepsilon\|_{L^2(p)} < \infty$ and any $\hat\lambda \geq 0$, the following holds:
\begin{equation}
  R(\hat d) - R(0) \;\le\; \sqrt{\hat B}\,\|\varepsilon\|_{L^2(p)} \;-\; \hat\lambda \hat B .
  \label{eq:safety}
\end{equation}
\end{theorem}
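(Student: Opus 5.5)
We first express the gap with respect to the no-disclosure baseline in terms of the Value of Information. By the definition of $R(d)$,
\[
R(\hat d) - R(0) = \E_\mbx\bigl[\hat d(\mbx)\,(r_1(\mbx)-r_0(\mbx))\bigr] = -\E_\mbx\bigl[\hat d(\mbx)\,\VoI\bigr].
\]
This identity needs no assumption beyond integrability of the risks, which follows from the non-negativity of $\ell$ together with the finiteness of the risks used throughout. The key step is then to substitute $\VoI = \hVoI - \varepsilon(\mbx)$, which gives
\[
R(\hat d) - R(0) = -\E_\mbx\bigl[\hat d(\mbx)\,\hVoI\bigr] + \E_\mbx\bigl[\hat d(\mbx)\,\varepsilon(\mbx)\bigr].
\]

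We bound the two terms separately. For the first term, we use the structure of the plug-in rule. On the event $\{\hat d(\mbx)=1\}$ we have $\hVoI \ge \hat\lambda$, so $\hat d(\mbx)\,\hVoI \ge \hat\lambda\,\hat d(\mbx)$ pointwise. Taking expectations gives $-\E_\mbx[\hat d\,\hVoI] \le -\hat\lambda \hat B$, since $\hat B = \E_\mbx[\hat d(\mbx)]$ by definition.

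For the second term, we apply Cauchy--Schwarz in $L^2(p)$:
\[
\E_\mbx[\hat d\,\varepsilon] \le \|\hat d\|_{L^2(p)}\,\|\varepsilon\|_{L^2(p)}.
\]
Because $\hat d$ takes values in $\{0,1\}$, we have $\hat d^2 = \hat d$, hence $\|\hat d\|_{L^2(p)} = \sqrt{\E_\mbx[\hat d]} = \sqrt{\hat B}$. Adding the two bounds yields \eqref{eq:safety}.

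The main point needing care is the integrability bookkeeping. We must ensure that $\E_\mbx[\hat d\,\hVoI]$ is finite so that the splitting is legitimate. This holds because $\hat d\,\hVoI = \hat d\,\VoI + \hat d\,\varepsilon$: the first piece is integrable since the risks are, and the second is integrable because $\varepsilon \in L^2(p) \subset L^1(p)$ on a probability space. The remaining steps are mechanical. Note also that the hypothesis $\hat\lambda \ge 0$ is not actually needed for the inequality itself; it only makes the subtracted term non-negative, so the bound reads as a penalty reduction.
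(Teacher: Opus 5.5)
Your proof is correct and follows the paper's argument. Both use the identity $R(\hat d)-R(0)=-\E_\mbx[\hat d(\mbx)\,\VoI]$, the substitution $\VoI=\hVoI-\varepsilon(\mbx)$, Cauchy--Schwarz with $\hat d^2=\hat d$ on the error term, and the pointwise bound $\hat d(\mbx)\,\hVoI\ge\hat\lambda\,\hat d(\mbx)$. The only difference is cosmetic: the paper pairs $\hat d$ with $\hat d\,\varepsilon$, which keeps the sharper intermediate factor $\bigl(\E_\mbx[\hat d(\mbx)\,\varepsilon(\mbx)^2]\bigr)^{1/2}$ before relaxing it to $\|\varepsilon\|_{L^2(p)}$. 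Your remark that $\hat\lambda\ge 0$ is not needed for the pointwise bound is accurate.
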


\begin{proof}
    We provide the proof in~\cref{app:proof_noDisc}. 
\end{proof}

The bound is governed by the $\VoI$ estimation error and by the region on which the policy intervenes: for bounded $\|\varepsilon\|_{L^2(p)}$, it converges to zero as the realized disclosure rate $\hat B$ tends to zero but it is positive whenever $\|\varepsilon\|_{L^2(p)} > \hat\lambda\sqrt{\hat B}$. Hence, the plug-in policy can perform worse than never disclosing, but only by a bounded amount (see~\cref{app-par:degradation}).

We now present our \emph{Oracle regret bound}. This addresses how much is lost, relative to the optimal policy, by thresholding an estimate rather than the true $\VoI$. Here we additionally require the plug-in rule to use the threshold that~\cref{thm:optimal_disclosure} prescribes for $\hVoI$, i.e., the rule that is optimal according to its own estimate of the risks: 
\begin{theorem}[Oracle regret]
\label{thm:regret}
Given a non-negative loss function \(\ell:\mcA\times\mcY\to\mathbb{R}^+\), let $d^\ast$ be the optimal policy of~\cref{thm:optimal_disclosure} and $B^\ast = \E_\mbx[d^\ast(\mbx)]$. Assume that the cumulative distribution of $\hVoI$ is continuous, let $\hat q_{1-B}$ be its $(1-B)$-quantile and $\hat\lambda = \max\{0,\,\hat q_{1-B}\}$. Then, it holds that
\begin{equation}
  R(\hat d) - R(d^\ast) \;\le\; \sqrt{B^\ast + \hat B}\,\|\varepsilon\|_{L^2(p)} .
  \label{eq:oracle_regret}
\end{equation}
\end{theorem}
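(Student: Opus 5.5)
The plan is to write the regret as an integral of $\VoI$ over the region where the two policies disagree, and then exploit the fact that $\hat d$ is optimal for the estimated VoI. Write $v(\mbx)=\VoI$ and $\hat v(\mbx)=\hVoI$. Since $R(d)=\E_\mbx[r_0(\mbx)]-\E_\mbx[d(\mbx)v(\mbx)]$, we get
\[
R(\hat d)-R(d^\ast)=\E_\mbx\bigl[(d^\ast(\mbx)-\hat d(\mbx))\,v(\mbx)\bigr].
\]
Substituting $v=\hat v-\varepsilon$ splits this into
\[
\E_\mbx\bigl[(d^\ast-\hat d)\hat v\bigr]+\E_\mbx\bigl[(\hat d-d^\ast)\varepsilon\bigr].
\]

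\textbf{First term.} We show it is nonpositive. By construction, $\hat d$ is the threshold rule of \cref{thm:optimal_disclosure} applied to $\hat v$, and $\hat v$ has a continuous distribution. Hence $\hat d$ maximizes $\E_\mbx[d\,\hat v]$ over all policies $d$ with $\E_\mbx[d]\le B$. The policy $d^\ast$ is feasible for this problem, since $B^\ast\le B$ by \cref{thm:optimal_disclosure}. Therefore $\E_\mbx[d^\ast\hat v]\le\E_\mbx[\hat d\,\hat v]$.

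Formally, this applies \cref{thm:optimal_disclosure} to a surrogate problem with risks $\hat r_0,\hat r_1$ whose difference is $\hat v$. To avoid depending on a particular construction, I would instead re-derive the short exchange argument directly. Points with $\hat d=1$ have $\hat v\ge\hat\lambda\ge0$, and points with $\hat d=0$ have $\hat v<\hat\lambda$. This gives
\[
(d^\ast-\hat d)\hat v\le(d^\ast-\hat d)\hat\lambda
\]
pointwise. Taking expectations yields $\hat\lambda(B^\ast-\hat B)$. This quantity is $\le0$ in both cases:
\begin{itemize}
\item if $\hat\lambda>0$, then $\hat B=B\ge B^\ast$;
\item if $\hat\lambda=0$, the bound is trivially zero.
\end{itemize}

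\textbf{Second term.} Since $\hat d-d^\ast\in\{-1,0,1\}$, we have $|\hat d-d^\ast|\le \hat d+d^\ast$. Cauchy–Schwarz then gives
\[
\E_\mbx\bigl[|\hat d-d^\ast|\,|\varepsilon|\bigr]\le\sqrt{\E_\mbx[(\hat d-d^\ast)^2]}\;\|\varepsilon\|_{L^2(p)}.
\]
Because the policies are $\{0,1\}$-valued, $(\hat d-d^\ast)^2=|\hat d-d^\ast|\le\hat d+d^\ast$. Hence the square root is at most $\sqrt{\hat B+B^\ast}$, which is the claimed bound.

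\textbf{Main obstacle.} The delicate step is the first term: verifying that $\hat d$ dominates $d^\ast$ under $\hat v$ when the budget is not saturated ($\hat\lambda=0$) versus when it is ($\hat\lambda=\hat q_{1-B}>0$). The continuity of the distribution of $\hat v$ is what gives $\hat B=B$ exactly in the saturated case, so I would handle those two cases explicitly.
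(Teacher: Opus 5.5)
Your proposal is correct and follows the paper's proof. It uses the same decomposition $\E_\mbx[(d^\ast-\hat d)\hVoI]-\E_\mbx[(d^\ast-\hat d)\varepsilon]$, the same Cauchy--Schwarz step on the disagreement indicator $|d^\ast-\hat d|$, and the same relaxation $|d^\ast-\hat d|\le d^\ast+\hat d$. The only difference is that you show the first term is non-positive inline, via the pointwise bound $(d^\ast-\hat d)\hVoI\le(d^\ast-\hat d)\hat\lambda$ and a case split on $\hat\lambda$, whereas the paper cites the proof of \cref{thm:optimal_disclosure} applied to $\hVoI$; your version is slightly more self-contained.
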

\begin{proof}
    We provide the proof in~\cref{app:proof_guarantees}.
\end{proof}
The Oracle-regret bound of~\cref{eq:oracle_regret} shows that the plug-in rule is $O(\|\varepsilon\|_{L^2(p)})$-competitive with an oracle knowing the true risks. Such a bound scales with an upper bound on the mass of the disagreement region, $\E_\mbx[|d^\ast(\mbx)-\hat d(\mbx)|]\leq B^\ast+\hat B$. The bound can therefore be loose when the two policies agree away from the threshold: estimation errors on instances that both policies disclose, or both withhold, do not affect their relative performance.

\subsection{Estimating the Value of Information for 0-1 loss} 
\label{subsec:methodology-estimating} 
As shown in~\cref{thm:optimal_disclosure}, learning disclosure policies requires estimating the regime-specific risks \(r_0(\mbx)\) and \(r_1(\mbx)\) and then thresholding their difference. 
In what follows, we focus on the 0-1 loss, which is widely used in human-AI collaboration to model the error incurred by the human decision maker~\citep{DBLP:conf/nips/MadrasPZ18,DBLP:conf/aaai/RuggieriP25}. 
The $0-1$ loss is defined as:
\begin{equation}
\label{eq:zero-one}
    \ell(A(D), Y) = \mathbbm{1}\{A(D)\neq Y\},
\end{equation}
Under this loss choice, the expected risk becomes:
\begin{equation}
\label{eq:0-1risk}
    r_D(\mbx)= \E\!\left[\mathbbm{1}\{A(D)\neq Y\} \right], \qquad D \in \{0,1\}
\end{equation}
Thus, estimating ~\cref{eq:0-1risk} reduces to a standard classification task, where the goal is to predict whether the human makes mistakes, as a T-learner would do with a single scalar risk model per regime~\citep{kunzel2019metalearners}. However, humans do not err uniformly across classes: some classes are systematically confused with others, and disclosure alters these error patterns differently for each class. A scalar risk model condenses all these patterns into a single error probability, blurring the class-specific effects on which the VoI depends (see \cref{app:q1}). We therefore decompose the regime-specific risk class-wise, relying on the following factorization of the $0-1$ risk:

\begin{proposition}[Class-wise decomposition]
    \label{prop:class-wise}
    Let us consider $\ell(A(D),Y)=\mathbbm{1}\{A(D)\neq Y\}$. Then $r_{D}(\mbx)$ can be written as:
    \begin{equation}
         r_D(\mbx)=\sum_{y\in\mcY}
    \underbrace{\sP(Y = y \mid \mbX=\mbx)}_{p_y}\,
    \underbrace{\sP(A(D)\neq y \mid \mbX=\mbx, Y = y)}_{q_{D,y}}.
    \end{equation}
\end{proposition}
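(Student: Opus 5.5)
The plan is to derive the decomposition directly from the law of total expectation. We condition additionally on the target $Y$, which takes values in the finite set $\mcY$. Start from the definition in \cref{eq:risks} specialised to the $0$-$1$ loss of \cref{eq:zero-one}:
\[
r_D(\mbx)=\E\!\left[\mathbbm{1}\{A_H(D)\neq Y\}\mid \mbX=\mbx\right].
\]
Here the expectation integrates over $Y$, over $H\sim\pi$, and over any internal randomness of the human response. The first step is to apply the tower property with respect to the finer $\sigma$-algebra generated by $(\mbX,Y)$:
\[
r_D(\mbx)=\sum_{y\in\mcY}\sP(Y=y\mid\mbX=\mbx)\,\E\!\left[\mathbbm{1}\{A_H(D)\neq Y\}\mid \mbX=\mbx,Y=y\right].
\]
This is legitimate because $\mcY$ is finite and the integrand is bounded in $[0,1]$, so no integrability issues arise.

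The second step is to simplify the inner conditional expectation. On the event $\{Y=y\}$, the indicator $\mathbbm{1}\{A_H(D)\neq Y\}$ coincides with $\mathbbm{1}\{A_H(D)\neq y\}$. Its conditional expectation is therefore the conditional probability $\sP(A_H(D)\neq y\mid \mbX=\mbx, Y=y)$. This probability still marginalises over $H\sim\pi$ and response noise, and it is precisely $q_{D,y}$; the prefactor is $p_y$. Substituting gives the claimed sum $\sum_y p_y\,q_{D,y}$.

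The only point that needs care is the treatment of terms with $p_y=0$, where the conditional probability $q_{D,y}$ is not uniquely defined. I would handle this by adopting any version of the conditional probability, or simply the convention $0\cdot q_{D,y}=0$. Such terms contribute nothing to the sum, so the identity holds for $P_{\mbX}$-almost every $\mbx$ regardless of the version chosen. Beyond this, the argument is routine and uses no earlier result of the paper apart from the definition of $r_D$.

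The same argument applies to each regime $D\in\{0,1\}$ separately. No causal assumption from \cref{prop:voi_identifiability} is needed here, because the statement concerns potential-outcome risks rather than their observational counterparts.
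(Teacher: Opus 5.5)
Your proof is correct and follows essentially the same route as the paper: the paper first rewrites $r_D(\mbx)$ as $\sP(A(D)\neq Y\mid \mbX=\mbx)$, then splits over the values of $Y$ by total probability and factorizes each joint term with the chain rule, which is your tower-property step carried out at the level of probabilities. Your remark on the terms with $p_y=0$ is a harmless refinement that the paper leaves implicit.
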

\begin{proof}
    We provide the proof in~\cref{proof:Class-wise}.
\end{proof}
This decomposition suggests a separate estimation strategy for both $p_y$ and $q_{D,y}$.
More precisely, we approximate $p_y$ using a probabilistic classifier
\(
f : \mcX \to \Delta^{|\mcY|},
\)
and $q_{D,y}$ via a multi-head architecture with $|\mathcal{Y}|$ heads and a shared backbone, where any $y$-th head is a binary classifier
\(
g_{D,y}: \mathcal X \rightarrow [0,1]
\).
The resulting risk estimator is then \(
\widehat r_D(\mbx)
=
\sum_{y\in\mcY}
 f_y(\mbx) \cdot  g_{D,y}(\mbx),
\)
where $f_y$ refers to the $y-$th output of the probabilistic classifier.
The estimated VoI is obtained by plugging $\widehat r_D(\mbx)$ into~\cref{eq:voi}.
We refer to this estimator as \ClasswiseRisk{}, which specializes the T-learner to a structured class-wise risk model.

\section{Experimental Evaluation}
\label{sec:exp}

In this section\footnote{Code is available at the \href{https://anonymous.4open.science/r/submissionICLR2027-4163}{anomymous repo.} 
}, we address the following research questions: 
\begin{itemize}
  \item[\textbf{Q1}] 
  Does \ClasswiseRisk{} learn effective policies?

  \item[\textbf{Q2}] 
  Does the \ClasswiseRisk{} policy improve human-AI team performance?
  \item[\textbf{Q3}] 
  How do participants interact with the \ClasswiseRisk{} policy?
\end{itemize}

\subsection{Experimental settings}
\label{subsec:exp_setting}

\textbf{Datasets.} We consider both synthetic and real data.
We generate $10000$ synthetic data samples using the standard \texttt{make\_classification} function of \texttt{sklearn} library, considering both a binary ($|\mathcal{Y}| = 2$) and a multiclass task ($|\mathcal{Y}| = 4$). We emulate selective disclosure by treating a subset of features as baseline information and the remaining ones as support information. Then, we simulate human decisions in the two regimes with logistic regression models. 
We also employ two different real datasets: \Email{}~\citep{DBLP:journals/corr/abs-2511-21448} and \Image{}~\citep{steyvers2022bayesian}.  
(i) \Email{} contains $1000$ emails that can be classified as either legitimate or fraudulent. For this dataset, we take human predictions from~\citep{bogani2026conceptbottleneckmodelseffective}, 
who provide for each email a human prediction made either without any assistance ($D = 0$) or with the support of an ML model ($D = 1$). 
(ii) \Image{} contains $1200$ noisy images that must be assigned to one of $16$ categories. Human decisions for this dataset come from two behavioural experiments: the ones with no assistance come from~\citep{steyvers2022bayesian} ($D = 0$), while predictions with support information come from~\citep{DBLP:conf/icml/StraitouriR24} ($D=1$), where participants make their decision after observing a set of conformal predictions ($\alpha=.05$)\footnote{A detailed analysis of the effect of the miscoverage level $\alpha$ is provided in~\cref{app:alpha}.}. We provide further details in~\cref{app:impl_details}.

\textbf{Methods.} For \textbf{Q1}, we evaluate our approach (\ClasswiseRisk{}) against the following baselines: $(i)$ an uncertainty-based policy (\Confidence{}) that thresholds $u(\mbx) = 1 - \max_{y} f(\mbx)_y$ instead of $\hVoI$, at the same $1-B$ quantile~\citep{DBLP:journals/dmlr/PugnanaPDR24}; $(ii)$ a policy that randomly assigns disclosure for each instance (\Random{}); $(iii)$ a full disclosure baseline (\hone); $(iv)$ and a no-disclosure baseline (\hzero). 
For \textbf{Q2} and \textbf{Q3}, we compare \ClasswiseRisk{} against human participants who decide by themselves, item by item, whether to request the support information.

\textbf{Metrics and Evaluation.} 
For both \textbf{Q1} and \textbf{Q2} we evaluate human decision performance using test-set accuracy ($Acc$). For \textbf{Q3} we also measure participants' advice adherence ($Adh$), \ie{} the proportion of trials in which support information is available and the participant's classification matches such information (this amounts to selecting the ML-suggested label for \Email{} and any class included in the prediction set for \Image{}). We evaluate \textit{Adherence} only on policy-selected items, ensuring that comparisons between MP and HP are based on the same set of items.
User studies data are analyzed with logistic mixed-effects models with random intercepts for participants and items (see \cref{app:q2} and \cref{app:q3} for full analyses details).

\textbf{Q1 setup.} For \textbf{Q1}, $(i)$ we perform a $70/10/20$ split of the data into training, calibration, and test sets; $(ii)$ we carve out a validation split from the training data ($10\%$) and select hyperparameters by grid search\footnote{Implementation details are provided in~\cref{app:impl_details} (\Synth{}~\cref{app:impl_details_synth}, \Email{}~\cref{app:impl_details_email}, \Image{}~\cref{app:impl_details_cp}).}; 
$(iii)$ we calibrate the disclosure threshold as the $1-B$ quantile of $\hVoI$ on the calibration set for each $B\in\{.1,.2,.3,.4,.5,.6,.7,.8,.9\}$ and $(iv)$ we report the human accuracy on the test set, averaged over five seeds that vary model initialization\footnote{We do not resample the split across seeds, as the user studies require a single, fixed test set.}.

\textbf{User studies.}
To answer \textbf{Q2} and \textbf{Q3}, we run two user studies (545 total participants) in which each participant classifies $20$ items sampled from the test set of one of the two real datasets, \ie{} \Email{} or \Image{}. 
We manipulate two between-subjects variables, with two conditions each: \emph{(i) Support} is either \emph{Human-Policy} (HP), where participants decide themselves when to request the support information, (i.e., the ML's model prediction, and specifically the predicted label for emails and the conformal prediction set at $\alpha=.05$ for images), 
or \emph{Machine-Policy} (MP), where our \ClasswiseRisk{} policy decides when to provide it; \emph{(ii) Budget} is either \emph{Low} (LB) or \emph{High} (HB), making the support information available for at most $6$ or $14$ of the $20$ items ($30\%$ and $70\%$). Participants in HP are told they need not exhaust their budget and are incentivized to request information only when needed, so that their policy is broadly comparable to the learned one. 
The $20$-item samples are stratified so that $5/9$ emails and $6/14$ images come from the items the policy would select for disclosure under LB/HB, matching its realized disclosure rates in the original test sets, and all four conditions draw from the same pool\footnote{Realized rates need not match $B$, since the threshold is calibrated on a held-out split and the policy never discloses when $\hVoI\le 0$; they also vary across runs, more so on \Email{} (see~\cref{app:hum_exp}).}.
Similarly to Scenario~1 in~\citep{DBLP:conf/aistats/PalombaPAR25}, the ML model's prediction is retrospectively available for every case. We therefore report, as an exploratory analysis, a counterfactual benchmark ({MPCFT}), obtained by $(i)$ replacing MP participants' responses with the underlying ML model prediction whenever information is disclosed and $(ii)$ retaining their observed responses otherwise. Thus, {MPCFT} represents the accuracy that would have been achieved had the ML prediction been enforced as the final decision whenever it was disclosed. See \cref{app:hum_exp} for further details on the user studies.

\subsection{Experimental Results}
\label{subsec:results}
\textbf{Q1: \ClasswiseRisk{} is effective on all datasets and outperforms simple baselines.}~\cref{fig:results} reports the results for our experiments on all four datasets.
For \Synth{} we can see that \ClasswiseRisk{} dominates the \Confidence{} baseline and \Random{} at every budget level.
Interestingly, \ClasswiseRisk{} already exceeds \hone{} accuracy at $B=.30$ in the \SynthB{} ($Acc\approx0.83$) and at $B=.50$ in the \SynthM{} ($Acc\approx0.63$), suggesting that disclosing information for only a small fraction of instances suffices to recover the performance of full disclosure.

For \Email{}, disclosure is harmful on average: \hzero{} outperforms \hone{} ($Acc\approx.79$ vs $Acc\approx.76$). \ClasswiseRisk{} is the best-performing policy for all $B$, peaking at $B=.30$ ($Acc\approx.81$).  
Beyond this point, the policy increasingly discloses on instances where it wrongly estimates support to be beneficial, so accuracy decreases and plateaus slightly below \hzero{}, which however remains within its confidence band. As anticipated by \cref{thm:degradation}, the plug-in policy can thus perform worse than never disclosing, although only by a bounded amount.
Unlike \ClasswiseRisk{}, the \Confidence{} baseline never abstains: its accuracy decreases monotonically with the budget, never exceeds \hzero{}, and collapses into \hone{} at $B=1$. The reason is that uncertainty measures how hard the \emph{case} is, not how much disclosure \emph{helps}: there is no mechanism to abstain when disclosure hurts.

For \Image{}, we observe that on average disclosing support information is helpful: \hone{} achieves $Acc\approx.85$ while \hzero{} $Acc\approx .77$. Interestingly, \ClasswiseRisk{} surpasses the full disclosure baseline already at $B=.70$ and achieves the highest accuracy of $\approx.86$ at $B=.90$. Moreover, we observe that \ClasswiseRisk{} is always different from \Random{}, suggesting that the policy can learn when support information is useful. \Confidence{} is a stronger baseline here, but it only matches \ClasswiseRisk{} at $B \geq .80$, where nearly every instance is disclosed and little is left to choose. Whenever the budget is binding, \ClasswiseRisk{} is always ahead.
\begin{figure}[t]
    \centering
    \includegraphics[width=\linewidth]{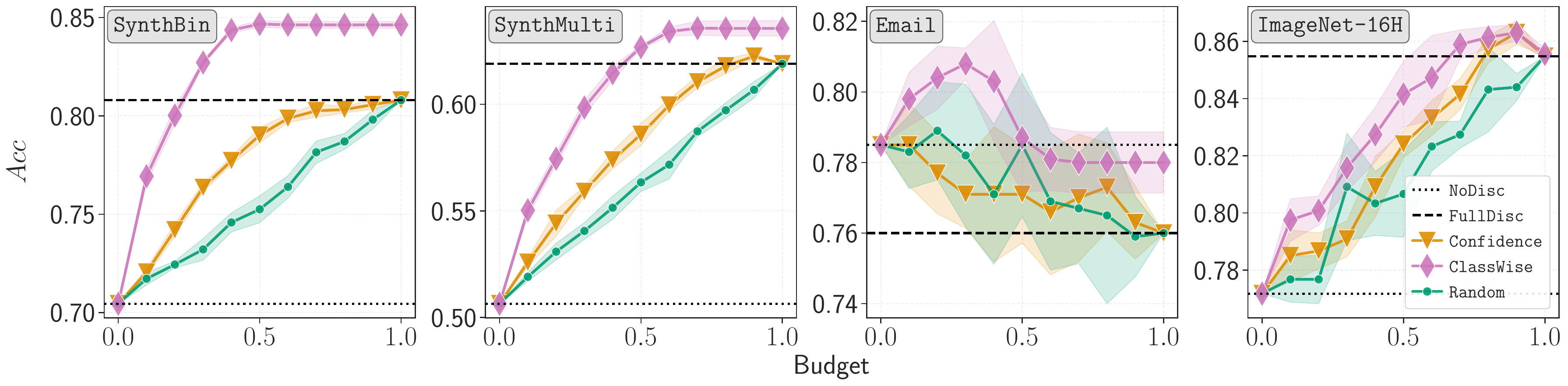}
    \caption{Budget-accuracy curves for the four considered datasets. We report average mean values and $95\%$ confidence intervals over five seeds.}
    \label{fig:results}
\end{figure}

Moreover, we ablate the risk estimator considering two variants that do not exploit the classwise decomposition of~\cref{prop:class-wise}, i.e., $(a)$ an approach that directly estimates both $r_0$ and $r_1$ separately (\TLearner{}) and $(b)$ a joint estimator for both $r_0$ and $r_1$ (\SLearner{}). 
\ClasswiseRisk{} outperforms both variants at every budget on \Synth{} data, and attains the highest peak accuracy on the real datasets. The gap is largest on \Email{}, where disclosure is harmful: \SLearner{} shrinks $\hVoI$ towards zero, estimates a positive VoI almost everywhere and thus never abstains, collapsing to \hone{} at high budgets, whereas \ClasswiseRisk{} withholds support information on a larger fraction of the instances than \TLearner{}. Full ablation and budget analyses are in \cref{app:q1}.

\textbf{Q2: The learned policy matches or outperforms self-selection.} 
We report the two user-study results on human accuracy ($Acc$) in~\cref{fig:q2}.
When considering \textit{Support}, in the \Image{} task accuracy is significantly higher in MP than in HP (\MP{}: $0.80 \pm 0.11$ vs \HP{}: $0.77 \pm 0.13$, $p = .025$). Instead, in the \Email{} task MP and HP are not significantly different (\MP{}: $0.82 \pm 0.12$ vs \HP{}: $0.83 \pm 0.13$, $p = .762$). %
When considering \textit{Budget}, in the \Image{} task accuracy is significantly higher in HB than in LB ($0.83 \pm 0.10$ vs $0.74 \pm 0.13, p < .001$), whereas the two conditions do not significantly differ in the \Email{} task ($0.84 \pm 0.12$ vs $0.81 \pm 0.12, p = .056$).
The advice-adherence results reported next may help account for why MP outperforms HP in the \Image{} but not in the \Email{} task. \Cref{app:q2} reports full results, plus analyses of advice correctness, human confidence, and alignment between \ClasswiseRisk{} and human disclosure.

\begin{figure}[t]
\centering
\begin{subfigure}[t]{0.49\linewidth}
    \centering
    \includegraphics[width=\linewidth]{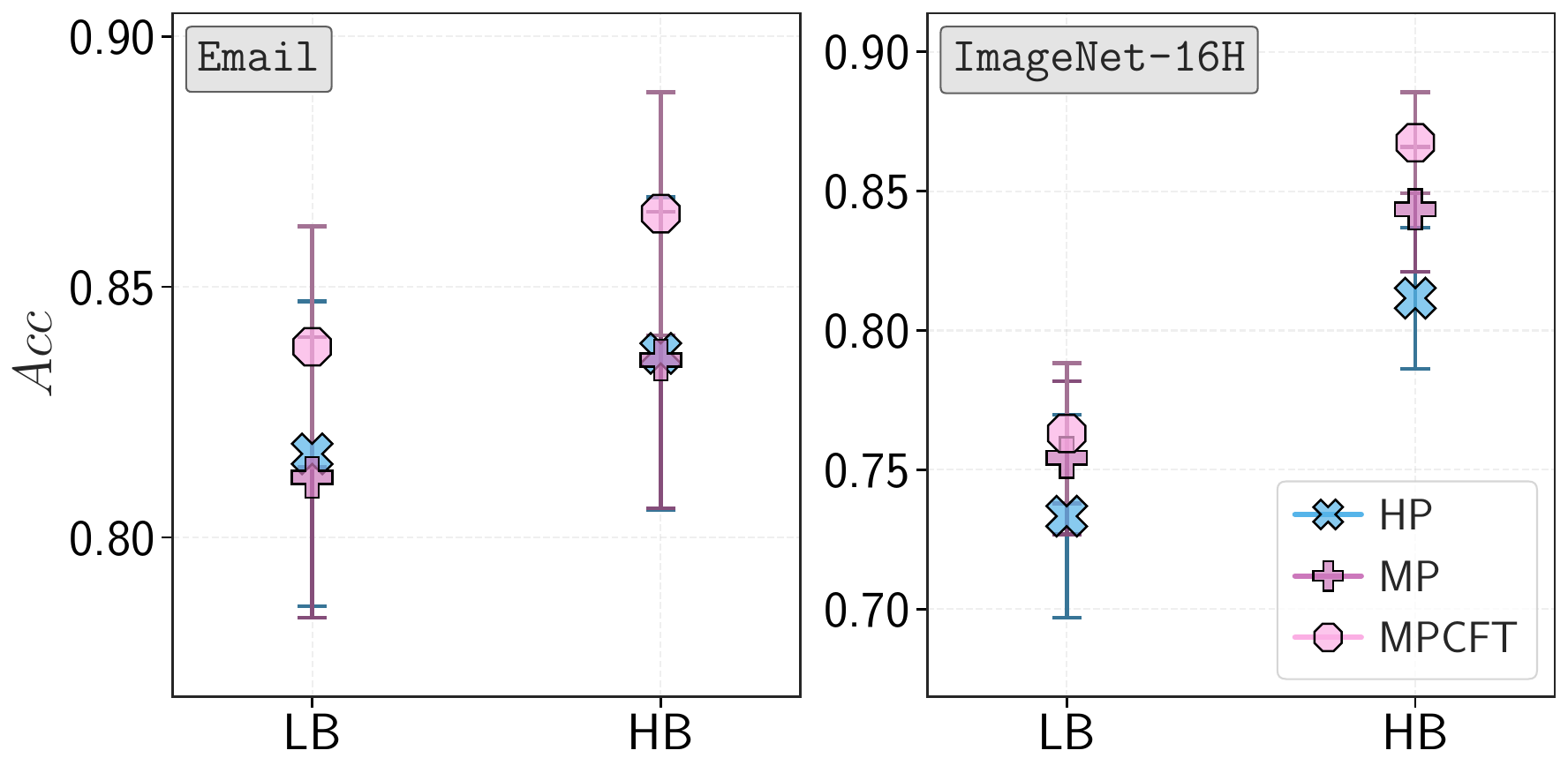}
    \caption{Accuracy}
    \label{fig:q2}
\end{subfigure}
\hfill
\begin{subfigure}[t]{0.49\linewidth}
    \centering
    \includegraphics[width=\linewidth]{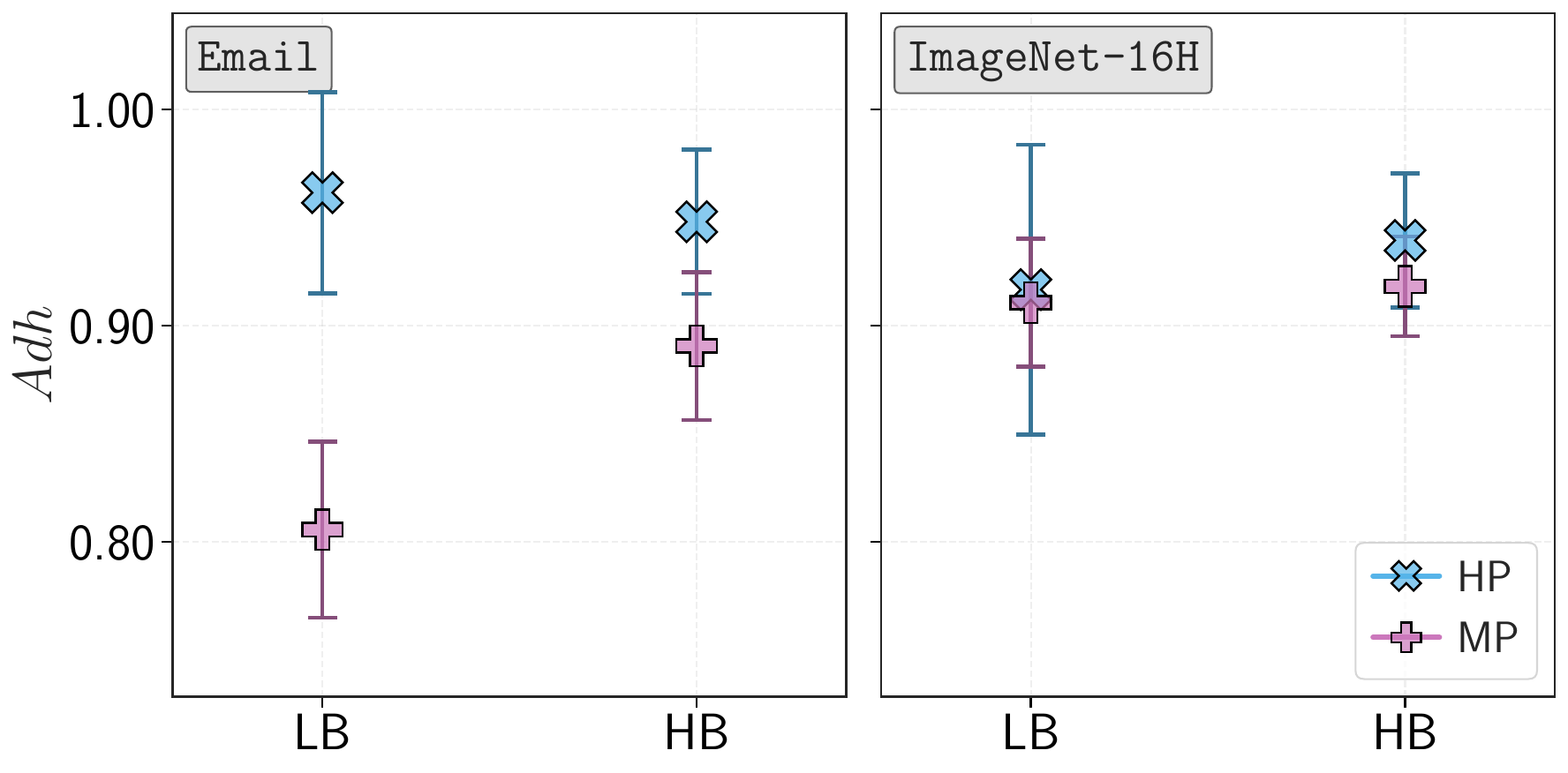}
    \caption{ Adherence}
    \label{fig:q3}
\end{subfigure}
\caption{User studies results (\textbf{Q2} and \textbf{Q3}). (a) Participants' accuracy ($Acc$) and (b) adherence ($Adh$) on the AI advice, by \textit{Budget} (LB/HB) and \textit{Support} (HP/MP/MPCFT) condition, on \Email{} (left) and \Image{} (right). Error bars are $95\%$ confidence intervals across participants.}
\label{fig:human-exp}
\end{figure}

\textbf{Q3: The learned policy is more beneficial whenever humans follow the advice.} 
We report results on advice-adherence ($Adh$) in~\cref{fig:q3}. When considering \textit{Support}, in the \Image{} task adherence is not significantly different between MP and HP (\MP{}: $0.91 \pm 0.11$ vs \HP{}: $0.93 \pm 0.18$, $p = .084$), while in the \Email{} task it is significantly lower in MP than in HP (\MP{}: $0.85 \pm 0.16$ vs \HP{}: $0.96 \pm 0.15$, $p < .001$).
When considering \textit{Budget}, in both studies adherence does not differ significantly between HB and LB (\Image{}: $p = .260$; \Email{}: $p = .636$), not even in interaction with \textit{Support} (\Image{}: $p = .392$; \Email{}: $p = .126$).

Accuracy in MPCFT (\cref{fig:q2}) suggests that lower advice-adherence may attenuate the benefits of selective disclosure. In the \Image{} task, where adherence under MP is already high, accuracy in MPCFT is descriptively but not significantly higher than in MP (\MPFull{}: $0.82 \pm 0.10$ vs. \MP{}: $0.80 \pm 0.11$; $p = .058$), while it is significantly higher than in HP (\HP{}: $0.77 \pm 0.13$; $p < .001$). In \Email{}, where adherence under MP is lower, MPCFT accuracy is significantly higher than in MP (\MPFull{}: $0.85 \pm 0.10$ vs. \MP{}: $0.82 \pm 0.12$; $p = .004$) and descriptively, but not significantly, higher than in HP (\HP{}: $0.83 \pm 0.13$; $p = .196$). This cross-task pattern suggests that lower adherence may partly contribute to the absence of an MP advantage in the \Email{} task, as would be expected given that the disclosed ML advice is correct on most trials. Additional results are reported in~\cref{app:q3}.

Overall, these results suggest that deploying selective disclosure requires attending not only to which information is valuable, but also to how decision-makers receive information they did not request. Indeed, the pattern we observe is consistent with psychological evidence suggesting that self-produced outcomes are evaluated more favorably than comparable outcomes produced by external sources~\citep{botti2023choice, enisman2021choice}.

\section{Related Work}
\label{sec:related}
Our paper builds upon related work on \textit{human-AI decision-making} and \textit{policy learning under budget constraints}. We provide an extended overview of these and related topics in \cref{app:related}.

\textbf{Human-AI Decision-Making.} Learning to Defer~\citep{DBLP:conf/nips/MadrasPZ18} and follow-up work~\citep{DBLP:conf/icml/MozannarS20,DBLP:conf/icml/VermaN22} allocate each instance to the model or to the human, and causal reasoning has so far been used to evaluate such systems~\citep{DBLP:conf/aistats/PalombaPAR25}; in LSD the human always decides, and the policy allocates \emph{information}. Other work designs the form of support, e.g., conformal prediction sets~\citep{DBLP:conf/nips/ToniOTSR24} or explanations~\citep{DBLP:conf/iui/SchemmerKBBS23}, while~\citet{noorani2026humanai} let the AI refine a human expert's prediction set, recovering labels the human missed without degrading correct human judgments.~\citet{DBLP:conf/ijcai/Noti023} and~\citet{10.1145/3544548.3581058} are closely related to our work, as they show support when the AI is predicted to outperform the human. Similarly,~\citet{DBLP:conf/aaai/BhattCCKKWT25} learn personalized disclosure policies online. We differ by casting selective disclosure as a causal inference problem: disclosure is an intervention on the decision-maker's information, and the VoI is never observed under both regimes. To the best of our knowledge, LSD is the first to establish when VoI is identifiable from data collected under the two regimes (\cref{subsec:causal}) and to show that VoI is the required estimand to build budget-constrained disclosure policies with guarantees (\cref{thm:degradation,thm:regret}).

\textbf{Policy Learning.} \Cref{thm:optimal_disclosure} is the disclosure analogue of optimal policy learning~\citep{kitagawa2018should} under a capacity constraint, where the optimal rule thresholds the conditional average treatment effect at the larger of zero and a budget-dependent quantile~\citep{BHATTACHARYA2012168,DBLP:journals/ijdsa/Cerulli26}. The difference is where the treatment acts: disclosure is applied to the decision-maker, while the outcome is the correctness of a decision about a case, so SUTVA is stated over decision episodes and identification requires decision-makers to be exchangeable across regimes.

\section{Conclusion}
\label{sec:conclusions} 
In this work, we introduced Learning to Selectively Disclose (LSD), a decision-theoretic framework for deciding when to provide support information to a human decision-maker under a budget constraint.
We showed that the optimal policy thresholds the Value of Information, revealing information only where the expected benefit is positive and exceeds a budget-dependent threshold. Since the VoI is not directly observable, we gave conditions under which it is identifiable from data collected under the two regimes. We then studied plug-in approaches, for which we bounded the estimation error relative to no disclosure and the regret relative to the optimal policy.
For classification tasks under the $0-1$ loss, we further decomposed the human risk class-wise and used this decomposition to build a VoI estimator, which proved effective on both synthetic and real benchmarks. Two user studies further indicate that \ClasswiseRisk{}-selected disclosure can improve human-AI team accuracy relative to human-selected disclosure, although its effectiveness varies across tasks. This variability may partly reflect how participants respond to advice: in some contexts, adherence may be lower for automatically provided than for comparable self-requested information, and a counterfactual benchmark replacing participants' predictions with ML ones when disclosed suggests that greater adherence would have yielded higher overall accuracy under \ClasswiseRisk{}-selected disclosure.

\textbf{Limitations and Future Works.} Our framework treats the support information $S$ as an indivisible block: the policy decides \emph{whether} to disclose, never \emph{what} to disclose. Studying which parts of the information to reveal is left for future work. 
Also, our user studies do not identify which task characteristics reduce participants’ advice-adherence nor establish a causal relationship between adherence and policy effectiveness, as they were not designed to address these questions. Future work will investigate these aspects with dedicated user studies. 

\subsection*{AI use statement}

In this work, we used generative AI tools for code support and polishing the writing.
Regarding the former, the code was manually reviewed by two authors.
Regarding the latter, we have used generative AI tools to provide feedback on the presentation's quality, with the goal of avoiding concerns that are due solely to inaccuracies in our writing. ChatGPT Astra Pro was used to assist with proofreading and to improve the clarity and presentation of the manuscript.
We take responsibility for the final content of this work, including text, claims or artifacts produced with the aid of generative AI.

\subsection*{Ethics statement}

The research protocol of our user studies was determined as posing no risk to participants’ well-being or rights and therefore as not requiring full ethical review by the Research Ethics Committee of [institution omitted to preserve anonymity] ([protocol number omitted to preserve anonymity]).

\subsection*{Reproducibility statement}

The main text describes the problem formulation, the assumptions for identifying the VoI, the \ClasswiseRisk{} estimator, the baselines, the datasets, the evaluation protocol (data splits, threshold calibration, budget grid, and seeds), and the design and analysis of the user studies. Complete proofs are provided in the appendix, together with additional experimental results and details on the user studies design. The implementation and experiment scripts used to produce our results are made available through an anonymized repository, which also contains the data collected in the user studies and the corresponding statistical analysis scripts.

\subsubsection*{Acknowledgments}

The authors acknowledge the CINECA award under the ISCRA initiative for the availability of high-performance computing resources and support. 
This work was funded by the European Union. The views and opinions expressed are however those of the author(s) only and do not necessarily reflect those of the European Union, the European Health and Digital Executive Agency (HaDEA) or the European Research Executive Agency. Neither the European Union nor the granting authority can be held responsible for them. Grant Agreement no. 101120763 - TANGO.
Michele Caprio gratefully acknowledges support from the Prob\_AI Hub and the London Mathematical Society.

\bibliography{iclr2027_conference}
\bibliographystyle{iclr2027_conference}
\appendix

\section{Proofs}

\setcounter{theorem}{0}
\setcounter{proposition}{0}

\subsection{Proof of Optimal Disclosure Policy}
\label[appendix]{app:proof_th1}

For completeness, we restate here \cref{thm:optimal_disclosure} from \cref{subsec:ODP}, and then provide its proof.
\begin{theorem}[Optimal Disclosure Policy]
Assume that the cumulative distribution of $\VoI$ is continuous. Let $q_{1-B}$ denote the $(1-B)$-quantile of  $\VoI$ and let $\lambda^* = \max\{0,\, q_{1-B}\}$. For any budget $B \in [0,1]$, the optimal disclosure policy is the threshold rule:
\begin{equation}
d^*(\mbx) =
\begin{cases}
1 & \text{if} \quad \VoI \geq \lambda^*, \\
0 & \text{otherwise}.
\end{cases}
\end{equation}
\end{theorem}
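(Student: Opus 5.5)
We rewrite the objective so that the problem becomes a fractional-knapsack maximization over the VoI. For any $d\in\mathcal D$,
\[
R(d)=\E_\mbx[r_0(\mbx)]-\E_\mbx\bigl[d(\mbx)\,\VoI\bigr]=R(0)-\E_\mbx\bigl[d(\mbx)\,\VoI\bigr].
\]
Minimizing $R(d)$ subject to $\E_\mbx[d(\mbx)]\le B$ is therefore equivalent to maximizing $\E_\mbx[d(\mbx)V(\mbx)]$ under the same constraint, where $V(\mbx):=\VoI$. We assume $r_0,r_1$ are integrable so that all expectations are finite.

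The core step is a Lagrangian (Neyman--Pearson type) pointwise comparison. Set $\lambda^*=\max\{0,q_{1-B}\}$, and note that $d^*$ is measurable, being the indicator of $\{V\ge\lambda^*\}$.

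\textbf{Feasibility of $d^*$.} By continuity of the CDF of $V$, we have $\E[d^*]=\sP(V\ge\lambda^*)\le \sP(V\ge q_{1-B})=B$.

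\textbf{Optimality.} Take any feasible $d$. Pointwise,
\[
(d^*(\mbx)-d(\mbx))\,(V(\mbx)-\lambda^*)\ge 0 .
\]
Indeed, when $V\ge\lambda^*$ we have $d^*=1\ge d$, and when $V<\lambda^*$ we have $d^*=0\le d$. Taking expectations gives
\[
\E[d^*V]-\E[dV]\ge \lambda^*\bigl(\E[d^*]-\E[d]\bigr).
\]
It remains to show the right-hand side is nonnegative, which we do in two cases.
\begin{itemize}
\item If $q_{1-B}\le 0$, then $\lambda^*=0$ and the right-hand side vanishes.
\item If $q_{1-B}>0$, then $\lambda^*=q_{1-B}$, and continuity gives $\E[d^*]=\sP(V\ge q_{1-B})=B\ge\E[d]$. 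Since $\lambda^*>0$, the right-hand side is again nonnegative.
\end{itemize}
Hence $\E[d^*V]\ge\E[dV]$, and so $R(d^*)\le R(d)$.

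The boundary budgets need a quick check. For $B=0$, the quantile is the essential supremum (possibly $+\infty$), so $d^*=0$ almost surely. For $B=1$, we get $\lambda^*=\max\{0,\operatorname{ess\,inf}V\}$. The argument above covers both, provided the quantile is read in the extended sense.

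The main obstacle is the quantile and tie convention. We need $\sP(V\ge q_{1-B})=B$ exactly, and $\sP(V=\lambda^*)=0$ so that the choice of $\ge$ versus $>$ is immaterial. Both follow from the continuity hypothesis. Without it, one would need randomization at the threshold, and we will remark on this.

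Uniqueness holds only up to null sets, and we will state optimality in that sense. From the inequality chain, any other optimal $d$ must agree with $d^*$ almost surely on $\{V\neq\lambda^*\}$ wherever $V\neq 0$ matters. We will not claim more than this.
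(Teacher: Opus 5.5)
Your proof is correct and follows the paper's route. Both arguments rewrite $R(d)=R(0)-\E[d(\mbx)\,\VoI]$ and then apply a Lagrangian threshold argument with multiplier $\lambda^*$, split into a non-binding case ($\lambda^*=0$) and a binding case ($\E[d^*]=B$); your integrated pointwise inequality $(d^*-d)(\VoI-\lambda^*)\ge 0$ just spells out the Lagrangian-sufficiency step that the paper leaves implicit after checking complementary slackness.

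Your hedged uniqueness remark can be sharpened to the paper's conclusion. Equality in your chain forces $(d^*-d)(\VoI-\lambda^*)=0$ almost surely, and since $\sP(\VoI=\lambda^*)=0$, every optimal $d$ agrees with $d^*$ almost everywhere.
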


\begin{proof}
We begin by first expanding the objective,
\[
  R(d) = \E\big[(1-d(\mbx))r_0(\mbx) + d(\mbx) r_1(\mbx)\big]
  = \E[r_0(\mbx)] - \E\big[d(\mbx)\,\VoI\big],
\]
and since $\E[r_0(\mbx)]$ does not depend on $d$, the original minimization problem is equivalent to
\begin{equation}
  \max_{d:\,\mcX\to[0,1]} \ \E_\mbx\big[d(\mbx)\,\VoI\big]
  \quad\text{s.t.}\quad \E_\mbx[d(\mbx)] \le B .
  \label{eq:primal}
\end{equation}

Introducing the multiplier $\lambda \ge 0$ for the budget constraint, the Lagrangian is
\[
  \mcL(d,\lambda)
  = \E\big[d(\mbx)\,\VoI\big] - \lambda\big(\E[d(\mbx)] - B\big)
  = \lambda B + \E\big[d(\mbx)\,(\VoI - \lambda)\big].
\]
For any fixed $\lambda \ge 0$,  the maximisation is point-wise in $\mbx$ and the integrand is linear in $d(\mbx) \in [0,1]$. Therefore,  any maximiser satisfies
\begin{equation}
  d_\lambda(\mbx) =
  \begin{cases}
    1, & \VoI > \lambda,\\
    0, & \VoI < \lambda,
  \end{cases}
  \qquad d_\lambda(\mbx) \in [0,1] \text{ arbitrary on } \{\VoI = \lambda\}.
  \label{eq:pointwise}
\end{equation}
It remains to select $\lambda^\ast$ satisfying \emph{complementary slackness}, $\lambda^\ast\big(\E[d_{\lambda^\ast}(\mbx)] - B\big) = 0$. This allows for both binding and non-binding budget scenarios, leading to two mutually exclusive cases.
\begin{itemize}
    \item[\emph{(i)}] \emph{Non-binding budget. }Suppose $\sP(\VoI > 0) < B$, so that $q_{1-B} \le 0$ and hence $\lambda^\ast = 0$. Then $\lambda^\ast = 0$ is admissible: the policy $d^\ast(\mbx) = \mathbf{1}\{\VoI > 0\}$ satisfies $\E[d^\ast(\mbx)] = \sP(\VoI > 0) < B$, so the constraint is slack and complementary slackness holds with $\lambda^\ast = 0$. By the continuity assumption at $\lambda^\ast = 0$ we have $\sP(\VoI = 0) = 0$, so $d^\ast(\mbx)$ agrees $p$-almost everywhere with $\mathbf{1}\{\VoI \ge 0\}$ and is in particular feasible. Note that in this case no feasible policy can spend the full budget profitably: increasing $\E[d(\mbx)]$ beyond $\sP(\VoI>0)$ requires disclosing on $\{\VoI \le 0\}$, which cannot increase the objective. In particular the equation $\E[d_\lambda] = B$ admits \emph{no} solution.

    \item[\emph{(ii)}] \emph{Binding budget.} Suppose $\sP(\VoI > 0) \ge B$; then $F_{\VoI}(0) \le 1-B$, so $q_{1-B} \ge 0$ and $\lambda^\ast = q_{1-B}$. By the continuity assumption at $\lambda^\ast$, $\sP(\VoI = \lambda^\ast) = 0$ and $F_{\VoI}(\lambda^\ast) = 1-B$, whence $\sP(\VoI > \lambda^\ast) = B$. The policy $d_{\lambda^\ast} = \mathbf{1}\{\VoI \ge \lambda^\ast\}$ therefore satisfies $\E[d_{\lambda^\ast}] = B$, and complementary slackness holds with $\lambda^\ast \ge 0$. \\

\end{itemize}
Combining the two cases, the optimal multiplier is
\begin{equation}
  \lambda^\ast = \max\big\{0,\; F_{\VoI}^{-1}(1-B)\big\},
  \label{eq:optimal-lambda}
\end{equation}

and, since $\sP(\VoI = \lambda^\ast) = 0$ by the continuity assumption, any maximiser of \cref{eq:primal} agrees $p$-almost everywhere with
\begin{equation}
  d^\ast(\mbx) = \mathbf{1}\{\VoI \geq \lambda^\ast\},
  \label{eq:optimal-policy}
\end{equation}
which is the statement of \cref{thm:optimal_disclosure}.
\end{proof}

\subsection{Proof of Identifiability of the Value of Information}
\label[appendix]{proof:voi_identifiability}

For completeness, we restate here \cref{prop:voi_identifiability} from \cref{subsec:causal}, and then provide its proof.

\begin{proposition}[Identifiability of the Value of Information]
Let $H \sim \pi$ denote a decision-maker drawn from the pool $\mcH$, let
$A_{H}(0)$ and $A_{H}(1)$ denote the potential actions and $\ell\bigl(A_{H}(D),Y\bigr)$
the corresponding potential losses for $D \in \{0,1\}$. Under assumptions
A1--A5, the regime-specific potential risks are identifiable from observed data. In particular, the Value of Information
\[
\VoI
=
r_0(\mbx)-r_1(\mbx)
=
\mathbb E\!\left[\ell\bigl(A_{H}(0),Y\bigr)-\ell\bigl(A_{H}(1),Y\bigr)\mid \mbX=\mbx\right]
\]
is identifiable from observed data as
\[
\VoI
=
\mathbb E\!\left[
\ell^{\mathrm{obs}}\bigl(A_{H}(D),Y\bigr)
\mid \mbX=\mbx, D=0
\right]
-
\mathbb E\!\left[
\ell^{\mathrm{obs}}\bigl(A_{H}(D),Y\bigr)
\mid \mbX=\mbx, D=1
\right]
\]
\end{proposition}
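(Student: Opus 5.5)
The argument is the standard identification chain for a conditional average treatment effect, applied to each regime-specific risk separately. Linearity of conditional expectation gives $\VoI = r_0(\mbx) - r_1(\mbx)$. It therefore suffices to show, for each fixed $d\in\{0,1\}$, that
\[
r_d(\mbx)=\mathbb E\!\left[\ell\bigl(A_H(d),Y\bigr)\mid \mbX=\mbx\right]
=\mathbb E\!\left[\ell^{\mathrm{obs}}\bigl(A_{H}(D),Y\bigr)\mid \mbX=\mbx, D=d\right],
\]
and then subtract the two identities.

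For a fixed $d$, I would proceed in four steps.

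\textbf{First}, use (A5), exchangeability of decision-makers. Because the decision-maker $H_d$ assigned under regime $d$ is drawn from $\pi$ independently of $\mbX$ and $D$, the potential loss $\ell(A_{H_d}(d),Y)$ has the same conditional law given $\mbX=\mbx$ as $\ell(A_H(d),Y)$ for a fresh $H\sim\pi$. This gives
\[
r_d(\mbx)=\mathbb E\!\left[\ell\bigl(A_{H_d}(d),Y\bigr)\mid \mbX=\mbx\right].
\]
\textbf{Second}, apply (A1), unconfoundedness. The regime indicator is independent of the potential losses given $\mbX$, so we may condition on $D=d$ without changing the expectation:
\[
r_d(\mbx)=\mathbb E\!\left[\ell\bigl(A_{H_d}(d),Y\bigr)\mid \mbX=\mbx,D=d\right].
\]
(A2), positivity, guarantees $\sP(D=d\mid\mbX=\mbx)>0$, so this conditional expectation is well defined for every $\mbx$ in the support.

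\textbf{Third}, invoke (A4), SUTVA. The action in the episode depends only on the information disclosed in that episode, so the potential action $A_{H_d}(d)$ is well defined and unaffected by other episodes' regimes.

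\textbf{Fourth}, apply (A3), consistency. On $\{D=d\}$ the observed loss satisfies $\ell^{\mathrm{obs}}(A_{H_D}(D),Y)=\ell(A_{H_d}(d),Y)$; this is also immediate from the definition $\ell^{\mathrm{obs}}=(1-D)\ell(A_{H_0}(0),Y)+D\ell(A_{H_1}(1),Y)$. Substituting yields the displayed identity. Doing this for $d=0$ and $d=1$ and subtracting gives the claimed expression for $\VoI$.

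The main obstacle is the first step. It has to make rigorous that the randomness in $H$ is handled correctly: $r_d$ averages over a generic $H\sim\pi$, while the data pair each case with specific decision-makers $H_0,H_1$. I would write the expectations as iterated integrals over $h\sim\pi$, $\mathbb E[\,\cdot\mid \mbX=\mbx]=\int \mathbb E[\,\cdot\mid \mbX=\mbx,H=h]\,\pi(dh)$. Then (A5) says exactly that the conditional distribution of $H_d$ given $(\mbX,D)$ is $\pi$, so the tower property can be applied inside the integral over $h$. If unconfoundedness is stated conditional on $H$ as well, I would apply (A1) pointwise in $h$ before integrating.
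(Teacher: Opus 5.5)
Your proposal is correct and is essentially the paper's proof. For each fixed $d$ it chains SUTVA, consistency, positivity, exchangeability of decision-makers and unconfoundedness to equate $r_d(\mbx)$ with $\mathbb E[\ell^{\mathrm{obs}}\mid \mbX=\mbx, D=d]$, then subtracts the two identities. The only difference is direction and order: the paper starts from the observed conditional expectation, applies (A5) conditionally on $D=d$, and then uses (A1) to drop the conditioning. You start from $r_d(\mbx)$, apply (A5) first and then (A1) to the assigned decision-maker's loss, and finish with (A3). At the paper's level of rigor this reordering is immaterial.
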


\begin{proof}
Fix $\mbx \in \mcX$ and a regime $d \in \{0,1\}$. To distinguish the random regime $D$ from the value it takes, throughout this proof we write $r_d(\mbx)$ for the risk $r_D(\mbx)$ of \cref{eq:risks} evaluated at $D = d$. Recall from \cref{sec:background} that, under regime $d$, the decision on a case is taken by the decision-maker $H_d \sim \pi$ assigned to that regime. By SUTVA (A4), the action of $H_d$ depends only on the information disclosed to it in that episode, so its potential action $A_{H_d}(d)$ is well defined. By consistency (A3), on the event $\{D = d\}$ the observed loss equals the corresponding potential loss, $\ell^{\mathrm{obs}}\bigl(A_H(D), Y\bigr) = \ell\bigl(A_{H_d}(d), Y\bigr)$. By overlap (A2), $\sP(D = d \mid \mbX = \mbx) > 0$, so conditioning on $\{\mbX = \mbx, D = d\}$ is well defined and
\[
\E\!\left[\ell^{\mathrm{obs}}\bigl(A_H(D), Y\bigr) \mid \mbX = \mbx, D = d\right]
=
\E\!\left[\ell\bigl(A_{H_d}(d), Y\bigr) \mid \mbX = \mbx, D = d\right].
\]
By exchangeability of decision-makers (A5), $H_d$ is drawn from $\pi$ independently of the covariates, and of the regime. The decision-maker observed under regime $d$ is therefore distributed as a generic $H \sim \pi$, and
\[
\E\!\left[\ell\bigl(A_{H_d}(d), Y\bigr) \mid \mbX = \mbx, D = d\right]
=
\E_{H \sim \pi}\!\left[\ell\bigl(A_H(d), Y\bigr) \mid \mbX = \mbx, D = d\right].
\]
By unconfoundedness (A1), the potential losses are independent of the regime given $\mbX$, so
\[
\E_{H \sim \pi}\!\left[\ell\bigl(A_H(d), Y\bigr) \mid \mbX = \mbx, D = d\right]
=
\E_{H \sim \pi}\!\left[\ell\bigl(A_H(d), Y\bigr) \mid \mbX = \mbx\right]
=
r_d(\mbx),
\]
where the last equality is the definition of the regime-specific risk in \cref{eq:risks}. Hence both $r_0(\mbx)$ and $r_1(\mbx)$ are identified by the observed conditional expectations, and their difference identifies $\VoI = r_0(\mbx) - r_1(\mbx)$.
\end{proof}

\subsection{Proof of the no-disclosure degradation bound}
\label[appendix]{app:proof_noDisc}

For completeness, we restate here \cref{thm:degradation} from \cref{subsec:guarantees}, and then provide its proof. Recall that $\hat d(\mbx) = \mathbbm{1}\{\hVoI \geq \hat\lambda\}$ is the plug-in policy, $\hat B = \E_\mbx[\hat d(\mbx)]$ its realized disclosure rate, and $\varepsilon(\mbx) = \hVoI - \VoI$ the estimation error of the VoI.

\begin{theorem}No-disclosure degradation]
Given a non-negative loss function \(\ell:\mcA\times\mcY\to\mathbb{R}^+\), for any estimator $\hVoI$ such that $\|\varepsilon\|_{L^2(p)} < \infty$ and any $\hat\lambda \geq 0$, the following holds:
\begin{equation*}
  R(\hat d) - R(0) \;\le\; \sqrt{\hat B}\,\|\varepsilon\|_{L^2(p)} \;-\; \hat\lambda \hat B .
\end{equation*}
\end{theorem}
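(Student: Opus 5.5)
We start from the same decomposition used in the proof of \cref{thm:optimal_disclosure}. Since $R(d) = \E[r_0(\mbx)] - \E[d(\mbx)\,\VoI]$ for any policy $d$, and $R(0) = \E[r_0(\mbx)]$, the degradation has the exact form
\[
  R(\hat d) - R(0) = -\E_\mbx\big[\hat d(\mbx)\,\VoI\big].
\]
We then substitute $\VoI = \hVoI - \varepsilon(\mbx)$. This splits the right-hand side into two terms:
\[
  R(\hat d) - R(0) = -\E_\mbx\big[\hat d(\mbx)\,\hVoI\big] + \E_\mbx\big[\hat d(\mbx)\,\varepsilon(\mbx)\big].
\]

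The first term is handled by the definition of the plug-in rule. On the support of $\hat d$ we have $\hVoI \ge \hat\lambda$. Because $\hat d \in \{0,1\}$, it follows that $\hat d(\mbx)\,\hVoI \ge \hat\lambda\,\hat d(\mbx)$ pointwise. Taking expectations gives $-\E[\hat d\,\hVoI] \le -\hat\lambda\,\E[\hat d] = -\hat\lambda \hat B$. This step uses $\hat\lambda \ge 0$ only through the sign of the resulting bound, which is why the statement assumes it.

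The second term is handled by Cauchy--Schwarz in $L^2(p)$:
\[
  \E\big[\hat d\,\varepsilon\big] \le \|\hat d\|_{L^2(p)}\,\|\varepsilon\|_{L^2(p)}.
\]
Since $\hat d$ is an indicator, $\hat d^2 = \hat d$. Hence $\|\hat d\|_{L^2(p)} = \sqrt{\E[\hat d]} = \sqrt{\hat B}$. Adding the two bounds yields exactly \cref{eq:safety}.

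The main obstacle is only bookkeeping of integrability. We need $\E[\hat d\,\VoI]$ and $\E[\hat d\,\hVoI]$ to be well defined, so that the split into two expectations is legitimate. This follows from $\|\varepsilon\|_{L^2(p)}<\infty$ (hence $\varepsilon\in L^1(p)$) together with integrability of $r_0,r_1$, which we assume so that $R$ is finite. Then $\VoI$ is integrable, and so is $\hVoI=\VoI+\varepsilon$. Non-negativity of $\ell$ ensures the risks are well defined; beyond that, no other earlier result is needed.
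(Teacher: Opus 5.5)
Your proposal is correct and follows the paper's proof essentially step for step: the same identity $R(\hat d)-R(0)=-\E_\mbx[\hat d(\mbx)\,\VoI]$, the same split via $\VoI=\hVoI-\varepsilon(\mbx)$, the pointwise bound $\hat d(\mbx)\,\hVoI\ge\hat\lambda\,\hat d(\mbx)$, and Cauchy--Schwarz with $\hat d^2=\hat d$. The only difference is minor: the paper pairs $\hat d$ with $\hat d\,\varepsilon$ in Cauchy--Schwarz, which gives the slightly sharper intermediate bound $\sqrt{\hat B}\,\bigl(\E_\mbx[\hat d(\mbx)\,\varepsilon(\mbx)^2]\bigr)^{1/2}$, showing that only the estimation error on the disclosed region enters.
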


\begin{proof}
Recall from the proof of \cref{thm:optimal_disclosure} that
\begin{equation}
  R(d) = \E_\mbx\big[(1-d(\mbx))\,r_0(\mbx) + d(\mbx)\,r_1(\mbx)\big]
       = \E_\mbx[r_0(\mbx)] - \E_\mbx\big[d(\mbx)\,\VoI\big].
  \label{eq:risk-voi}
\end{equation}
Since $R(0) = \E_\mbx[r_0(\mbx)]$, we have $R(\hat d) - R(0) = -\E_\mbx[\hat d(\mbx)\,\VoI]$. Writing $\VoI = \hVoI - \varepsilon(\mbx)$,
\begin{equation}
  R(\hat d) - R(0)
  = \underbrace{\E_\mbx[\hat d(\mbx)\,\varepsilon(\mbx)]}_{(\mathrm{I})}
  - \underbrace{\E_\mbx[\hat d(\mbx)\,\hVoI]}_{(\mathrm{II})} .
  \label{eq:proof-split}
\end{equation}
For $(\mathrm{I})$, we apply Cauchy--Schwarz to $\hat d$ and $\hat d\,\varepsilon$, using $\hat d(\mbx)^2 = \hat d(\mbx)$ since $\hat d$ is an indicator:
\[
  \E_\mbx[\hat d(\mbx)\,\varepsilon(\mbx)]
  \;\le\; \big(\E_\mbx[\hat d(\mbx)]\big)^{1/2}
          \big(\E_\mbx[\hat d(\mbx)\,\varepsilon(\mbx)^2]\big)^{1/2}
  \;\le\; \sqrt{\hat B}\,\|\varepsilon\|_{L^2(p)} .
\]
For $(\mathrm{II})$, on $\{\hat d(\mbx) = 1\}$ we have $\hVoI \geq \hat\lambda \geq 0$, so $\hat d(\mbx)\,\hVoI \geq \hat\lambda\,\hat d(\mbx)$ for every $\mbx$, whence $(\mathrm{II}) \geq \hat\lambda\,\hat B$. Substituting both bounds into \cref{eq:proof-split} gives the claim.
\end{proof}

The two terms have opposite signs and opposite meanings. Term $(\mathrm{I})$ is the price of estimation error, and it is paid \emph{only where the policy discloses}: the factor $\hat d(\mbx)$ zeroes out the contribution of every instance the policy leaves untouched, so that only the error on the disclosed instances, $\E_\mbx[\hat d(\mbx)\,\varepsilon(\mbx)^2]$, enters the bound, which therefore vanishes as the disclosure rate $\hat B$ shrinks. Term $(\mathrm{II})$ is instead the gain the policy \emph{believes} it is making, and enters with a negative sign: since disclosure is triggered only above the threshold, every disclosed instance contributes an estimated improvement of at least $\hat\lambda$, which offsets part of the estimation error.

\paragraph{When can the plug-in policy degrade?}
\label[appendix]{app-par:degradation}
The bound certifies no degradation only when $\|\varepsilon\|_{L^2(p)} \le \hat\lambda\sqrt{\hat B}$, i.e., when the estimation error is small relative to the margin by which disclosed instances clear the threshold; otherwise, its right-hand side is positive and no-degradation is not guaranteed. 
When it does occur, degradation has a precise source. By~\cref{eq:risk-voi}, $R(\hat d) - R(0) = -\E_\mbx[\hat d(\mbx)\,\hVoI]$, so the plug-in policy can be worse than never disclosing only if it discloses on instances where $\VoI < 0$. On these instances $\hVoI \ge \hat\lambda \ge 0$, hence $\varepsilon(\mbx) > \hat\lambda$: degradation requires the estimator to overestimate the VoI by more than the threshold on part of the disclosed region.
This is most likely when disclosure is harmful on a large fraction of instances and the budget is not binding. In this case $\hat\lambda = 0$, term $(\mathrm{II})$ provides no margin, and any instance with $\VoI < 0$ but $\hVoI \ge 0$ is disclosed. This is the regime we observe on \texttt{Email}, where \texttt{ClassWise} discloses on at most $\approx 53\%$ of the instances (\cref{fig:cp_freq_budget}): for $B \ge .70$ the threshold is zero and accuracy falls slightly below \texttt{NoDisc} (\cref{fig:results}, \cref{appfig:results_ablations}).

\subsection{Proof of the oracle regret bound}
\label[appendix]{app:proof_guarantees}

For completeness, we restate here \cref{thm:regret} from \cref{subsec:guarantees}, and then provide its proof. Recall that $\hat d(\mbx) = \mathbbm{1}\{\hVoI \geq \hat\lambda\}$ is the plug-in policy, $\hat B = \E_\mbx[\hat d(\mbx)]$ its realized disclosure rate, and $\varepsilon(\mbx) = \hVoI - \VoI$ the estimation error of the VoI.

\begin{theorem}[Oracle regret]
Given a non-negative loss function \(\ell:\mcA\times\mcY\to\mathbb{R}^+\), let $d^\ast$ be the optimal policy of~\cref{thm:optimal_disclosure} and $B^\ast = \E_\mbx[d^\ast(\mbx)]$. Assume that the cumulative distribution of $\hVoI$ is continuous, let $\hat q_{1-B}$ denote its $(1-B)$-quantile and let $\hat\lambda = \max\{0,\,\hat q_{1-B}\}$. Then
\begin{equation*}
  R(\hat d) - R(d^\ast) \;\le\; \sqrt{B^\ast + \hat B}\,\|\varepsilon\|_{L^2(p)} .
\end{equation*}
\end{theorem}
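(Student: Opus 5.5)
We will reuse the representation $R(d)=\E_\mbx[r_0(\mbx)]-\E_\mbx[d(\mbx)\,\VoI]$ from the proof of \cref{thm:optimal_disclosure}. It gives
\[
R(\hat d)-R(d^\ast)=\E_\mbx\big[(d^\ast(\mbx)-\hat d(\mbx))\,\VoI\big].
\]
The strategy is to compare both policies through the estimated objective $\E_\mbx[d(\mbx)\,\hVoI]$. Writing $\VoI=\hVoI-\varepsilon(\mbx)$, we split the regret as
\[
R(\hat d)-R(d^\ast)=\underbrace{\E_\mbx\big[(d^\ast-\hat d)\,\hVoI\big]}_{(\mathrm{A})}+\underbrace{\E_\mbx\big[(\hat d-d^\ast)\,\varepsilon\big]}_{(\mathrm{B})}.
\]
Term (A) should be nonpositive by optimality of $\hat d$ for the estimated problem. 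Term (B) is the estimation price, which we control by Cauchy--Schwarz on the disagreement region.

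\textbf{Term (A).} Apply \cref{thm:optimal_disclosure} with $\hVoI$ in place of $\VoI$; its CDF is continuous by assumption. With the threshold $\hat\lambda=\max\{0,\hat q_{1-B}\}$, the rule $\hat d$ maximises $\E_\mbx[d(\mbx)\,\hVoI]$ over all measurable $d$ with $\E_\mbx[d]\le B$. The oracle policy $d^\ast$ is feasible for the same budget, since $B^\ast\le B$ by the case analysis in the proof of \cref{thm:optimal_disclosure}. Hence $\E[d^\ast\hVoI]\le\E[\hat d\,\hVoI]$, i.e. $(\mathrm{A})\le 0$.

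The point requiring care is that \cref{thm:optimal_disclosure} is stated for the true VoI. Its proof, however, only uses a measurable score with a continuous distribution and never uses the interpretation $r_0-r_1$, so it transfers verbatim. If one prefers to avoid invoking it, a direct exchange argument also works. Take $\hat d$ the top-$B$ set (or the positive set) of $\hVoI$. Then $(\hat d-d)(\hVoI-\hat\lambda)\ge 0$ pointwise, and $\hat\lambda(\E[\hat d]-\E[d])\ge 0$ by complementary slackness. Together these give $\E[(d-\hat d)\hVoI]\le 0$.

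\textbf{Term (B).} Since $\hat d-d^\ast$ takes values in $\{-1,0,1\}$, we have $|\hat d-d^\ast|^2=|\hat d-d^\ast|$. Cauchy--Schwarz then yields
\[
(\mathrm{B})\le\big(\E_\mbx|\hat d-d^\ast|\big)^{1/2}\,\|\varepsilon\|_{L^2(p)}.
\]
Finally, $|\hat d-d^\ast|\le \hat d+d^\ast$ pointwise for indicators, so $\E_\mbx|\hat d-d^\ast|\le \hat B+B^\ast$. Combining with $(\mathrm{A})\le 0$ gives the claimed bound $\sqrt{B^\ast+\hat B}\,\|\varepsilon\|_{L^2(p)}$.

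The main obstacle is term (A): justifying that $\hat d$ is exactly optimal for the plug-in objective among all policies with disclosure rate at most $B$, including $d^\ast$. Ties at the threshold are handled by the continuity assumption on $\hVoI$, so $\hat d$ is a.e. the unique maximiser. I would also double-check that $\E[d^\ast]\le B$ in both the binding and non-binding cases. The rest is routine Cauchy--Schwarz.
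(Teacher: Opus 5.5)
Your proof is correct and follows the paper's argument step for step. You use the same split of $\E_\mbx[(d^\ast-\hat d)\,\VoI]$ into an estimated-objective term and an error term. You show the former is nonpositive by applying \cref{thm:optimal_disclosure} to $\hVoI$, with $d^\ast$ feasible because $B^\ast\le B$. You then bound the error term by Cauchy--Schwarz on the disagreement indicator, followed by $|d^\ast-\hat d|\le d^\ast+\hat d$.

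Your optional exchange argument is a valid, self-contained replacement for re-invoking the theorem. It combines the pointwise bound $(\hat d-d)(\hVoI-\hat\lambda)\ge 0$ with $\hat\lambda(\E_\mbx[\hat d]-\E_\mbx[d])\ge 0$, which holds because $\hat\lambda>0$ forces $\hat B=B$ under continuity. The a.e.\ uniqueness remark is not needed for the inequality.
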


\begin{proof}
By \cref{eq:risk-voi}, and writing $\VoI = \hVoI - \varepsilon(\mbx)$,
\begin{align}
  R(\hat d) - R(d^\ast)
    &= \E_\mbx\big[(d^\ast(\mbx) - \hat d(\mbx))\,\VoI\big] \nonumber \\
    &= \underbrace{\E_\mbx\big[(d^\ast(\mbx) - \hat d(\mbx))\,\hVoI\big]}_{(\mathrm{III})}
     - \E_\mbx\big[(d^\ast(\mbx) - \hat d(\mbx))\,\varepsilon(\mbx)\big].
  \label{eq:regret_decomposition}
\end{align}

Term $(\mathrm{III})$ is non-positive. The proof of~\cref{thm:optimal_disclosure}, applied with $\hVoI$ in place of $\VoI$, shows that the threshold rule $\mathbbm{1}\{\hVoI \geq \max\{0,\,\hat q_{1-B}\}\}$ maximizes $\E_\mbx[d(\mbx)\,\hVoI]$ over all policies with $\E_\mbx[d(\mbx)] \leq B$; by assumption, $\hat d$ is that rule. Since $d^\ast$ also satisfies the budget constraint, as $B^\ast \leq B$ by~\cref{thm:optimal_disclosure}, we get $\E_\mbx[d^\ast(\mbx)\,\hVoI] \leq \E_\mbx[\hat d(\mbx)\,\hVoI]$.

Let $\Delta(\mbx) = |d^\ast(\mbx) - \hat d(\mbx)|$, the indicator of the region where the two policies disagree. Since $d^\ast(\mbx) - \hat d(\mbx) \in \{-1, 0, 1\}$, we have $\Delta(\mbx)^2 = \Delta(\mbx)$, and Cauchy--Schwarz applied to $\Delta$ and $\Delta\,|\varepsilon|$ gives
\begin{align*}
  R(\hat d) - R(d^\ast)
  &\le \E_\mbx\big[\Delta(\mbx)\,|\varepsilon(\mbx)|\big]
   \le \big(\E_\mbx[\Delta(\mbx)]\big)^{1/2}
       \big(\E_\mbx[\Delta(\mbx)\,\varepsilon(\mbx)^2]\big)^{1/2} \\
  &\le \big(\E_\mbx[\Delta(\mbx)]\big)^{1/2}\,\|\varepsilon\|_{L^2(p)} .
\end{align*}
Finally, since $d^\ast$ and $\hat d$ take values in $\{0,1\}$, $\Delta(\mbx) \leq d^\ast(\mbx) + \hat d(\mbx)$, so that $\E_\mbx[\Delta(\mbx)] \leq B^\ast + \hat B$, which gives the claim.
\end{proof}

The intermediate bound shows that the regret depends only on the region where the two policies disagree: both its mass, $\E_\mbx[\Delta(\mbx)]$, and the estimation error on it, $\E_\mbx[\Delta(\mbx)\,\varepsilon(\mbx)^2]$, enter the bound, whereas errors on instances that both policies disclose, or both withhold, do not contribute. The final bound replaces the mass of the disagreement region with the upper bound $B^\ast + \hat B$, which is loose when the two policies largely agree.

\subsection{Proof of class-wise decomposition}

For completeness, we restate here \cref{prop:class-wise} from \cref{subsec:methodology-estimating}, and then provide its proof.

\begin{proposition}[Class-wise decomposition]
    Let us consider $\ell(A(D),Y)=\mathbbm{1}\{A(D)\neq Y\}$. Then $r_{D}(\mbx)$ can be written as:
    \begin{equation}
         r_D(\mbx)=\sum_{y\in\mcY}
    \underbrace{\sP(Y = y \mid \mbX=\mbx)}_{p_y}\,
    \underbrace{\sP(A(D)\neq y \mid \mbX=\mbx, Y = y)}_{q_{D,y}}.
    \end{equation}
\end{proposition}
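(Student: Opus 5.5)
The plan is a direct application of the law of total expectation, conditioning additionally on the target $Y$. By \cref{eq:risks} with the $0$-$1$ loss of \cref{eq:zero-one}, the risk is the conditional error probability
\[
r_D(\mbx)=\E\!\left[\mathbbm{1}\{A(D)\neq Y\}\mid \mbX=\mbx\right]=\sP\!\left(A(D)\neq Y\mid \mbX=\mbx\right),
\]
where, as in \cref{eq:risks}, the probability is taken jointly over $Y$, the decision-maker $H\sim\pi$ and any randomness in the human response. Here $A(D)$ abbreviates $A_H(D)$ with $H$ marginalised out.

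Since $\mcY$ is finite, the events $\{Y=y\}$, $y\in\mcY$, partition the sample space. Applying the tower property (equivalently, the law of total probability) conditionally on $\mbX=\mbx$ gives
\[
r_D(\mbx)=\sum_{y\in\mcY}\sP(Y=y\mid \mbX=\mbx)\,\E\!\left[\mathbbm{1}\{A(D)\neq Y\}\mid \mbX=\mbx,Y=y\right].
\]
On the event $\{Y=y\}$, the indicator $\mathbbm{1}\{A(D)\neq Y\}$ equals $\mathbbm{1}\{A(D)\neq y\}$. The inner expectation is therefore $\sP(A(D)\neq y\mid \mbX=\mbx,Y=y)=q_{D,y}$, and the outer weight is $p_y$, which is exactly the claimed identity.

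The only point needing care is well-definedness of the conditional probabilities when $p_y=0$. In that case the corresponding summand is zero regardless of how $q_{D,y}$ is defined. The identity is therefore understood $p$-almost everywhere, with any version of the regular conditional distribution; this is the standard convention and no further assumption (in particular, none of A1--A5) is needed. The regime $D$ enters only as a fixed index of the potential action $A(D)$, not as a conditioning event, so the decomposition holds separately for $D=0$ and $D=1$.
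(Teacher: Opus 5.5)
Your proposal is correct and takes essentially the same route as the paper. Both write the risk as the error probability and split it over the finite partition $\{Y=y\}$: you use the tower property directly, while the paper uses total probability followed by the chain rule. Your remark about the $p_y=0$ case is a harmless extra detail.
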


\begin{proof}
\label[appendix]{proof:Class-wise}
Since the expectation of an indicator is the probability of the corresponding event, the risk under the 0-1 loss is
\[
  r_D(\mbx) = \E\big[\mathbbm{1}\{A(D) \neq Y\} \mid \mbX = \mbx\big] = \sP(A(D) \neq Y \mid \mbX = \mbx).
\]
By the law of total probability over the values of $Y$,
\[
  \sP(A(D) \neq Y \mid \mbX = \mbx) = \sum_{y \in \mcY} \sP(A(D) \neq y,\, Y = y \mid \mbX = \mbx),
\]
where on the event $\{Y = y\}$ the condition $A(D) \neq Y$ becomes $A(D) \neq y$. Finally, by the chain rule of probability, each term factorizes as
\[
  \sP(A(D) \neq y,\, Y = y \mid \mbX = \mbx) = \sP(Y = y \mid \mbX = \mbx)\,\sP(A(D) \neq y \mid \mbX = \mbx,\, Y = y),
\]
which gives the claim.
\end{proof}

\section{A causal inference interpretation}
\label[appendix]{app:causal}

Here we detail the causal interpretation provided in \cref{subsec:causal}. Throughout, we use $D$ for the random disclosure regime and $d \in \{0,1\}$ for its values, and $H \sim \pi$ for a random decision-maker and $h \in \mcH$ for a fixed one. The disclosure variable $D$ plays the role of a binary treatment on the decision-maker: for a decision-maker $H$, we consider the \emph{potential} actions $A_H(0)$ and $A_H(1)$, corresponding to the decisions under the two regimes, and the associated potential losses $\ell(A_H(0), Y)$ and $\ell(A_H(1), Y)$, which are the potential outcomes of interest.

For a fixed decision-maker $h \in \mcH$, we define the human-specific risks
\[
  r_d(\mbx, h) = \E\!\left[\ell\bigl(A_h(d), Y\bigr) \mid \mbX = \mbx, H = h\right], \qquad d \in \{0,1\},
\]
so that the human-specific effect of disclosure introduced in \cref{subsec:causal} is $\tau_h(\mbx) = r_1(\mbx, h) - r_0(\mbx, h)$. This is the effect of disclosing $S$ on the decisions of $h$, rather than a treatment effect on the case itself. Under the exchangeability of decision-makers (A5 below), the decision-maker is drawn independently of the case, and the regime-specific risks of \cref{eq:risks} are the population averages $r_d(\mbx) = \E_{H \sim \pi}[r_d(\mbx, H)]$. Hence $\tau_\pi(\mbx) = \E_{H \sim \pi}[\tau_H(\mbx)] = r_1(\mbx) - r_0(\mbx)$, and $\VoI = -\tau_\pi(\mbx)$.

We now state the assumptions under which this identification is valid.

\begin{enumerate}
    \item[A1] \textbf{Unconfoundedness.} Conditional on the baseline covariates $\mbX$, the disclosure regime is as good as random with respect to the potential losses~\citep{rosenbaum1983central}:
    \[
      \bigl\{\ell(A_H(0), Y),\; \ell(A_H(1), Y)\bigr\}
      \perp\!\!\!\perp D \mid \mbX .
    \]
    Intuitively, all possible sources of self-selection into disclosure (and non-disclosure) are captured by the observable covariates.

    \item[A2] \textbf{Overlap (positivity).} For all covariate profiles in the population, both disclosure regimes are observed with non-zero probability~\citep{10.1214/aos/1176344064}. Let $e(\mbx) = \sP(D = 1 \mid \mbX = \mbx)$ denote the disclosure propensity. We assume there exists some $\eta > 0$ such that
    \[
      \eta < e(\mbx) < 1 - \eta \quad \forall \mbx \in \mcX .
    \]
    This guarantees that we can compare the risks under disclosure and non-disclosure at each covariate level and then aggregate these comparisons.

    \item[A3] \textbf{Consistency.} If a decision episode is observed under regime $D = d$, the observed loss equals the corresponding potential loss~\citep{cole2009consistency}:
    \[
      \ell^{\mathrm{obs}}\bigl(A_H(D), Y\bigr) = \ell\bigl(A_H(d), Y\bigr) \quad \text{on } \{D = d\}.
    \]

    \item[A4] \textbf{Stable Unit Treatment Value Assumption (SUTVA).} For each decision episode, the potential action of the decision-maker under regime $d \in \{0,1\}$ depends only on the information disclosed to that decision-maker in that episode~\citep{rubin1981estimation}.
    \item[A5] \textbf{Exchangeability of decision-makers.}
For each disclosure regime $d \in \{0,1\}$, the decision-maker
assigned to a decision episode is drawn from the same target population
$\pi$, independently of the case covariates and the disclosure regime.
Formally,
\[
    H_d \mid \mbX = \mbx, D = d \sim \pi,
    \qquad
    \text{for all } \quad \mbx \in \mcX \quad
    \text{ and } \quad d \in \{0,1\}.
\]
Accordingly, the regime-specific conditional risk is the expected loss
of a decision-maker drawn from $\pi$:
\[
    r_d(\mbx)
    =
    \E_{H \sim \pi}
    \left[
        \ell\bigl(A_H(d), Y\bigr)
        \mid \mbX = \mbx
    \right].
\]
Therefore,
\[
    \VoI = r_0(\mbx) - r_1(\mbx)
\]
measures the expected reduction in decision loss induced by disclosure
for a decision-maker drawn from the target population $\pi$, rather
than for a particular individual. This assumption does not require
observing the same decision-maker under both regimes. It requires that
the decision-makers observed under the two regimes be representative
of the same target population.

\end{enumerate}

\section{Implementation details}
\label[appendix]{app:impl_details}

\subsection{Synthetic experiments.}
\label[appendix]{app:impl_details_synth}

\paragraph{Dataset generation.}
For our \Synth{} experiments, we consider both a binary classification setting (\SynthB: \(|\mcY| = 2\)) and a multiclass setting (\SynthM: \(|\mcY| = 4\)). In each case, we generate data using \texttt{sklearn.make\_classification} with \texttt{n\_features = 20} (all informative), \texttt{n\_samples = 10000}, and \texttt{class\_sep = 1}. To emulate selective disclosure, we treat a subset of $|\mcX| = 9$ features as \emph{baseline} information and the remaining $|\mcS| = 11$ as \emph{hidden} (i.e., information that is not available in the no-disclosure regime).
Since no real human annotations are available in this setting, we simulate the two decision-makers with logistic regression models: the $D = 0$ decisor is trained on the $9$ baseline features only, while the $D = 1$ decisor is trained on all $20$ features. 
For each instance, the corresponding action $a_i(d)$ is the label predicted by the respective model, so that the $D=0$ decisor is systematically less accurate, and errors concentrate on the instances whose label depends on the hidden features.

\paragraph{Models.}
For the \ClasswiseRisk{} estimator, we first train a multiclass label model \(f\colon \mcX \to \Delta^{|\mcY|}\), implemented as a two-layer MLP with ReLU activations and a final linear layer followed by a softmax over classes, to approximate \(\sP(Y=y \mid X=\mbx)\). In parallel, for each regime \(D\) we train a multi-head error model consisting of a shared backbone (a two-layer MLP with GELU activations) and \(|\mcY|\) binary heads, implemented as a single linear layer of size \(|\mcY|\); the \(y\)-th head takes the shared representation and outputs a logit whose sigmoid corresponds to \(\sP(A(D) \neq y \mid X=\mbx, Y=y)\). Only the head of the observed class contributes to the loss, which is a \(\mathrm{pos\_weight}\)-weighted binary cross-entropy with the same logit correction at prediction time.

For the \TLearner{} estimator, we train two separate scalar risk networks, one per regime \(D \in \{0,1\}\), each modeling \(r_D(\mbx) = \sP(A(D) \neq Y \mid X = \mbx)\). Concretely, each network is a two-layer MLP with GELU activations and dropout, optimized with a weighted binary cross-entropy loss to predict the error indicator \(\mathbbm{1}\{A(D) \neq Y\}\) from the baseline features \(\mcX\). 

The \SLearner{} estimator replaces the two separate risk networks of the \TLearner{} with a single network \(z\colon \mcX \times \{0,1\} \to [0,1]\) that takes the regime as an additional input feature: the binary indicator \(D\) is appended to \(\mbx\), and the network is a two-layer MLP with GELU activations and dropout after each hidden layer, followed by a scalar output. It is trained on the pooled data of both regimes, so that every instance contributes two examples, \((\mbx, 0)\) with target \(\mathbbm{1}\{A(0) \neq Y\}\) and \((\mbx, 1)\) with target \(\mathbbm{1}\{A(1) \neq Y\}\), under a single \(\mathrm{pos\_weight}\)-weighted binary cross-entropy.

Finally, the \Confidence{} baseline does not model the human error at all: it only trains the label model \(g\) of the \ClasswiseRisk{} estimator and ranks instances by its predictive uncertainty, setting \(\hat r_0(\mbx) = u(g(\mbx))\) and \(\hat r_1(\mbx) = 0\), so that \(\widehat{\mathrm{VoI}}(\mbx) = u(g(\mbx))\). We use \(u(\cdot) = 1 - \max_y g(\mbx)_y\) in our experiments. This baseline encodes the intuition that information should be disclosed where the instance is intrinsically hard, regardless of whether the support information helps there; note that, since \(u \geq 0\), its estimated $\hVoI$ is never negative and the policy therefore always spends the full budget, unlike the VoI-based estimators.

For both the \SynthB{} and \SynthM{} settings, all models are trained with Adam for 10 epochs, with weight decay \(10^{-4}\) and without early stopping, using the following hyperparameters:
\begin{itemize}
    \item Label model \(f\) (also used by \Confidence{}): hidden dimension 128, batch size 32, learning rate \(5\times10^{-4}\), no dropout.
    \item Error model \(g_D\): backbone hidden dimension 256, batch size 32, learning rate \(10^{-4}\), no dropout.
    \item \TLearner{} and \SLearner{} risk networks: hidden dimension 128, batch size 128, learning rate \(5\times10^{-4}\), dropout 0.05.
\end{itemize}

\subsection{Real data experiments - \Email{}.}
\label[appendix]{app:impl_details_email}

\paragraph{Email Dataset description.}  
We used a dataset comprising $1000$ real emails with ground-truth labels indicating whether each email was fraudulent (e.g., a phishing attempt) or legitimate. The emails were selected from the corpus introduced by~\citep{DBLP:journals/corr/abs-2511-21448}, which provides only the messages; the human annotations are instead collected in a dedicated behavioural study (see \citealt{bogani2026conceptbottleneckmodelseffective} for details on the email selection procedure). In that study, $300$ participants were recruited online through Prolific and asked to provide binary judgments for $20$ emails randomly sampled from the dataset, indicating whether they considered each email to be fraudulent or legitimate. Participants were randomly assigned to one of two disclosure regimes. In the no-disclosure condition ($D = 0$), participants were presented only with the email to be classified. In the disclosure condition ($D = 1$), they were additionally shown the prediction of a machine-learning model. 

Notably, in this dataset, exchangeability of decision-makers is supported by the random assignment of participants to the two disclosure regimes. We process these data in two steps:

(i) We first aggregate the raw annotations into a per-email summary: each row corresponds to a single email and includes the ground-truth label and the empirical human response distribution in each regime. Concretely, for each label $y \in \{fraudulent, legitimate\}$ we compute \texttt{human\_{0y}} as the fraction of $D=0$ participants who assigned the email to class $y$, and analogously \texttt{human\_{1y}} for the $D=1$ participants. This yields, for each email and regime, an estimated human probability distribution over the two outcomes, from which we derive the human error indicators $err_0$ and $err_1$ (whether a label sampled from the corresponding distribution differs from the ground truth) used in our experiments.

(ii) Then, we augment the train dataset to increase diversity and robustness. We leverage text-based augmentations provided by~\citep{DBLP:journals/corr/abs-2511-21448}~\footnote{We use the augmentations released at \href{https://github.com/DataPhish/PhishFuzzer/blob/main/DataSet_Creation/5_RePhrase/emails_expanded_2026_Gemini.json}{emails-expanded-2026-Gemini.json}, which were generated via prompting a large language model.} and complement them with back-translation. For back-translation, we translate each email into seven languages (fr, de, es, it, pt, nl, ru) and then translate it back to English, obtaining multiple paraphrased variants per original message. In total, this procedure yields up to 14 augmented versions per email and results in $15000$ samples used in our experiments.

\paragraph{Email models.}
For the \Email{} dataset, all models operate on precomputed text embeddings rather than on raw text. Each email is encoded with a publicly available transformer-based phishing-detection model~\footnote{\href{https://huggingface.co/mikaelnurminen/phishing-email-detector-v51}{phishing-email-detector-v51}}, fine-tuned for binary phishing classification: we truncate the email body to $512$ tokens and take the final-layer \texttt{[CLS]} representation, yielding a $768$-dimensional embedding per (original or augmented) email. On top of these embeddings, every network uses the same MLP trunk: a $\mathrm{LayerNorm}$ on the input followed by two $\mathrm{GELU}$ hidden layers with dropout after each layer (halved on the second), where the second hidden layer has half the width of the first. The label model $f$ (also used by \Confidence{}) and the \TLearner{} risk networks use hidden widths $768 \!\to\! 128 \!\to\! 64$, the error model $g_D$ of \ClasswiseRisk{} uses $768 \!\to\! 256 \!\to\! 128$, and the \SLearner{} network, whose input includes the regime indicator, uses $769 \!\to\! 128 \!\to\! 64$. The label and error models are trained with batch size $64$, the \TLearner{} and \SLearner{} networks with batch size $128$ on inputs standardized with training-split statistics.

Model selection is performed by grid search over the number of training epochs $\{25, 50, 100\}$, the optimizer (\textsc{Adam}, \textsc{AdamW}), the learning rate $\{1\times10^{-4}, 2\times10^{-4}\}$, the dropout rate $\{0.1, 0.3\}$ and the weight decay $\{10^{-3}, 10^{-2}\}$, retaining for each model the configuration with the highest binary AUROC on the validation split. The resulting hyperparameters are reported in Table~\ref{tab:email_hparams}.

\begin{table}
\centering
\caption{Selected hyperparameters for the \Email{} risk models.}
\label{tab:email_hparams}
\small
\begin{tabular}{lccccc}
\toprule
Model & Epochs & Optimizer & LR & Dropout & Weight decay \\
\midrule
\ClasswiseRisk{} & $50$  & \textsc{Adam}  & $2\times10^{-4}$ & $0.3$ & $10^{-3}$ \\
\TLearner{}      & $25$  & \textsc{AdamW} & $2\times10^{-4}$ & $0.1$ & $10^{-3}$ \\
\SLearner{}      & $25$  & \textsc{Adam}  & $1\times10^{-4}$ & $0.1$ & $10^{-2}$ \\
\Confidence{}    & $100$ & \textsc{AdamW} & $2\times10^{-4}$ & $0.3$ & $10^{-3}$ \\
\bottomrule
\end{tabular}
\end{table}

\subsection{Real data experiments - \Image{}.}
\label[appendix]{app:impl_details_cp}

\paragraph{\Image{} Dataset description.}
We used the \textsc{\Image{}} dataset introduced by~\citep{steyvers2022bayesian}, as processed for the human study of~\citep{DBLP:conf/nips/ToniOTSR24}. The dataset comprises 1200 natural images drawn from 16 categories $\{$\texttt{airplane}, \texttt{bear}, \texttt{bicycle}, \texttt{bird}, \texttt{boat}, \texttt{bottle}, \texttt{car}, \texttt{cat}, \texttt{chair}, \texttt{clock}, \texttt{dog}, \texttt{elephant}, \texttt{keyboard}, \texttt{knife}, \texttt{oven}, \texttt{truck}$\}$, each corrupted with phase noise to make the classification task challenging for both humans and models; we use the highest available noise level ($110$), which yields the hardest regime. Each image carries a ground-truth category label. A VGG-19 classifier fine-tuned on the noisy images provides softmax scores over the 16 classes, from which~\citep{DBLP:conf/nips/ToniOTSR24} construct conformal prediction sets using the standard split-conformal calibration procedure.

Concretely, given a target miscoverage level $\alpha$, the conformal score of an image is defined as $s = 1 - \hat{p}(y \mid x)$, where $\hat{p}(y \mid x)$ is the model's softmax probability of the ground-truth class $y$~\citep{DBLP:journals/sigact/Law06}. The threshold $q$ is set to the $(1-\alpha)(n+1)/n$ empirical quantile of these scores on a calibration set, and the prediction set for an image is the collection of classes whose softmax probability exceeds $q$. This construction enjoys the usual marginal coverage guarantee of $1-\alpha$: smaller values of $\alpha$ enforce higher coverage and therefore yield larger prediction sets, whereas larger values of $\alpha$ produce smaller, more informative sets~\citep{DBLP:journals/corr/SadinleLW16}. Since the human study recorded, for each image, the responses of participants who were shown a prediction set of a given size, the choice of $\alpha$ determines both the set displayed in the disclosure condition and the corresponding empirical human response distribution. In our experiments we consider three miscoverage levels, $\alpha \in \{0.01, 0.02, 0.05\}$. We will discuss in \cref{app:alpha} about the choice of $\alpha$. 

Human judgments were collected online in two disclosure regimes, drawn from two separate behavioural studies over the same $1200$ images. In the no-disclosure condition ($D = 0$), taken from~\citep{steyvers2022bayesian}, participants were shown only the (noisy) image and asked to select one of the $16$ categories, yielding $7261$ classifications from $145$ participants. In the disclosure condition ($D = 1$), taken from~\citep{DBLP:conf/icml/StraitouriR24}~\footnote{Datasets for both conditions are publicly available: \href{https://osf.io/2ntrf/overview?view_only=9ec9cacb806d4a1ea4e2f8acaada8f6c}{no-disclosure condition} and \href{https://github.com/Human-Centric-Machine-Learning/towards-human-ai-complementarity-predictions-sets}{disclosure condition}.}, participants were additionally shown the model's conformal prediction set for the image before selecting a label. The set was presented in a \emph{lenient} regime: it served as a suggestion rather than a constraint, so participants were free to choose any of the $16$ categories, including labels outside the displayed set ~\footnote{The original study also includes a \emph{strict} condition, in which participants are constrained to choose a label from within the displayed prediction set. We deliberately adopt the lenient regime, as our setting requires the human to remain the final decision-maker and to be free to disregard the model's feedback whenever their own judgment disagrees with it.}

Notably, exchangeability of decision-makers for this dataset is plausible because: $(i)$ the two studies use the same task and item population and $(ii)$ recruit participants from comparable populations under similar experimental protocols, with the principal difference being the availability of support information.
We process the human study data in two steps:

(i) We first aggregate the raw annotations into a per-image summary: each row corresponds to a single image and includes the ground-truth label, the model softmax scores, and the human responses collected under each regime. For the no-disclosure regime, for each category $y$ we compute \texttt{human\_{0y}} as the fraction of $D=0$ participants who assigned the image to class $y$, and a human label is then obtained by sampling from this distribution.
For the disclosure regime, the response we use depends on the chosen miscoverage level $\alpha$. For each image, the experiment of \citep{DBLP:conf/icml/StraitouriR24} collected disclosure responses from 16 distinct $D=1$ participants, each of whom was shown a prediction set of a different size, ranging from 1 to 16 labels. Since the size of the conformal prediction set for a given image is determined by $\alpha$, each of these responses corresponds to the decision a human would make when assisted at a particular miscoverage level. Fixing $\alpha$ therefore determines, for each image, the size of the set the policy would actually disclose, and we retain only the response of the participant who was shown a set of exactly that size.
This yields, for each image and regime, a human label (sampled from the empirical distribution in the no-disclosure regime, or the human response selected via $\alpha$ in the disclosure regime) from which we derive the human error indicators $err_0$ and $err_1$ used in our experiments.

(ii) Then, we augment the train dataset to increase diversity and robustness. At training time each image is transformed by composing 3 randomly sampled geometric augmentations drawn from a fixed set of $14$ operations (identity, horizontal/vertical flips, rotations of $15^\circ$/$30^\circ$/$45^\circ$, horizontal/vertical translations, and $x$/$y$ shears), yielding multiple perturbed variants per original image while preserving its category label and associated human response distributions.
Figure~\ref{fig:cat_ex} illustrates the resulting variants for a representative image.

\begin{figure}[ht]
    \centering
    \includegraphics[width=\linewidth]{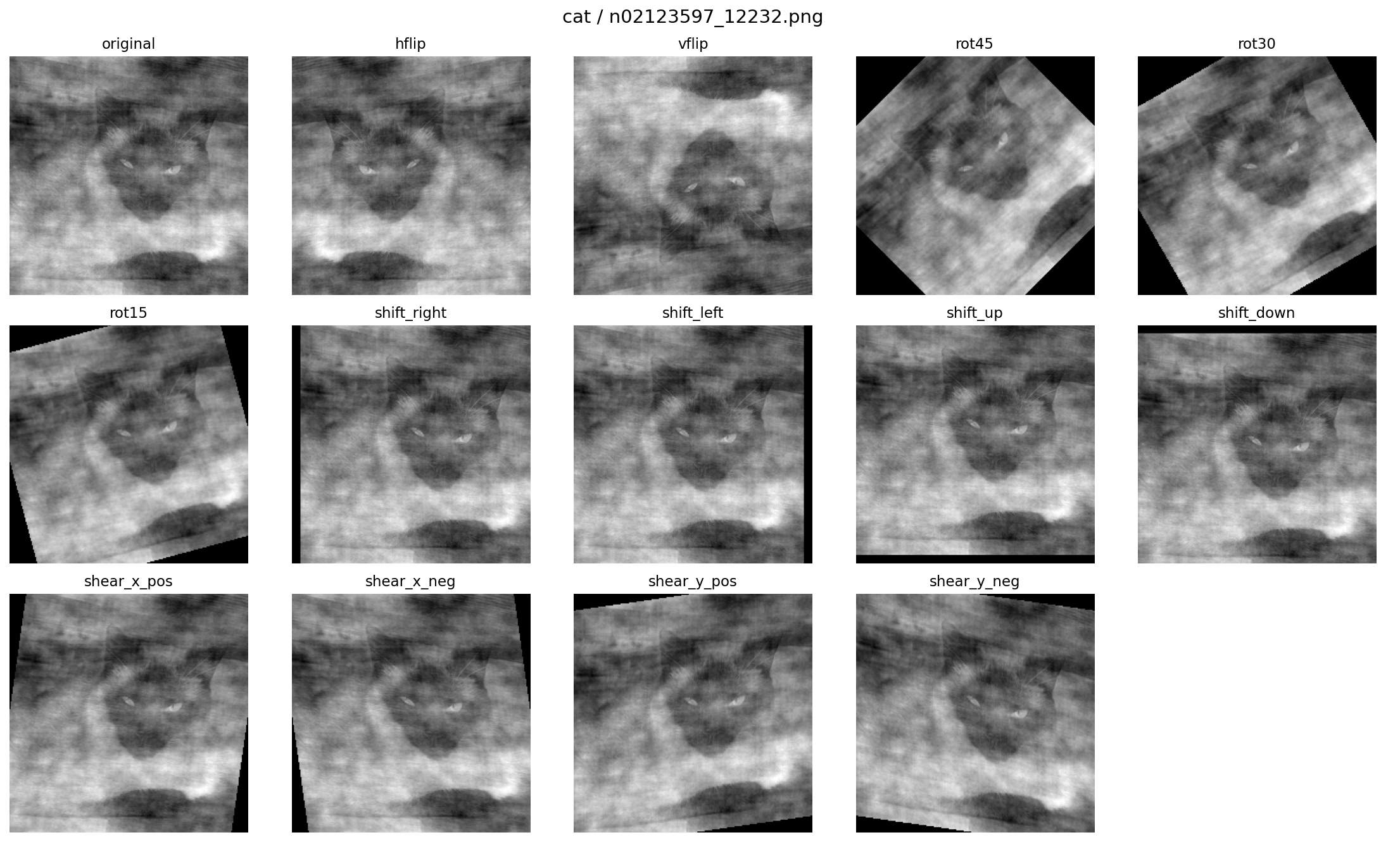}
    \caption{Geometric augmentations applied to the \Image{} images at training time,    shown here for one image of the \texttt{cat} category at the highest phase-noise level ($110$).    Each panel corresponds to one of the $14$ operations in the augmentation pool: the identity    (\emph{original}), horizontal and vertical flips, rotations of $15^\circ$, $30^\circ$ and    $45^\circ$, translations along the four directions, and positive/negative shears along the    $x$ and $y$ axes. }    
    \label{fig:cat_ex}
\end{figure}

\paragraph{Exchangeability across studies.}
Unlike \Email{}, where participants were randomly assigned to regimes within a single study, the two \Image{} regimes come from different studies. Both use the same $1200$ images at noise level $110$, but the protocols slightly differ beyond the availability of support: $D=0$ participants classified $200$ images each, with a confidence rating on every trial, from a pool spanning four noise levels~\citep{steyvers2022bayesian}, whereas $D=1$ participants were recruited on Prolific and answered multiple-choice questionnaires at noise level $110$ only~\citep{DBLP:conf/icml/StraitouriR24}. 

We assess the plausibility of the exchangeability assumption using a feature of the design by ~\citep{DBLP:conf/icml/StraitouriR24}. Their study includes trials in which the displayed prediction set contains the entire label space $\mathcal{Y}$. Because such a set provides no information about which label is more likely, performance on these trials provides a useful proxy for no-disclosure performance. Restricting the comparison to the same images, accuracy on full-set trials is $0.7713$, compared with $0.7647$ under $D=0$. The similarity of these accuracies provides descriptive evidence that the decision-maker populations in the two studies have comparable baseline performance.

We acknowledge that this comparison does not establish exchangeability, which cannot be determined from the observed data alone. Moreover, displaying a nondiscriminative prediction set is not identical to withholding support. Nevertheless, together with the shared task and image set, this evidence supports the plausibility of treating participants in the two studies as draws from a common target population $\pi$.

\paragraph{\Image{} Models.}
For the \Image{} dataset, all models operate directly on the (noisy) images. Every network uses the same feature extractor, an ImageNet-pretrained \textsc{ConvNeXt-Tiny} backbone~\citep{DBLP:journals/corr/abs-2201-03545} that maps each image to a $768$-dimensional embedding and is kept frozen during training, followed by an MLP head with an input $\mathrm{LayerNorm}$, two $\mathrm{GELU}$ hidden layers and dropout after each (halved on the second). The hidden widths are $128 \!\to\! 256$ for the label model $f$ (also used by \Confidence{}, and trained with label smoothing $0.05$), $256 \!\to\! 256$ for the error model $g_D$ of \ClasswiseRisk{}, and $128 \!\to\! 256$ for the \TLearner{} and \SLearner{} risk networks, the latter taking the regime indicator as an additional input. Losses and logit corrections are as in the synthetic setting. The label and error models are trained with batch size $64$, the \TLearner{} and \SLearner{} networks with batch size $128$.

For all four models, model selection is performed by grid search over the same hyperparameter space described above for \Email{} (epochs, optimizer, learning rate, dropout rate, and weight decay), tuning each model separately for every miscoverage level $\alpha \in \{0.01, 0.02, 0.05\}$ and retaining the configuration with the highest binary AUROC on the validation split. The resulting hyperparameters are reported in Table~\ref{tab:cp_hparams}.

\begin{table}
\centering
\caption{Selected hyperparameters for the \Image{} risk models, per conformal miscoverage level $\alpha$.}
\label{tab:cp_hparams}
\small
\begin{tabular}{llccccc}
\toprule
$\alpha$ & Model & Epochs & Optimizer & LR & Dropout & Weight decay \\
\midrule
\multirow{4}{*}{$0.01$}
 & \TLearner{}      & $25$  & \textsc{AdamW} & $2\times10^{-4}$ & $0.1$ & $10^{-3}$ \\
 & \ClasswiseRisk{} & $25$  & \textsc{AdamW} & $1\times10^{-4}$ & $0.3$ & $10^{-2}$ \\
 & \SLearner{}      & $25$  & \textsc{AdamW} & $2\times10^{-4}$ & $0.1$ & $10^{-3}$ \\
 & \Confidence{}    & $25$  & \textsc{Adam}  & $1\times10^{-4}$ & $0.3$ & $10^{-2}$ \\
\midrule
\multirow{4}{*}{$0.02$}
 & \TLearner{}      & $25$  & \textsc{AdamW} & $2\times10^{-4}$ & $0.1$ & $10^{-3}$ \\
 & \ClasswiseRisk{} & $25$  & \textsc{AdamW} & $1\times10^{-4}$ & $0.1$ & $10^{-3}$ \\
 & \SLearner{}      & $25$  & \textsc{Adam}  & $1\times10^{-4}$ & $0.3$ & $10^{-3}$ \\
 & \Confidence{}    & $25$  & \textsc{Adam}  & $1\times10^{-4}$ & $0.1$ & $10^{-3}$ \\
\midrule
\multirow{4}{*}{$0.05$}
 & \TLearner{}      & $25$  & \textsc{AdamW} & $1\times10^{-4}$ & $0.3$ & $10^{-3}$ \\
 & \ClasswiseRisk{} & $100$ & \textsc{AdamW} & $1\times10^{-4}$ & $0.3$ & $10^{-2}$ \\
 & \SLearner{}      & $100$ & \textsc{Adam}  & $1\times10^{-4}$ & $0.3$ & $10^{-2}$ \\
 & \Confidence{}    & $25$  & \textsc{Adam}  & $1\times10^{-4}$ & $0.1$ & $10^{-3}$ \\
\bottomrule
\end{tabular}
\end{table}

\section{User studies}
\label[appendix]{app:hum_exp}

\paragraph{Procedure.}

For both experiments, we recruit participants through Prolific and randomly assign them to one of the four experimental conditions (Human Policy/Low Budget, \emph{HP-LB}; Human Policy/High Budget, \emph{HP-HB}; Machine Policy/Low Budget, \emph{MP-LB}; Machine Policy/High Budget, \emph{MP-HB}).~\footnote{We did not include a random-disclosure condition because it does not address the primary question of these user studies: comparing the overall effectiveness of selective and human-determined disclosure, independently of the specific factors that may drive the results. A random condition would instead provide a diagnostic comparison of whether machine-selected items lead to better performance compared to an equal number of randomly selected items. We evaluate this selection question directly in Q1, where Random is matched to the learned policy at each budget. Accordingly, the user-study results compare the overall performance in MP and HP condition (without aiming to attribute any difference uniquely to specific factors such as item selection, disclosure frequency, or participant agency). Furthermore, crossing a Random condition with both budget levels would expand the design from four to six conditions, requiring substantially more participants to preserve per-condition statistical power.} First, we provide them with task instructions and have them complete two practice trials to familiarize themselves with the interface (see examples of the interfaces used in both experiments in \cref{fig:interfaces}). Both the instructions and the practice trials are tailored to the experimental condition to which participants are assigned. To incentivize attentive responding, we inform participants that the 5 most accurate participants would receive a bonus of £5.00. We also tell participants in the HP conditions that, in the case of a tie, participants who used fewer requests for assistance would be favored to receive the bonus. This is done to prompt participants to request assistance only when they deem it necessary, similarly to what occurs when the provision of assistance is determined by the \ClasswiseRisk{} policy in the MP conditions, which not necessarily exhaust the available budget.

Participants then proceed to the actual experiment, which consists of a total of 20 trials. In each trial, participants have to classify an item (an email in the \Email{} task or an image in the \Image{} one).~\footnote{Since we planned to recruit participants from the broader population on Prolific, it was important that the tasks did not require specialized knowledge. In this respect, both \Email{} and \Image{} tasks involve classification tasks that can be performed without domain-specific expertise.} If they are assigned to one of the HP conditions, they can request assistance from the AI system (provided that they have not exhausted the available requests); if they are assigned to one of the MP conditions, depending on the \ClasswiseRisk{} policy output, they are either provided with assistance from the AI system or informed that no assistance would be provided for that item.~\footnote{We note that, for both task, the AI advice provided is the same as the one presented in the annotation studies from which the test sets for the present work were derived.} After classifying the item, we also ask participants to indicate how confident they are that their answer is correct on a 7-point Likert scale ranging from 1 (“Not confident at all (guessing)”) to 7 (“Extremely confident”). No feedback is provided to participants on the accuracy of their responses. This both reflects realistic classification settings, in which the correctness of a classification may not be immediately verifiable, and limits learning across trials. Throughout the 20 trials, we include two attention checks and record the number of times participants switch away from the experiment browser tab. 

The 20 items presented to each participant are randomly sampled from each dataset's test set using stratified sampling. In both the HP and MP conditions, sampling preserves, as closely as possible, the proportions observed in the full test set along two characteristics: whether the selective disclosure method would provide support information for an item and whether such information is useful (i.e., whether, in the human-annotation studies used to construct the original datasets, participants classified that item more accurately with support information than without it). This procedure ensures that the samples presented to participants are as representative as possible of the original test set with respect to these characteristics. In the \Email{} task, we further stratified the sampling according to the ground truth of the emails to be classified, in order to avoid item samples in which the emails were predominantly fraudulent or legitimate. Indeed, presenting too many items of the same class to participants during classification tasks may cause them to respond incorrectly simply because a prolonged sequence of identical answers is perceived as unnatural, leading them to change an otherwise correct answer~\citep{pesenti2026samealgorithmichumanbias}. This risk is particularly relevant in binary classification tasks such as \Email{}, but substantially less so when multiple classes are present, as in \Image{}; we therefore do not apply this additional stratification to the latter. Finally, we specify that, within each experiment, we sample items from the same pool across all four experimental conditions; what differs between the LB and HB conditions is whether the method provides assistance for a given item, depending on the budget level. See \cref{tab:sampling_stratification} for details on the item-sampling stratification.~\footnote{We excluded some items from the \Email{} test set because they were near-duplicates of other emails, and from the \Image{} test set because they were potentially ambiguous (e.g., images with \emph{bottle} as the ground-truth class that actually depicted vases). This left 184 emails (original test-set size: 200) and 220 images (original test-set size: 241), from which the items presented to participants were sampled.}

\begin{table}[ht]
\centering
\caption{Stratification of the 20-item samples used in Experiments 1 and 2.}
\label{tab:sampling_stratification}
\small

\resizebox{\textwidth}{!}{%
\begin{tabular}{lllcc}
\toprule
& & & \multicolumn{2}{c}{\textbf{Budget}} \\
\cmidrule(lr){4-5}

\textbf{support information} &
\textbf{Usefulness of assistance} &
\textbf{Ground truth} &
\textbf{Low (30\%)} &
\textbf{High (70\%)} \\
\midrule

\multicolumn{5}{l}{\Image{}} \\[3pt]

\multicolumn{5}{l}{\textit{20-item sample characteristics}} \\[2pt]

Not provided & Not useful & --- & 12 (62.2\%) & 6 (29.0\%) \\
Provided     & Not useful & --- & 4 (24.9\%)  & 12 (58.1\%) \\
Not provided & Useful     & --- & 2 (4.6\%)   & 0 (0.4\%) \\
Provided     & Useful     & --- & 2 (8.3\%)   & 2 (12.4\%) \\[3pt]

\multicolumn{3}{l}{Items with support information in 20-item sample}
    & 6/20 (30\%) & 14/20 (70\%) \\[4pt]

\multicolumn{5}{l}{\textit{Original test-set characteristics}} \\[2pt]

\multicolumn{3}{l}{Test-set size}
    & \multicolumn{2}{c}{241} \\

\multicolumn{3}{l}{Policy provides assistance}
    & 80/241 (33\%) & 170/241 (71\%) \\

\midrule

\multicolumn{5}{l}{\Email{}} \\[3pt]

\multicolumn{5}{l}{\textit{20-item sample characteristics}} \\[2pt]

Not provided & Not useful & Legitimate & 6 (30.5\%) & 4 (20.0\%) \\
Provided     & Not useful & Legitimate & 2 (8.0\%)  & 4 (18.5\%) \\
Not provided & Useful     & Legitimate & 2 (13.0\%) & 1 (9.0\%) \\
Provided     & Useful     & Legitimate & 1 (5.5\%)  & 2 (9.5\%) \\
Not provided & Not useful & Fraudulent & 5 (23.0\%) & 4 (18.5\%) \\
Provided     & Not useful & Fraudulent & 1 (6.5\%)  & 2 (11.0\%) \\
Not provided & Useful     & Fraudulent & 2 (9.5\%)  & 2 (6.5\%) \\
Provided     & Useful     & Fraudulent & 1 (4.0\%)  & 1 (7.0\%) \\[3pt]

\multicolumn{3}{l}{Items with support information in 20-item sample}
    & 5/20 (25\%) & 9/20 (45\%) \\

\multicolumn{3}{l}{Legitimate emails in 20-item sample}
    & 11/20 (55\%) & 11/20 (55\%) \\[4pt]

\multicolumn{5}{l}{\textit{Original test-set characteristics}} \\[2pt]

\multicolumn{3}{l}{Test-set size}
    & \multicolumn{2}{c}{200} \\

\multicolumn{3}{l}{Policy provides assistance}
    & 48/200 (24\%) & 92/200 (46\%) \\

\multicolumn{3}{l}{Legitimate emails}
    & 114/200 (57\%) & 114/200 (57\%) \\

\bottomrule
\end{tabular}%
}

\vspace{6pt}

\begin{minipage}{0.96\linewidth}
~\footnotesize
\textit{Note.} For the 20-item sample characteristics, values outside parentheses
indicate the number of items included in the sample, while percentages in
parentheses indicate the corresponding proportion of items in the original
test set.
\end{minipage}
\end{table}

After having completed all 20 trials, we ask participants to complete a trust scale from~\citep{DBLP:journals/fcomp/HoffmanMKL23}, consisting of eight 5-point Likert scale items (extremes: I strongly disagree; I strongly agree). The items, presented in randomized order, were the following:

\begin{itemize}
    \item I am confident in the system. I feel that it works well.
    \item The outputs of the system are very predictable.
    \item The system is very reliable. I can count on it to be correct all the time.
    \item I feel safe that when I rely on the system I will get the right answers.
    \item The system is efficient in that it works very quickly.
    \item I am wary of the system.
    \item The system can perform the task better than a novice human user.
    \item I like using the system for decision making.
\end{itemize}

Finally, we ask participants to report their familiarity with AI systems by selecting one of the following options:

\begin{itemize}
    \item \textbf{Option 1}: I have little or no experience with AI systems and limited or no understanding of how they work.
    \item \textbf{Option 2}: I use AI systems occasionally but have not a clear understanding of how they function.
    \item \textbf{Option 3}: I use AI systems and have studied how they work (e.g., through courses, online classes, or self‑study).
    \item \textbf{Option 4}: I develop or build AI systems as part of my work or personal projects.
\end{itemize}

\begin{figure}[ht]
    \centering
    \includegraphics[width=0.855\linewidth]{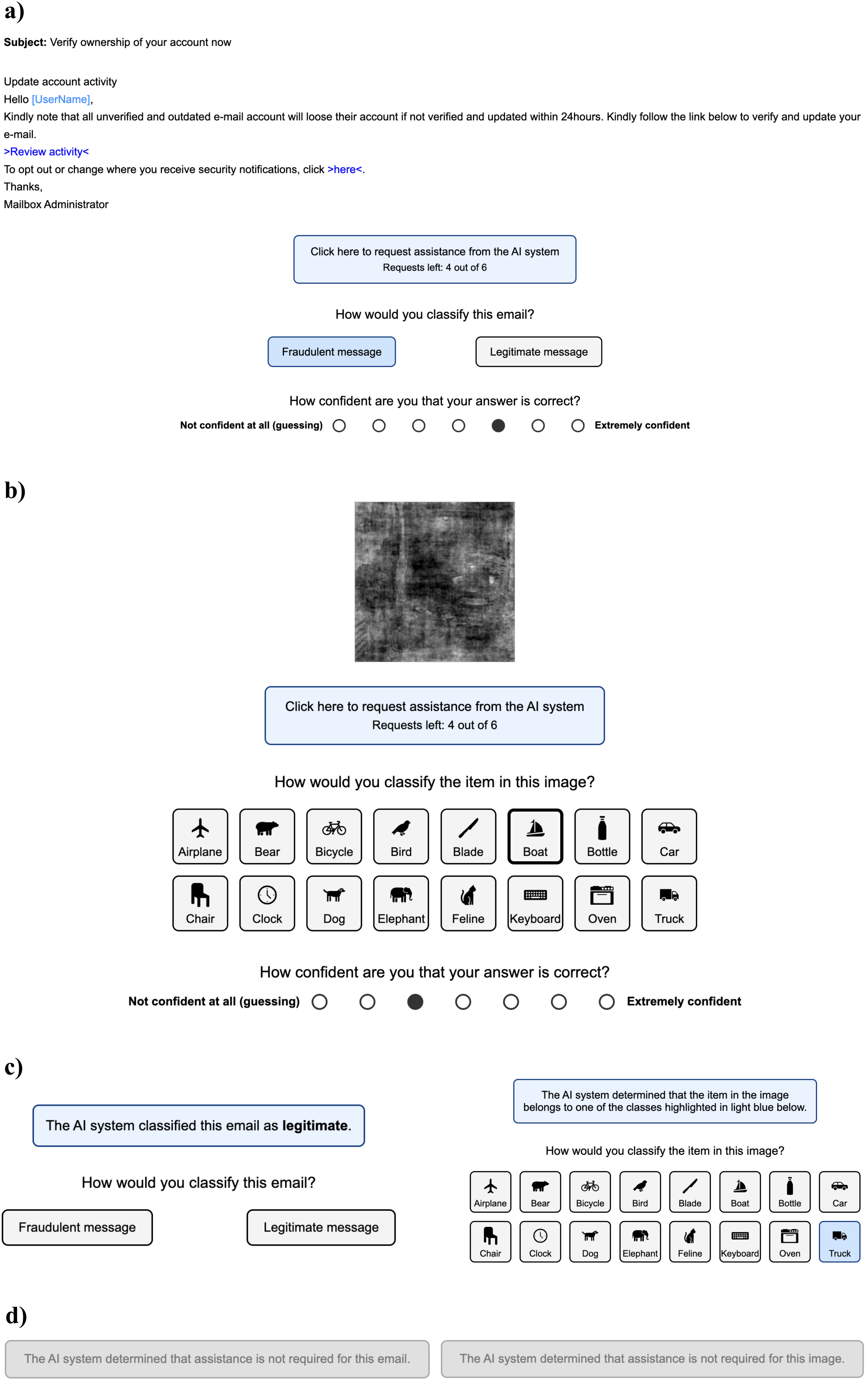}
    \caption{\textit{(a)} and \textit{(b)}: Interfaces presented to participants in the HP conditions in the \Email{} and \Image{} tasks, respectively; \textit{(c)}: Messages shown when information was requested (HP) or provided by the \ClasswiseRisk{} policy (MP); \textit{(d)}: Messages shown when, in MP, the \ClasswiseRisk{} policy determined information should not be provided for that item.}
    \label{fig:interfaces}
\end{figure}

All datasets generated from the human-subject experiments and analysis scripts can be found in the \href{https://anonymous.4open.science/r/submissionICLR2027-4163}{anomymous repo.} 

\paragraph{Participants recruitment and samples characteristics}
We conducted an a priori power analysis using a simulation-based approach~\citep{green2016simr, kumle2021estimating} to estimate the sample size to collect. The analysis indicated that a total sample of 268 participants would have provided 82\% statistical power to detect an effect of the two-way interaction between \emph{support} and \emph{budget level} conditions as small as $OR = 0.43$.~\footnote{As detailed later, the majority of the statistical analyses conducted were logistic regressions. Thus, odds ratios correspond to the exponentiated regression coefficients, $exp(b)$.} Accordingly, participants were recruited in batches through Prolific. After each batch, we assessed only whether participants met predetermined exclusion criteria: having passed both attention checks and not having left the browser tab in three or more trials. No further analyses of the outcome variables or effect sizes were performed during data collection. Recruitment continued until the final sample comprised a total of at least 268 participants. Inclusion criteria required participants to be native English speakers from the UK and to have a Prolific approval rate above 98\%. Participants received compensation equal to £1.30. 

For the \Email{} and \Image{} studies we recruited, respectively, a total of 288 and 278 participants, of which 272 and 273 were included in the final sample. In both studies, participants samples were balanced in terms of age, sex, and previous experience with AI across all experimental conditions (see \cref{tab:participant_characteristics}).  

\begin{table}[ht]
\centering
\caption{Participant characteristics by experimental condition.}
\label{tab:participant_characteristics}
\small

\resizebox{\textwidth}{!}{%
\begin{tabular}{lccc|cccc}
\toprule
& & & &
\multicolumn{4}{c}{\textbf{Past experience with AI}} \\
\cmidrule(lr){5-8}

\textbf{Experimental condition} &
\textbf{\textit{n} participants} &
\textbf{Age} &
\textbf{\% Female} &
\textbf{Option 1} &
\textbf{Option 2} &
\textbf{Option 3} &
\textbf{Option 4} \\
\midrule

\multicolumn{8}{l}{\textit{\Image{}}} \\[2pt]

Human Policy/Low Budget
    & 66 & $43.21 \pm 14.05$ & 52\% & 5\% & 67\% & 29\% & 0\% \\

Human Policy/High Budget
    & 69 & $42.06 \pm 12.39$ & 48\% & 3\% & 46\% & 49\% & 1\% \\

Machine Policy/Low Budget
    & 69 & $42.30 \pm 12.23$ & 48\% & 6\% & 54\% & 41\% & 0\% \\

Machine Policy/High Budget
    & 69 & $41.71 \pm 13.52$ & 42\% & 1\% & 65\% & 28\% & 6\% \\

\midrule

\multicolumn{8}{l}{\textit{\Email{}}} \\[2pt]

Human Policy/Low Budget
    & 72 & $44.58 \pm 15.63$ & 51\% & 4\% & 53\% & 37\% & 6\% \\

Human Policy/High Budget
    & 64 & $39.33 \pm 11.72$ & 48\% & 0\% & 64\% & 34\% & 2\% \\

Machine Policy/Low Budget
    & 71 & $41.87 \pm 13.15$ & 55\% & 4\% & 62\% & 31\% & 3\% \\

Machine Policy/High Budget
    & 65 & $41.98 \pm 13.07$ & 55\% & 3\% & 49\% & 48\% & 0\% \\

\bottomrule
\end{tabular}%
}

\end{table}

\section{Extended results}
\label[appendix]{app-subsec:extendedHres}
In this section, we detail the results obtained for both the analyses presented in the main text and additional ones we present here. We organize the analyses around the three research questions of the main text, together with a fourth one that we address here:

\begin{itemize}

  \item[\textbf{Q1}]
  Does our approach learn effective policies?
  \item[\textbf{Q2}] 
  Does our learned policy improve human-AI team performance?
  \item[\textbf{Q3}] 
  How do participants interact with our policy?
  \item[\textbf{Q4}]
  How does changing the information content affect our policy?
\end{itemize}

For questions related to the user studies (Q2 and Q3), we report the full statistical results.
Many of the analyses we performed for such questions involved fitting logistic mixed-effects regression models. These models extend standard regression by including random effects that account for the non-independence of repeated observations (e.g., multiple responses from the same participant or to the same item), thereby yielding valid inferences despite correlated observations (for an overview, see \citealp{brown2021introduction}). For all mixed-effects models reported, we include random intercepts for participants and items. 

In these regressions, categorical predictors are deviation-coded as $+0.5/-0.5$ (the level coded as $+0.5$ is reported in brackets in the relevant results tables). This coding allows coefficients to be interpreted as comparisons averaged across the levels of the other categorical predictors. When we perform pairwise post-hoc comparisons, we apply Bonferroni corrections to the \textit{p} values to control the family-wise error rate at a significance level of .05 (we report corrected \textit{p} values throughout). 

\subsection{Q1: Does our approach learn effective policies?} 
\label[appendix]{app:q1}

\textbf{Take-home message}: \ClasswiseRisk{} ranks instances better and abstains more selectively than standard meta-learners, and its ability to abstain matters most when disclosure is harmful on a substantial fraction of the instances.

In the main paper, \ClasswiseRisk{} outperforms the baselines across all datasets (\cref{fig:results}). Here we investigate \emph{why}, through two complementary analyses. First, we isolate the contribution of the class-wise decomposition by replacing it with two standard meta-learners while keeping the disclosure rule fixed: any difference in accuracy can then only come from the quality of the VoI estimates. Second, we look at \emph{how} the policies spend their budget: since the policy never discloses on instances with non-positive estimated VoI, the fraction of instances on which it actually requests help reveals whether an estimator can identify where disclosure does not help.

\paragraph{Ablation on the risk estimator.}

We compare our \ClasswiseRisk{} estimator against two standard meta-learners for conditional treatment effects, the \TLearner{} and the \SLearner{}~\citep{kunzel2019metalearners}. We do not include doubly robust learners~\citep{10.1214/23-EJS2157} or causal forests~\citep{Wager03072018}. DR learners rely on cross-fitted nuisance models, which would further split our small real datasets, while their correction matters little here since disclosure is assigned independently of the covariates. Causal forests, on the other hand, struggle with the high-dimensional representations of the unstructured inputs (text and images) of our real datasets.
\ClasswiseRisk{} estimates $\VoI$ by first predicting the label and then the class-conditional probability of a human error (\cref{prop:class-wise}). Here we ask how much of its performance is due to this decomposition, as opposed to the disclosure rule itself. We therefore keep the policy, the calibration procedure and the budget grid fixed, and vary only how the two regime-specific risks are estimated. All estimators are tuned over the same hyperparameter grid.

The \TLearner{} fits one scalar risk network per regime directly on the error indicator, $\hat r_d(\mbx) \approx \sP(A(d) \neq Y \mid X = \mbx)$, trained on the targets $\mathbbm{1}\{A(d) \neq Y\}$, and sets $\hVoI = \hat r_0(\mbx) - \hat r_1(\mbx)$. \ClasswiseRisk{} is itself a T-learner; the \TLearner{} considered here differs only in modelling each risk with a single scalar output rather than through the class-wise decomposition. Since the two regimes get independent functions, the risk curves are free to take different shapes and $\hVoI$ is not systematically biased towards zero, at the cost of a higher variance, as it is the difference of two independently estimated quantities.

The \SLearner{} fits a single network that takes the regime as an additional input feature, $\hat r(\mbx, d) \approx \sP(A(d) \neq Y \mid X = \mbx)$, trained on the pooled data of both regimes, and sets $\hVoI = \hat r(\mbx, 0) - \hat r(\mbx, 1)$. All parameters are shared across regimes, which reduces variance but tends to shrink the effect of $d$, and hence $\hVoI$, towards zero.
\begin{figure}[t]
    \centering
    \includegraphics[width=\linewidth]{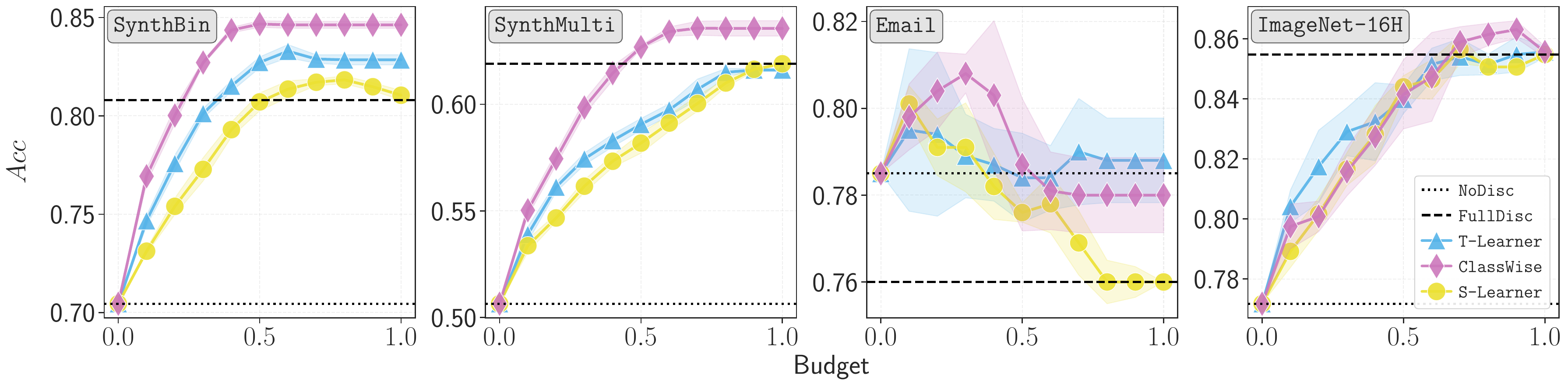}
    \caption{Budget-accuracy curves for VoI-based disclosure policies on \Synth{} (\SynthB{}, \SynthM{}) and real (\Email{}, \Image{}) data. We report average mean values for $5$ seeds and $95\%$ confidence intervals. }
    \label{appfig:results_ablations}
\end{figure}
\cref{appfig:results_ablations} reports the budget--accuracy curves. 

On \Synth{} data the ordering is stable across budgets: \ClasswiseRisk{} $>$ \TLearner{} $>$ \SLearner{} at every $B$, in both the \SynthB{} and the \SynthM{} setting. The gap is substantial and does not close with the budget: in the \SynthB{} case \ClasswiseRisk{} plateaus at $\approx.85$ against $\approx.83$ for \TLearner{} and $\approx.81$ for \SLearner{}; in the \SynthM{} case both meta-learners converge to full-disclosure accuracy ($\approx.62$) without ever exceeding it, whereas \ClasswiseRisk{} exceeds it from $B\approx.4$. 
Since the disclosure rule is identical, the difference is entirely attributable to the quality of the $\VoI$ estimate, both its ranking and its sign. The \SynthM{} panel is where the class-wise decomposition pays off most: disclosure affects the $|\mcY|$ classes heterogeneously, and while a scalar risk model can in principle represent this heterogeneity, the decomposition makes it explicit, so that each head only has to learn the error pattern of a single class.

On \Email{}, where disclosure is harmful on average, \ClasswiseRisk{} attains the highest accuracy overall ($\approx.81$ at $B=.30$) but \TLearner{} is better at large budgets ($\approx.79$ against $\approx.78$ for $B\geq.70$). \SLearner{}, instead, degrades monotonically after $B=.10$ and coincides with full disclosure from $B=.80$ onwards. This is the shrinkage of the \SLearner{} made visible: once $\hVoI$ is compressed towards zero it loses the information about its sign and ends up almost uniformly positive, so the non-negativity constraint of \cref{thm:optimal_disclosure} stops binding and the policy loses the ability to abstain precisely in the regime where abstention is the whole point.

On \Image{}, where disclosure helps almost everywhere, the picture reverses: abstention is nearly irrelevant, all three estimators improve steadily, and \SLearner{} is no longer penalised, tracking \ClasswiseRisk{} closely up to $B=.60$. Here \TLearner{} leads at low budgets (e.g.\ $\approx.82$ against $\approx.81$ at $B=.20$) while \ClasswiseRisk{} takes over from $B\approx.60$ and peaks at $\approx.86$ at $B=.90$, above every other curve and above full disclosure ($\approx.85$).

\paragraph{Help-request frequency and quality of the risk estimates.}
\label[appendix]{app:help_freq}

A useful diagnostic of a disclosure policy is not only \emph{how much} accuracy it attains, but \emph{how much of the budget it actually spends} to attain it. By \cref{thm:optimal_disclosure}, the policy never discloses on instances with non-positive estimated VoI. Once the budget exceeds the fraction of instances with $\hVoI > 0$, additional budget triggers no further disclosures, and the fraction of instances on which the policy requests help plateaus below the nominal budget. The height of this plateau measures the size of the positive-VoI set identified by each estimator, i.e., the fraction of instances it deems worth disclosing. \cref{fig:cp_freq_budget} reports this fraction as a function of the budget for all datasets.

The \SLearner{} stays on the diagonal $\text{spent} = \text{budget}$ in every setting: $\hVoI \ge 0$ on essentially every instance, so the sign constraint of \cref{thm:optimal_disclosure} never binds and the policy never abstains. By sharing all parameters across regimes, the \SLearner{} compresses $\hVoI$ towards a small, almost uniformly positive value and loses the information about its sign. The comparison of interest is therefore between \TLearner{} and \ClasswiseRisk{}, which both abstain, but to different extents depending on the dataset.

On \SynthB{} and \SynthM{}, \ClasswiseRisk{} abstains earlier and more often than \TLearner{}. \ClasswiseRisk{} plateau is at about $54\%$ of the instances in \SynthB{} and $71\%$ in \SynthM{}, against $69\%$ and $83\%$ for \TLearner{}. Since \ClasswiseRisk{} also attains the highest accuracy at every budget (\cref{fig:results}, \cref{appfig:results_ablations}), its smaller positive-VoI set does not come from missing useful disclosures, but from correctly withholding the support information where it would not help.

On \Email{}, disclosure is harmful on average, and both estimators abstain on a large fraction of instances: \TLearner{} plateau is $\approx 66\%$, \ClasswiseRisk{} is $\approx 53\%$. The more parsimonious policy of \ClasswiseRisk{} attains the highest accuracy overall, at intermediate budgets (\cref{fig:results}, \cref{appfig:results_ablations}). At large budgets, the additional disclosures of \TLearner{} yield a slightly higher accuracy, suggesting that \ClasswiseRisk{} withholds a small set of instances on which disclosure would still have helped.

On \Image{} disclosure is beneficial for almost every instance, and both estimators disclose almost up to the full budget: at $B = 1$, \ClasswiseRisk{} discloses to $98\%$ of the instances and \TLearner{} to nearly all of them. With few instances to withhold, the two estimators differ only in the \emph{ordering} of the instances, not in the size of the positive-VoI set.

\begin{figure}
    \centering
    \includegraphics[width=\linewidth]{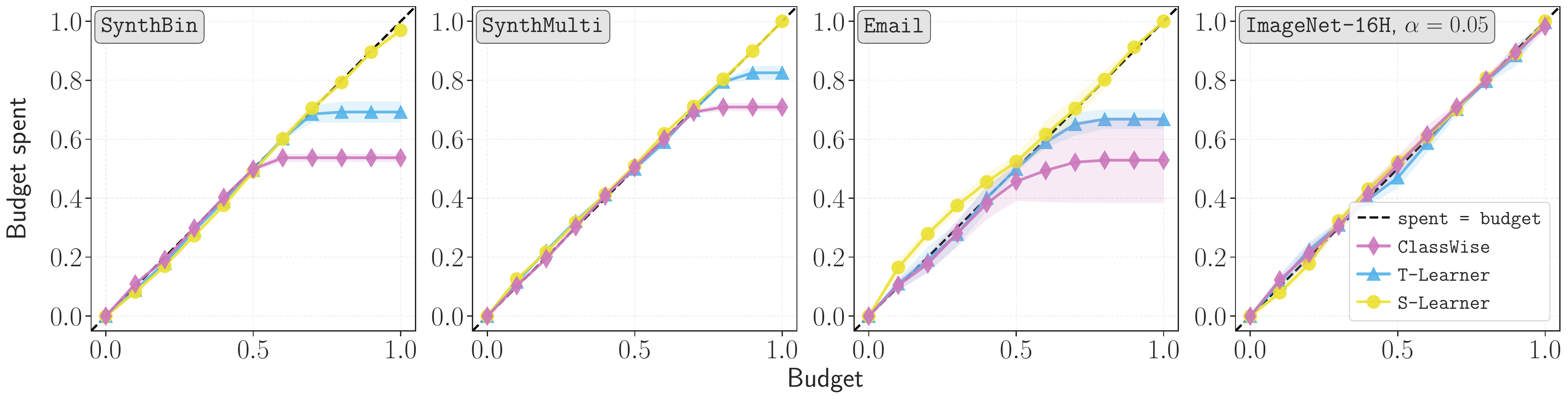}
    \caption{Fraction of instances on which the policy requests help, as a function of the nominal budget, for \SLearner{}, \TLearner{} and \ClasswiseRisk{} (mean over $5$ seeds, with $95\%$ confidence intervals). The dashed line is $\text{spent} = \text{budget}$. A curve plateaus below the diagonal when the estimated positive-VoI set is smaller than the budget, and the height of the plateau is the fraction of instances the estimator deems worth disclosing.}
    \label{fig:cp_freq_budget}
\end{figure}

\subsection{Q2: Does our learned policy improve human-AI team performance?}
\label[appendix]{app:q2}

\textbf{Take-home message}: The learned policy matches or outperforms self-selection and is associated with greater confidence discrimination between correct and incorrect classifications. Furthermore, we note that human-selected disclosure only partially align with \ClasswiseRisk{}-selected ones. 

In the main paper, we assess differences in overall accuracy across the experimental conditions. Here, in addition to reporting the full results of that analysis, we examine participants' accuracy as a function of the correctness of the ML advice provided as auxiliary information, providing a more detailed picture of how differences in adherence may contribute to differences in overall accuracy, particularly in the \Email{} task. Furthermore, we assess the effects of \textit{Support} and \textit{Budget} on participants' confidence in their own classifications, exploring the potential benefits of a selective disclosure approach beyond classification accuracy alone. Finally, to provide a more complete comparison between \ClasswiseRisk{}-selected and human-selected disclosure, we examine alignment between \ClasswiseRisk{}-selected and human-selected disclosure.

\paragraph{Overall accuracy.} We assess whether overall classification accuracy differs across the four experimental conditions by fitting a logistic mixed-effects model predicting the correctness of participants' classifications from \textit{Support}, \textit{Budget}, and their interaction. This analysis was presented in the main text and will not be further discussed, but see \cref{tab:overall-accuracy} for the full results.

\begin{table}[!ht]
\centering
~\footnotesize
\setlength{\tabcolsep}{4pt}
\renewcommand{\arraystretch}{1.0}

\begin{tabular}{llc@{\hspace{10pt}}c}
\toprule
\multicolumn{4}{c}{\textbf{Descriptive statistics}} \\
\midrule
&
&
\multicolumn{1}{c}{\textit{\Image{}}}
&
\multicolumn{1}{c}{\textit{\Email{}}} \\
\cmidrule(lr){3-3}
\cmidrule(lr){4-4}

\multicolumn{1}{c}{Support condition}
&
\multicolumn{1}{c}{Budget condition}
&
\multicolumn{1}{c}{Accuracy}
&
\multicolumn{1}{c}{Accuracy} \\
\midrule

Human Policy
& Low Budget
& $0.73 \pm 0.15$
& $0.82 \pm 0.13$ \\

Human Policy
& High Budget
& $0.81 \pm 0.11$
& $0.84 \pm 0.13$ \\

Machine Policy
& Low Budget
& $0.75 \pm 0.11$
& $0.81 \pm 0.12$ \\

Machine Policy
& High Budget
& $0.84 \pm 0.09$
& $0.84 \pm 0.12$ \\

\bottomrule
\end{tabular}

\vspace{0.75em}

\begin{tabular}{lcccc@{\hspace{10pt}}cccc}
\toprule
\multicolumn{9}{c}{\textbf{Regression coefficients}} \\
\midrule
&
\multicolumn{4}{c}{\textit{\Image{}}}
&
\multicolumn{4}{c}{\textit{\Email{}}} \\
\cmidrule(lr){2-5}
\cmidrule(lr){6-9}

\multicolumn{1}{c}{Fixed effect}
&
\multicolumn{1}{c}{OR}
&
\multicolumn{1}{c}{95\% CI}
&
\multicolumn{1}{c}{\textit{z}}
&
\multicolumn{1}{c}{\textit{p}}
&
\multicolumn{1}{c}{OR}
&
\multicolumn{1}{c}{95\% CI}
&
\multicolumn{1}{c}{\textit{z}}
&
\multicolumn{1}{c}{\textit{p}} \\
\midrule

Intercept
& 10.99
& [7.72, 15.64]
& 13.31
& $< .001$
& 7.66
& [6.24, 9.39]
& 19.50
& $< .001$ \\

Support (HP)
& 0.76
& [0.60, 0.97]
& -2.25
& .025
& 1.04
& [0.82, 1.31]
& 0.30
& .762 \\

Budget (LB)
& 0.51
& [0.40, 0.65]
& -5.45
& $< .001$
& 0.79
& [0.63, 1.01]
& -1.91
& .056 \\

Support $\times$ Budget
& 1.15
& [0.71, 1.87]
& 0.57
& .566
& 1.04
& [0.65, 1.66]
& 0.15
& .880 \\

\bottomrule
\end{tabular}

\caption{Descriptive statistics (means and standard deviations) and results of the logistic mixed-effects regressions predicting classification accuracy from \textit{Support}, \textit{Budget}, and their interaction (OR: odds ratio).}
\label{tab:overall-accuracy}

\end{table}

\paragraph{Classification accuracy by correctness of support information.}
We explore how participants' accuracy varies as a function of the correctness of the support information provided. In the \Email{} task, correctness was determined by whether the suggested class corresponded to the ground-truth label; in the \Image{} task, it was determined by whether the ground-truth label was included in the prediction set. We focus on trials in which support information was available during classification and, to improve comparability between the MP and HP conditions, consider only items for which the policy would provide support information. Because the ML model prediction constituting the support information was generally accurate, trials in which it was incorrect were relatively rare ($75/1,227$ observations in the \Email{} study and $86/1,827$ in the \Image{} study). The subgroups pertaining to wrong-advice instances thus present few observations (as evident from the large error bars characterizing relative to such subgroups in \cref{fig:mod_ans_correct}), making inferential statistics potentially unreliable; we therefore limit our discussion to the descriptive patterns.

As shown in \cref{fig:mod_ans_correct}, the two studies exhibit different patterns. In the \Image{} study, accuracy in MP relative to HP tends to be higher when the support information is correct and approximately comparable when it is incorrect. In the \Email{} study, the opposite descriptive pattern emerges: accuracy in MP is, on average, lower than in HP when the support information is correct and higher when it is incorrect. These patterns broadly correspond to the differences in adherence between MP and HP observed across the two studies (discussed in the main text and in the following section), with the relative adherence on policy-provided information being higher in the \Image{} study than in the \Email{} study.

Furthermore, this pattern may help explain why a significant difference in accuracy between MP and HP was observed in the \Image{} study but not in the \Email{} study. Because the support information was correct on most trials, even a modest reduction in participants' tendency to follow it could result in an appreciable decrease in overall accuracy.

\begin{figure}
    \includegraphics[width=\linewidth]{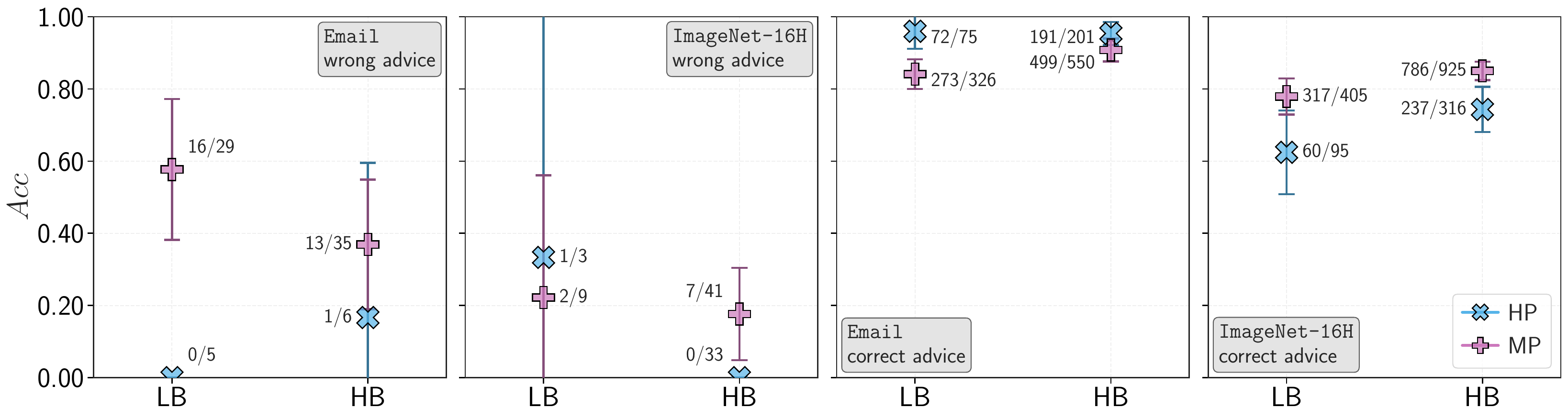}
    \caption{Participants' average accuracy by experimental condition and correctness of support information. Ratios indicate the number of correct participant responses out of the total number of observations in each subgroup. }
    \label{fig:mod_ans_correct}
\end{figure}

\paragraph{Human confidence.}
We explore participants' confidence in the correctness of their final classifications using a linear mixed-effects model in which confidence on each trial is predicted by \textit{Support}, \textit{Budget}, the presence of support information, participants' classification accuracy, and all their interactions (see \cref{tab:confidence} for the full results). This analysis allows us to examine not only whether confidence differs across experimental conditions, but also whether such differences depend on whether participants' classifications are correct. In particular, greater confidence for correct than incorrect classifications can be interpreted as greater confidence discrimination.

In the \Image{} study, we observe a significant interaction between \textit{Support}, presence of support information, and participants' accuracy ($p = .031$). When support information was not received, the difference in confidence between correctly and incorrectly classified items was significantly greater in MP than in HP (MP: $2.99 \pm 1.57$; HP: $1.91 \pm 1.29$; $p < .001$). When support information was received, the corresponding difference was descriptively greater in MP than in HP, although this contrast did not reach significance after correction for multiple comparisons (MP: $2.06 \pm 1.41$; HP: $1.50 \pm 1.23$; $p = .088$).

We also observe a significant interaction between \textit{Budget}, presence of support information, and participants' accuracy ($p = .003$). When support information was received, the difference in confidence between correct and incorrect classifications was significantly greater in HB than in LB (HB: $1.95 \pm 1.37$; LB: $1.61 \pm 1.32$; $p = .043$). In contrast, when support information was not received, the corresponding difference did not significantly differ between HB and LB (HB: $2.09 \pm 1.62$; LB: $2.62 \pm 1.42$; $p = .110$). One possible explanation is that, under the higher budget, support information was provided for a broader range of items, potentially including easier ones.

Instead, in the \Email{} study, the only significant interaction involving participants' accuracy (and thus directly relevant to confidence discrimination) was the interaction with \textit{Support} ($p = .005$). Specifically, the difference in confidence between correctly and incorrectly classified items was significantly greater in MP than in HP (MP: $0.71 \pm 0.86$; HP: $0.43 \pm 0.89$; $p = .005$).

Overall, across the two studies, participants in MP tended to show greater confidence discrimination than those in HP, assigning relatively higher confidence to correct than to incorrect classifications. Notably, this pattern was also observed in the \Email{} study, in which the selective disclosure method did not significantly increase overall classification accuracy.

\begin{table}[!ht]
\centering
~\footnotesize
\setlength{\tabcolsep}{2.5pt}
\renewcommand{\arraystretch}{0.95}

\begin{tabular}{lrrrr@{\hspace{10pt}}rrrr}
\toprule
\multicolumn{7}{c}{\textbf{Descriptive statistics}} \\
\midrule
&
&
&
\multicolumn{2}{c}{\textit{\Image{}}}
&
\multicolumn{2}{c}{\textit{\Email{}}} \\
\cmidrule(lr){4-5}
\cmidrule(lr){6-7}

\multicolumn{1}{c}{Support}
&
\multicolumn{1}{c}{Budget}
&
\multicolumn{1}{c}{Information}
&
\multicolumn{1}{c}{Incorrect}
&
\multicolumn{1}{c}{Correct}
&
\multicolumn{1}{c}{Incorrect}
&
\multicolumn{1}{c}{Correct} \\
\midrule

Human Policy
& Low Budget
& Not received
& $4.22 \pm 1.45$
& $6.30 \pm 0.67$
& $5.20 \pm 1.29$
& $5.56 \pm 0.99$ \\

Human Policy
& Low Budget
& Received
& $3.24 \pm 1.49$
& $4.58 \pm 1.45$
& $5.32 \pm 1.30$
& $5.67 \pm 0.98$ \\

Human Policy
& High Budget
& Not received
& $4.74 \pm 1.60$
& $6.45 \pm 0.50$
& $5.24 \pm 1.28$
& $5.69 \pm 0.89$ \\

Human Policy
& High Budget
& Received
& $3.08 \pm 1.46$
& $4.75 \pm 1.48$
& $5.00 \pm 1.31$
& $5.42 \pm 1.19$ \\

Machine Policy
& Low Budget
& Not received
& $3.11 \pm 1.45$
& $6.22 \pm 0.67$
& $4.82 \pm 1.28$
& $5.59 \pm 0.87$ \\

Machine Policy
& Low Budget
& Received
& $3.68 \pm 1.68$
& $5.58 \pm 1.08$
& $5.11 \pm 1.26$
& $5.52 \pm 0.93$ \\

Machine Policy
& High Budget
& Not received
& $3.84 \pm 1.96$
& $6.63 \pm 0.42$
& $4.74 \pm 1.16$
& $5.52 \pm 0.78$ \\

Machine Policy
& High Budget
& Received
& $3.61 \pm 1.59$
& $5.79 \pm 0.87$
& $4.77 \pm 1.36$
& $5.62 \pm 0.95$ \\

\bottomrule
\end{tabular}

\vspace{0.5em}

\begin{tabular}{lrrrr@{\hspace{10pt}}rrrr}
\toprule
\multicolumn{9}{c}{\textbf{Regression coefficients}} \\
\midrule

&
\multicolumn{4}{c}{\textit{\Image{}}}
&
\multicolumn{4}{c}{\textit{\Email{}}} \\
\cmidrule(lr){2-5}
\cmidrule(lr){6-9}

\multicolumn{1}{c}{Fixed effect}
&
\multicolumn{1}{c}{$b$}
& \multicolumn{1}{c}{SE}
& \multicolumn{1}{c}{$t$}
& \multicolumn{1}{c}{$p$}
&
\multicolumn{1}{c}{$b$}
& \multicolumn{1}{c}{SE}
& \multicolumn{1}{c}{$t$}
& \multicolumn{1}{c}{$p$} \\
\midrule

Intercept
& 4.504 & .096 & 46.79 & $<.001$
& 5.168 & .069 & 75.19 & $<.001$ \\

Support (HP)
& .732 & .150 & 4.89 & $<.001$
& .440 & .134 & 3.30 & .001 \\

Budget (LB)
& -.383 & .150 & -2.56 & .011
& .064 & .133 & .48 & .632 \\

Information received
& -.062 & .093 & -.67 & .503
& -.099 & .082 & -1.22 & .223 \\

Accuracy
& 1.480 & .075 & 19.81 & $<.001$
& .410 & .047 & 8.66 & $<.001$ \\

Support $\times$ Budget
& .192 & .297 & .65 & .518
& -.134 & .266 & -.50 & .615 \\

Support $\times$ Information
& -.989 & .178 & -5.56 & $<.001$
& -.403 & .165 & -2.44 & .015 \\

Budget $\times$ Information
& .544 & .175 & 3.10 & .002
& .138 & .161 & .86 & .391 \\

Support $\times$ Accuracy
& -.716 & .136 & -5.27 & $<.001$
& -.362 & .093 & -3.91 & $<.001$ \\

Budget $\times$ Accuracy
& .261 & .136 & 1.92 & .055
& -.087 & .092 & -.95 & .344 \\

Information $\times$ Accuracy
& -.438 & .103 & -4.24 & $<.001$
& .083 & .089 & .94 & .349 \\

Support $\times$ Budget $\times$ Information
& -.122 & .348 & -.35 & .727
& .449 & .320 & 1.40 & .160 \\

Support $\times$ Budget $\times$ Accuracy
& -.049 & .270 & -.18 & .856
& -.066 & .184 & -.36 & .721 \\

Support $\times$ Information $\times$ Accuracy
& .428 & .199 & 2.15 & .031
& .225 & .178 & 1.26 & .207 \\

Budget $\times$ Information $\times$ Accuracy
& -.587 & .197 & -2.98 & .003
& -.065 & .176 & -.37 & .710 \\

Support $\times$ Budget $\times$ Information $\times$ Accuracy
& -.180 & .390 & -.46 & .646
& -.032 & .350 & -.09 & .928 \\

\bottomrule
\end{tabular}

\vspace{0.5em}

\begin{tabular}{lrrrr@{\hspace{10pt}}rrrr}
\toprule
\multicolumn{9}{c}{\textbf{Post-hoc contrasts}} \\
\midrule

&
\multicolumn{4}{c}{\textit{\Image{}}}
&
\multicolumn{4}{c}{\textit{\Email{}}} \\
\cmidrule(lr){2-5}
\cmidrule(lr){6-9}

\multicolumn{1}{c}{Contrast}
&
\multicolumn{1}{c}{Estimate}
& \multicolumn{1}{c}{SE}
& \multicolumn{1}{c}{$z$}
& \multicolumn{1}{c}{$p$}
&
\multicolumn{1}{c}{Estimate}
& \multicolumn{1}{c}{SE}
& \multicolumn{1}{c}{$z$}
& \multicolumn{1}{c}{$p$} \\
\midrule

HP vs.\ MP, information not received
& -.716 & .136 & -5.27 & $<.001$
& \multicolumn{4}{c}{---} \\

HP vs.\ MP, information received
& -.288 & .143 & -2.02 & .088
& \multicolumn{4}{c}{---} \\

LB vs.\ HB, information not received
& .261 & .136 & 1.92 & .110
& \multicolumn{4}{c}{---} \\

LB vs.\ HB, information received
& -.326 & .142 & -2.30 & .043
& \multicolumn{4}{c}{---} \\

HP vs.\ MP
& \multicolumn{4}{c}{---}
& -.249 & .089 & -2.79 & .005 \\

\bottomrule
\end{tabular}

\caption{Descriptive statistics and results of the linear mixed-effects models predicting confidence ratings from \textit{Support}, \textit{Budget}, presence of information, classification accuracy, and their interactions. Descriptive statistics represent participant-level means $\pm$ standard deviations. Post-hoc contrasts compare confidence discrimination, defined as the difference in confidence between correct and incorrect classifications (post-hoc contrasts are reported only for significant effects involving classification accuracy; dashes indicate that no post-hoc contrast was conducted because the corresponding effect was not significant)}.
\label{tab:confidence}
\end{table}

\paragraph{Alignment between \ClasswiseRisk{}-selected and human-selected disclosure.} We explore alignment between \ClasswiseRisk{}-selected and human-selected information requests from two perspectives. First, we analyze differences between the frequency of information disclosure by the policy and humans using one-sample $t$-tests. Participants in the HP conditions request auxiliary information on fewer trials than the method would provide it in both the \Image{} and \Email{} tasks (see \cref{tab:help-used}).

Second, focusing on trials in which HP participants requested information, we descriptively examine how often these requests correspond to items for which \ClasswiseRisk{} would also have provided information. In the \Image{} task, the proportion of human-requested trials that were also selected by \ClasswiseRisk{} was $0.40 \pm 0.30$ in LB and $0.87 \pm 0.13$ in HB. In the \Email{} task, the corresponding proportions were $0.31 \pm 0.26$ and $0.56 \pm 0.20$, respectively. These proportions indicate that human requests only partially overlap with policy-selected disclosure, although the extent of this overlap differs considerably across tasks and budget conditions.

Taken together, these analyses indicate that humans and \ClasswiseRisk{} allocate disclosure differently. Participants request information on fewer trials than the policy provides it, which may limit the potential benefits of human-selected disclosure when useful assistance remains unrequested. Determining why participants request assistance relatively infrequently (e.g., because of overconfidence in their own classification ability or uncertainty about the value of assistance) requires further study, as the present user studies are designed to compare \ClasswiseRisk{}-selected and human-selected disclosure at the level of the complete strategies rather than identify the mechanisms underlying human request behavior. In addition, the descriptive overlap analysis indicates that the two strategies do not necessarily select the same trials for disclosure. These exploratory results therefore suggest that differences between policy-selected and human-selected disclosure concern both how frequently information is requested and which items receive it, motivating future work on the consequences of these differences for human-AI team performance.

\begin{table}[!ht]
\centering
~\footnotesize
\setlength{\tabcolsep}{4pt}
\renewcommand{\arraystretch}{1.0}

\begin{tabular}{llcccccc}
\toprule
Study
& Budget
& Requested assistance
& MP assistance
& 95\% CI
& \textit{t}
& \textit{df}
& \textit{p} \\
\midrule

\Image{}
& Low
& $3.73 \pm 2.12$
& 6
& [3.21, 4.25]
& $-8.70$
& 65
& $< .001$ \\

\Image{}
& High
& $5.99 \pm 3.20$
& 14
& [5.22, 6.75]
& $-20.82$
& 68
& $< .001$ \\

\Email{}
& Low
& $3.93 \pm 1.97$
& 5
& [3.47, 4.39]
& $-4.60$
& 71
& $< .001$ \\

\Email{}
& High
& $6.41 \pm 4.63$
& 9
& [5.25, 7.56]
& $-4.48$
& 63
& $< .001$ \\

\bottomrule
\end{tabular}

\caption{Number of assistance requests made by participants in the Human-Policy condition compared with the number of trials on which assistance was provided in the corresponding Machine-Policy condition. Requested assistance is reported as mean $\pm$ standard deviation. Confidence intervals and \textit{t} tests refer to one-sample tests comparing the observed number of requests with the number of assistance opportunities provided in the corresponding MP condition.}
\label{tab:help-used}
\end{table}

\subsection{Q3: How do participants interact with our policy?}
\label[appendix]{app:q3}

\textbf{Take-home message}: The learned policy is more beneficial whenever humans follow the advice, and self-reported trust in the AI system is broadly consistent with advice-adherence trends.

In addition to the analyses of advice adherence discussed in the main text, for which we report full details here, we analyze participants' responses to the trust scale, providing a complementary, self-reported perspective on their trust in the system providing auxiliary information.

\paragraph{Advice adherence.}
We assess participants' adherence to the AI advice provided as support information by fitting a logistic mixed-effects model predicting whether participants' classifications are consistent with the AI advice from \textit{Support}, \textit{Budget}, and their interaction. This analysis was presented in the main text and will not be further discussed, but see \cref{tab:behavioural-trust} for the full results. ~\footnote{Because adherence is operationalized differently in the \Image{} and \Email{} tasks, due to the nature of the advice provided, we repeat the \Image{} analyses using only trials with singleton prediction sets. On these trials, the advice specifies a single label and is therefore directly comparable to that provided in the \Email{} task (moreover, singleton sets represents the majority of cases, accounting for 64\% and 65\% of the observations in the analyzed LB and HB subsets, respectively). The pattern of results remains unchanged.}

\begin{table}[!ht]
\centering
~\footnotesize
\setlength{\tabcolsep}{4pt}
\renewcommand{\arraystretch}{1.0}

\begin{tabular}{llc@{\hspace{10pt}}c}
\toprule
\multicolumn{4}{c}{\textbf{Descriptive statistics}} \\
\midrule
&
&
\multicolumn{1}{c}{\textit{\Image{}}}
&
\multicolumn{1}{c}{\textit{\Email{}}} \\
\cmidrule(lr){3-3}
\cmidrule(lr){4-4}

\multicolumn{1}{c}{Support condition}
&
\multicolumn{1}{c}{Budget condition}
&
\multicolumn{1}{c}{\shortstack{Proportion of agreement\\with support information}}
&
\multicolumn{1}{c}{\shortstack{Proportion of agreement\\with support information}} \\
\midrule

Human Policy
& Low Budget
& $0.92 \pm 0.24$
& $0.96 \pm 0.17$ \\

Human Policy
& High Budget
& $0.94 \pm 0.13$
& $0.95 \pm 0.12$ \\

Machine Policy
& Low Budget
& $0.91 \pm 0.12$
& $0.81 \pm 0.17$ \\

Machine Policy
& High Budget
& $0.92 \pm 0.10$
& $0.89 \pm 0.14$ \\

\bottomrule
\end{tabular}

\vspace{0.75em}

\begin{tabular}{lcccc@{\hspace{10pt}}cccc}
\toprule
\multicolumn{9}{c}{\textbf{Regression coefficients}} \\
\midrule
&
\multicolumn{4}{c}{\textit{\Image{}}}
&
\multicolumn{4}{c}{\textit{\Email{}}} \\
\cmidrule(lr){2-5}
\cmidrule(lr){6-9}

\multicolumn{1}{c}{Fixed effect}
&
\multicolumn{1}{c}{OR}
&
\multicolumn{1}{c}{95\% CI}
&
\multicolumn{1}{c}{\textit{z}}
&
\multicolumn{1}{c}{\textit{p}}
&
\multicolumn{1}{c}{OR}
&
\multicolumn{1}{c}{95\% CI}
&
\multicolumn{1}{c}{\textit{z}}
&
\multicolumn{1}{c}{\textit{p}} \\
\midrule

Intercept
& 48.02
& [25.31, 91.13]
& 11.85
& $< .001$
& 25.08
& [13.88, 45.30]
& 10.68
& $< .001$ \\

Support condition (HP)
& 1.72
& [0.93, 3.18]
& 1.73
& .084
& 4.63
& [2.14, 10.04]
& 3.89
& $< .001$ \\

Budget condition (LB)
& 0.70
& [0.37, 1.31]
& -1.13
& .260
& 0.83
& [0.38, 1.81]
& -0.47
& .636 \\

Support $\times$ Budget
& 0.59
& [0.18, 1.97]
& -0.86
& .392
& 3.31
& [0.72, 15.29]
& 1.53
& .126 \\

\bottomrule
\end{tabular}

\caption{Descriptive statistics and results of the logistic mixed-effects regressions predicting participants' agreement with the support information from \textit{Support}, \textit{Budget}, and their interaction (OR: odds ratio).}
\label{tab:behavioural-trust}

\end{table}

\paragraph{Trust in the AI system.} We analyze the trust index, computed by averaging participants' ratings across the eight items of the trust scale, using linear models in which trust index values are predicted by \textit{Support} condition, \textit{Budget} condition, and their interaction (see \cref{tab:trust-index} for the full results).

In the \Image{} study, trust index values do not significantly differ between MP and HP (MP: $3.06 \pm 0.73$; HP: $2.90 \pm 0.75$; $p = .064$). while they are significantly higher in HB than in LB (HB: $3.13 \pm 0.70$; LB: $2.83 \pm 0.75$; $p < .001$), while the interaction between \textit{Support} and \textit{Budget} is not significant ($p = .096$). In contrast, in the \Email{} study, trust index values are significantly lower in MP than in HP (MP: $3.04 \pm 0.82$; HP: $3.32 \pm 0.70$; $p = .003$). HB and LB do not significantly differ ($p = .861$), whereas the interaction between \textit{Support} and \textit{Budget} is significant ($p = .046$). Follow-up comparisons indicate that the difference between MP and HP is significant under LB (MP-LB: $2.95 \pm 0.81$; HP-LB: $3.40 \pm 0.70$; $p = .002$), but not under HB (MP-HB: $3.15 \pm 0.82$; HP-HB: $3.23 \pm 0.70$; $p = 1$).

This pattern is broadly consistent with the results on adherence on support information: participants in MP reported similar trust in the AI system to those in HP in the \Image{} study, whereas in the \Email{} study, particularly under the lower budget, participants in MP reported lower values than those in HP.

\begin{table}[!ht]
\centering
~\footnotesize
\setlength{\tabcolsep}{4pt}
\renewcommand{\arraystretch}{1.0}

\begin{tabular}{llc@{\hspace{10pt}}c}
\toprule
\multicolumn{4}{c}{\textbf{Descriptive statistics}} \\
\midrule
&
&
\multicolumn{1}{c}{\textit{\Image{}}}
&
\multicolumn{1}{c}{\textit{\Email{}}} \\
\cmidrule(lr){3-3}
\cmidrule(lr){4-4}

\multicolumn{1}{c}{Support condition}
&
\multicolumn{1}{c}{Budget condition}
&
\multicolumn{1}{c}{Trust index}
&
\multicolumn{1}{c}{Trust index} \\
\midrule

Human Policy
& Low Budget
& $2.67 \pm 0.72$
& $3.40 \pm 0.70$ \\

Human Policy
& High Budget
& $3.12 \pm 0.71$
& $3.23 \pm 0.70$ \\

Machine Policy
& Low Budget
& $2.98 \pm 0.75$
& $2.95 \pm 0.81$ \\

Machine Policy
& High Budget
& $3.14 \pm 0.69$
& $3.15 \pm 0.82$ \\

\bottomrule
\end{tabular}

\vspace{0.75em}

\begin{tabular}{lcccc@{\hspace{10pt}}cccc}
\toprule
\multicolumn{9}{c}{\textbf{Regression coefficients}} \\
\midrule
&
\multicolumn{4}{c}{\textit{\Image{}}}
&
\multicolumn{4}{c}{\textit{\Email{}}} \\
\cmidrule(lr){2-5}
\cmidrule(lr){6-9}

\multicolumn{1}{c}{Fixed effect}
&
\multicolumn{1}{c}{$b$}
&
\multicolumn{1}{c}{SE}
&
\multicolumn{1}{c}{\textit{t}}
&
\multicolumn{1}{c}{\textit{p}}
&
\multicolumn{1}{c}{$b$}
&
\multicolumn{1}{c}{SE}
&
\multicolumn{1}{c}{\textit{t}}
&
\multicolumn{1}{c}{\textit{p}} \\
\midrule

Intercept
& 2.978
& 0.044
& 68.37
& $< .001$
& 3.182
& 0.046
& 69.12
& $< .001$ \\

Support (HP)
& -0.162
& 0.087
& -1.86
& .064
& 0.273
& 0.092
& 2.96
& .003 \\

Budget (LB)
& -0.303
& 0.087
& -3.48
& $< .001$
& -0.016
& 0.092
& -0.18
& .861 \\

Support $\times$ Budget
& -0.291
& 0.174
& -1.67
& .096
& 0.369
& 0.184
& 2.00
& .046 \\

\bottomrule
\end{tabular}

\vspace{0.75em}

\begin{tabular}{lrrrr}
\toprule
\multicolumn{5}{c}{\textbf{Post-hoc comparisons}: \Email{}} \\
\midrule
Contrast
& Estimate
& SE
& \textit{t}
& \textit{p} \\
\midrule

HP-LB vs.\ MP-LB
& 0.457
& 0.127
& 3.61
& .002 \\

HP-LB vs.\ HP-HB
& 0.168
& 0.130
& 1.29
& 1 \\

HP-LB vs.\ MP-HB
& 0.257
& 0.130
& 1.98
& .294 \\

MP-LB vs.\ HP-HB
& -0.289
& 0.131
& -2.21
& .167 \\

MP-LB vs.\ MP-HB
& -0.201
& 0.130
& -1.54
& .745 \\

HP-HB vs.\ MP-HB
& 0.088
& 0.134
& 0.66
& 1 \\

\bottomrule
\end{tabular}

\caption{Descriptive statistics and results of the linear models predicting trust index values from \textit{Support}, \textit{Budget}, and their interaction. Descriptive statistics represent means $\pm$ standard deviations. Post-hoc comparisons for the \Email{} study are Bonferroni-corrected for six comparisons.}
\label{tab:trust-index}

\end{table}

\paragraph{Counterfactual benchmark.} We report here the results of the counterfactual analysis, in which we assess the performance that MP participants would have achieved had their predictions been overruled with the ML advice when provided (for the \Image{} task, we convert each prediction set into a single prediction by retaining its highest-scoring class, which is treated as the counterfactual response). This analysis was presented in the main text and will not be further discussed, but see \cref{tab:counterfactual-reliance} for the full results. ~\footnote{Also in this case, restricting the \Image{} analyses to trials in which the AI advice consists of a singleton prediction set leaves the pattern of results broadly unchanged. The only difference is that the contrast between MP and MPCFT becomes significant, favoring the latter especially in HB.}

\begin{table}[!ht]
\centering
~\footnotesize
\setlength{\tabcolsep}{4pt}
\renewcommand{\arraystretch}{1.0}

\begin{tabular}{lc@{\hspace{10pt}}c}
\toprule
\multicolumn{3}{c}{\textbf{Descriptive statistics}} \\
\midrule
&
\multicolumn{1}{c}{\textit{\Image{}}}
&
\multicolumn{1}{c}{\textit{\Email{}}} \\
\cmidrule(lr){2-2}
\cmidrule(lr){3-3}

\multicolumn{1}{c}{Support condition}
&
\multicolumn{1}{c}{Accuracy}
&
\multicolumn{1}{c}{Accuracy} \\
\midrule

Human Policy
& $0.77 \pm 0.13$
& $0.83 \pm 0.13$ \\

Machine Policy
& $0.80 \pm 0.11$
& $0.82 \pm 0.12$ \\

Machine Policy -- Counterfactual benchmark
& $0.82 \pm 0.11$
& $0.85 \pm 0.10$ \\

\bottomrule
\end{tabular}

\vspace{0.75em}

\begin{tabular}{lcccc@{\hspace{10pt}}cccc}
\toprule
\multicolumn{9}{c}{\textbf{Regression coefficients}} \\
\midrule
&
\multicolumn{4}{c}{\textit{\Image{}}}
&
\multicolumn{4}{c}{\textit{\Email{}}} \\
\cmidrule(lr){2-5}
\cmidrule(lr){6-9}

\multicolumn{1}{c}{Fixed effect}
&
\multicolumn{1}{c}{OR}
&
\multicolumn{1}{c}{95\% CI}
&
\multicolumn{1}{c}{\textit{z}}
&
\multicolumn{1}{c}{\textit{p}}
&
\multicolumn{1}{c}{OR}
&
\multicolumn{1}{c}{95\% CI}
&
\multicolumn{1}{c}{\textit{z}}
&
\multicolumn{1}{c}{\textit{p}} \\
\midrule

Intercept
& 14.40
& [10.01, 20.71]
& 14.38
& $< .001$
& 9.18
& [7.35, 11.48]
& 19.49
& $< .001$ \\

Support - HP
& 0.59
& [0.44, 0.80]
& -3.40
& $< .001$
& 0.87
& [0.63, 1.19]
& -0.87
& .385 \\

Support - MP
& 1.05
& [0.84, 1.32]
& 0.43
& .668
& 0.82
& [0.66, 1.03]
& -1.71
& .087 \\

Budget (LB)
& 0.44
& [0.35, 0.56]
& -6.90
& $< .001$
& 0.76
& [0.59, 0.97]
& -2.23
& .026 \\

Support - HP $\times$ Budget
& 1.41
& [0.77, 2.57]
& 1.12
& .264
& 1.12
& [0.59, 2.12]
& 0.35
& .727 \\

Support - MP $\times$ Budget
& 1.07
& [0.68, 1.70]
& 0.31
& .758
& 1.00
& [0.64, 1.57]
& 0.02
& .983 \\

\bottomrule
\end{tabular}

\vspace{0.75em}

\begin{tabular}{lcccc@{\hspace{10pt}}cccc}
\toprule
\multicolumn{9}{c}{\textbf{Post-hoc comparisons}} \\
\midrule
&
\multicolumn{4}{c}{\textit{\Image{}}}
&
\multicolumn{4}{c}{\textit{\Email{}}} \\
\cmidrule(lr){2-5}
\cmidrule(lr){6-9}

\multicolumn{1}{c}{Contrast}
&
\multicolumn{1}{c}{OR}
&
\multicolumn{1}{c}{SE}
&
\multicolumn{1}{c}{\textit{z}}
&
\multicolumn{1}{c}{\textit{p}}
&
\multicolumn{1}{c}{OR}
&
\multicolumn{1}{c}{SE}
&
\multicolumn{1}{c}{\textit{z}}
&
\multicolumn{1}{c}{\textit{p}} \\
\midrule

HP vs.\ MP
& 0.75
& 0.09
& -2.33
& .059
& 1.03
& 0.13
& 0.21
& 1 \\

HP vs.\ MPCFT
& 0.61
& 0.08
& -4.00
& $< .001$
& 0.79
& 0.10
& -1.84
& .196 \\

MP vs.\ MPCFT
& 0.81
& 0.07
& -2.34
& .058
& 0.77
& 0.06
& -3.25
& .004 \\

\bottomrule
\end{tabular}

\caption{Descriptive statistics and results of the logistic mixed-effects
regressions comparing observed accuracy in HP and MP with counterfactual accuracy under full advice adherence on
automatically disclosed support information (MPCFT).
Descriptive statistics represent participant-level means $\pm$ standard
deviations. Post-hoc comparisons are averaged across budget conditions and
Bonferroni-corrected for three comparisons (\emph{OR}: odds ratio).}
\label{tab:counterfactual-reliance}

\end{table}

\subsection{Q4: How does changing the information content affect our policy?}
\label[appendix]{app:alpha}

\textbf{Take-home message}: The benefit of selective disclosure depends on how informative the support information is. At $\alpha = 0.05$, where prediction sets are small and informative, \ClasswiseRisk{} prioritizes them at every budget; as $\alpha$ decreases and uninformative full sets become frequent, the gain of any disclosure policy shrinks, and at $\alpha = 0.01$ no policy separates from full disclosure.

In the \Image{} setting the support information $S$ is the conformal prediction set $C(\mbx)$, whose size is governed by the miscoverage level $\alpha$: smaller $\alpha$ enforces higher coverage and therefore yields larger sets. The size of $C(\mbx)$ directly controls how informative disclosure can be. In particular, a set that coincides with the full label space, $|C(\mbx)| = |\mcY| = 16$, is compatible with every class and thus carries \emph{no} discriminative signal: it cannot help the decision-maker refine their judgement, so its Value of Information is non-positive and a well-behaved policy should never spend budget disclosing it. The prevalence of such full sets consequently upper-bounds the benefit attainable at a given $\alpha$.

We examine the role of $\alpha$ along three axes: (i) we analyse how the distribution of set sizes varies with $\alpha$, motivating the choice of $\alpha=0.05$ in the main paper; (ii) we report the budget--accuracy curves for the remaining levels $\alpha\in\{0.01,0.02\}$ and comment on how the achievable benefit shrinks as sets grow; and (iii) we inspect \emph{which} sets our policy actually chooses to disclose as a function of the budget, showing that at $\alpha=0.05$ it essentially never wastes budget on uninformative full sets.

\paragraph{(i) Set-size distribution and the choice of \texorpdfstring{$\alpha$}{alpha}.}
\begin{table}
\centering
\caption{Distribution of conformal prediction set sizes on the \Image{} dataset ($n=1200$ images, $|\mcY|=16$ classes) for the three miscoverage levels $\alpha$ considered in our experiments. Smaller $\alpha$ enforces higher coverage and thus yields larger sets; note in particular the mass on the full set ($|C(\mbx)|=16$), which carries no information for the decision-maker. Dashes denote empty bins.}
\label{tab:cp_set_sizes}
\setlength{\tabcolsep}{4pt}
\resizebox{\textwidth}{!}{%
\begin{tabular}{@{}lrrrrrrrrrrrrrrrr rr@{}}
\toprule
& \multicolumn{16}{c}{Set size $|C(\mbx)|$} & \multicolumn{2}{c}{Summary} \\
\cmidrule(lr){2-17}\cmidrule(l){18-19}
$\alpha$ & $1$ & $2$ & $3$ & $4$ & $5$ & $6$ & $7$ & $8$ & $9$ & $10$ & $11$ & $12$ & $13$ & $14$ & $15$ & $16$ & Mean & Sing.\% \\
\midrule
$0.01$ & 664 &  92 & 32 &  9 & 10 &  3 & 1 & 1 & -- &  1 & -- & -- & -- & -- & -- & 387 & $6.05$ & $55.3$ \\
$0.02$ & 735 & 148 & 54 & 31 & 16 & 13 & 9 & 8 &  4 &  4 &  2 &  1 & -- &  5 &  1 & 169 & $3.75$ & $61.3$ \\
$0.05$ & 804 & 200 & 99 & 55 & 24 & 13 & 3 & -- & -- & -- & -- & -- & -- & -- & -- &   2 & $1.64$ & $67.0$ \\
\bottomrule
\end{tabular}%
}
\end{table}
\cref{tab:cp_set_sizes} reports the distribution of set sizes on the \Image{} dataset for the three miscoverage levels considered in our experiments. As expected, the smaller $\alpha$, the larger the prediction sets: the mean set size shrinks from $6.05$ at $\alpha=0.01$ to $1.64$ at $\alpha=0.05$. The focus for our purposes is not the exact size but whether the disclosed set is \emph{informative}: a large set  provides little guidance to the decision-maker, and in the limit the full set ($|C(\mbx)|=16$) carries no signal at all. From this perspective the three levels differ sharply. 

At $\alpha=0.05$ the support information is almost always useful: essentially all sets have size below $7$, and only $2$ out of $1200$ images fall on large, uninformative sets ($|C(\mbx)|>10$). 
At $\alpha=0.01$ and $\alpha=0.02$, instead, a substantial fraction of the sets is large and uninformative (the full set alone accounts for $387$ and $169$ images respectively) so a non-negligible share of any disclosed feedback would be effectively useless. This is precisely why we adopt $\alpha=0.05$ in the main paper: it is the level at which disclosure is almost always informative, so that revealing a set genuinely helps the human decision rather than wasting budget on redundant, uninformative feedback.

\paragraph{(ii) Disclosure accuracy across miscoverage levels.}
\cref{fig:cp_alpha_acc} reports the budget--accuracy curves for $\alpha \in \{0.05, 0.02, 0.01\}$, complementing the results in the main paper (\cref{fig:results}). A first effect of lowering $\alpha$ is visible already at the endpoints: as the prediction sets grow larger and less discriminative, the average benefit of full disclosure shrinks. Full disclosure (\hone) attains about $80.5\%$ at both $\alpha = 0.01$ and $\alpha = 0.02$, i.e.\ only about $3.3$ points above the no-disclosure accuracy (\hzero $\approx 77.2\%$), against the $8.3$-point gap observed at $\alpha = 0.05$. Larger sets thus carry less usable signal, and the ceiling that any disclosure policy can reach is correspondingly lower. 

At $\alpha = 0.02$, selective disclosure remains effective: \ClasswiseRisk{} reaches the accuracy of full disclosure already at $B = 0.5$, whereas \Confidence{} needs $B = 0.8$ and \Random{} the full budget. Both policies then slightly exceed \hone{} at higher budgets (\ClasswiseRisk{} $\approx 81.0\%$ at $B = 0.7$, \Confidence{} $\approx 81.3\%$ at $B = 0.9$), although these gains are within the confidence bands.
At $\alpha = 0.01$, instead, no policy separates from full disclosure: \ClasswiseRisk{}, \Confidence{} and even \Random{} track \hone{} within their confidence bands, and \ClasswiseRisk{} falls slightly below it at high budgets ($\approx 79.3\%$ at $B = 0.9$, about three images).

\begin{figure}[t]
    \centering
    \includegraphics[width=\linewidth]{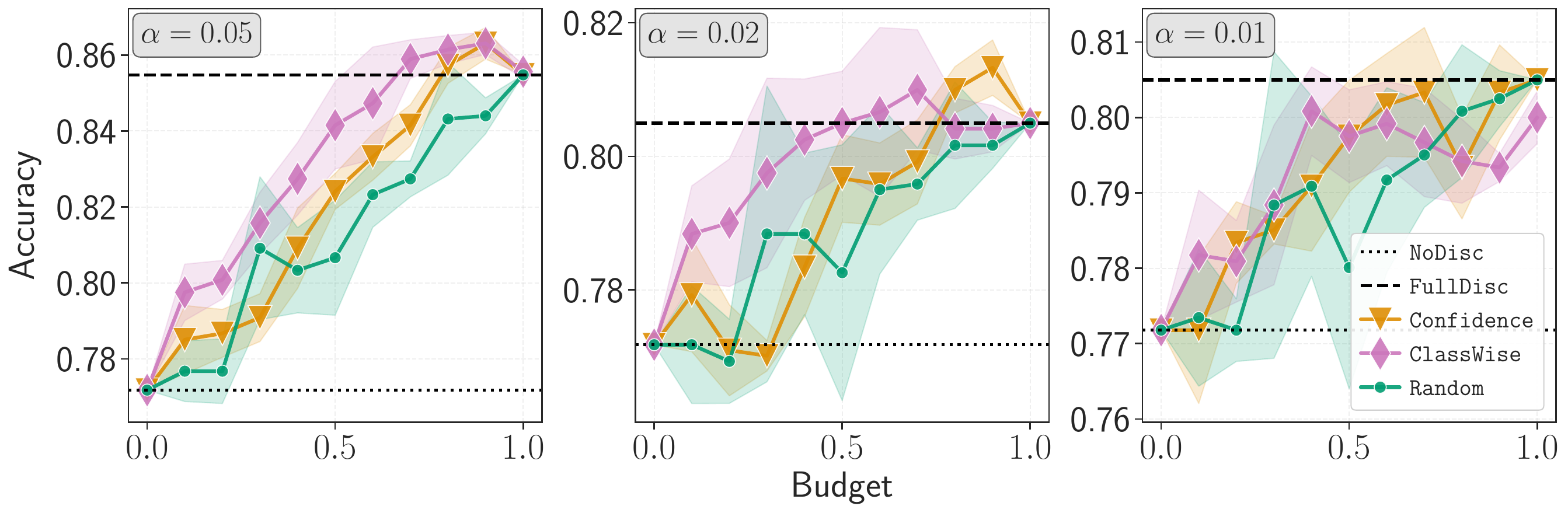}
    \caption{Budget--accuracy curves on the \Image{} dataset for the two additional miscoverage levels $\alpha\in\{0.01,0.02\}$. As $\alpha$ decreases the prediction sets grow larger and less informative, lowering the average benefit of full disclosure (\hone$\approx80.5\%$ against $85.5\%$ at $\alpha=0.05$).}
    \label{fig:cp_alpha_acc}
\end{figure}

We attribute this behaviour to the set-size distribution at $\alpha = 0.01$ (\cref{tab:cp_set_sizes}). When $\alpha$ is very small, the prediction sets are large: the mean set size is $6.05$, and about one third of the images ($387/1200$) receive the full, completely uninformative set. First, fewer instances receive an informative set, so the mass of instances with positive VoI is smaller and a selective policy has less signal to exploit. Second, estimating the VoI becomes harder: the risk models must tell apart many large, overlapping sets whose effect on the human decision is weak and noisy. Ranking errors then cause the policy to withhold informative sets in favour of uninformative ones, which explains why it can fall slightly below full disclosure at high budgets. Overall, these results indicate that the benefit of selective disclosure depends on how informative the support information is: when it carries little signal, as at $\alpha = 0.01$, there is little for any policy to gain over simply disclosing everything.

\paragraph{(iii) Does the policy disclose informative sets?}
Since a large prediction set carries little signal, a good disclosure policy should spend its budget on small, informative sets and avoid the uninformative full set ($|C(\mbx)| = 16$) whenever possible. Here we ask whether the \ClasswiseRisk{} policy does so, and how this behaviour changes with the budget. \cref{fig:cp_size} shows, for each budget $B$, the fraction of test instances of each set size to which the policy discloses the support information: a policy indifferent to set size would disclose a fraction $B$ of the instances of every size.

At $\alpha = 0.05$, the policy clearly favours small sets at every budget. Full sets are essentially absent, since only $2$ exist in the whole dataset.
At $\alpha = 0.02$ and $\alpha = 0.01$, the picture becomes budget-dependent. At small budgets the policy still prioritizes small sets, showing that the estimated VoI ranks informative sets first. As the budget grows, however, the policy increasingly discloses full sets as well, reaching on average $77$ of the $85$ full sets in the test set ($91\%$) at $B = 1$ for $\alpha = 0.01$, and all $38$ of the $38$ ($100\%$) full sets for $\alpha = 0.02$.

These results confirm $\alpha=0.05$ as the most favourable operating point: the regime in which the policy can disclose informative feedback at \emph{every} budget level. These motivate its use in the main paper.

\begin{figure}
    \centering
    \includegraphics[width=\linewidth]{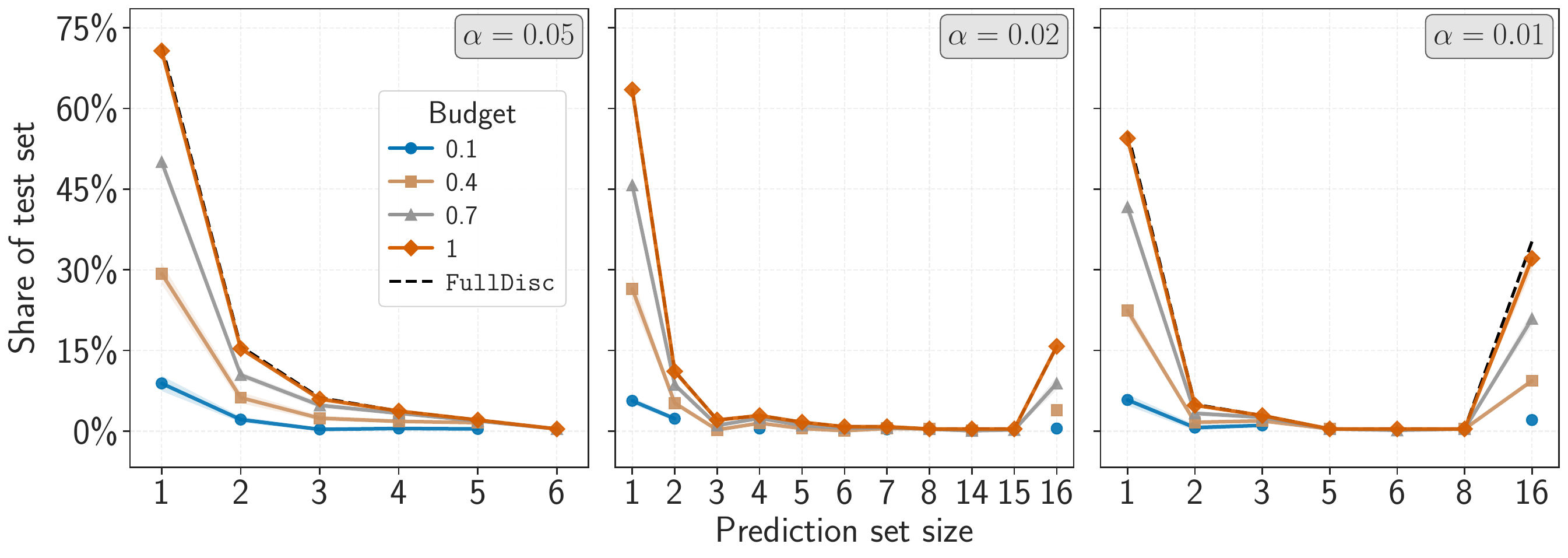}
    \caption{Prediction-set sizes disclosed by the \ClasswiseRisk{} policy on \Image{}, at $\alpha \in \{0.05, 0.02, 0.01\}$. For each set size, the curves show the share of the test set that the policy discloses at budgets $B \in \{0.1, 0.4, 0.7\}$ (mean over $5$ seeds, with $95\%$ confidence intervals). The dashed line (\texttt{FullDisc}) is the share of the test set with that set size, i.e., what full disclosure would reveal, so the gap between a curve and the dashed line gives the instances of that size the policy withholds. }
    \label{fig:cp_size}
\end{figure}

\section{Extended Related Work}
\label[appendix]{app:related}

\paragraph{Learning to defer and its extensions.}~\citep{DBLP:conf/nips/MadrasPZ18} introduce Learning to Defer (LtD), casting human-AI collaboration as a mutually exclusive choice: for each instance, either the model predicts or the case is deferred to a human expert (for a broader overview of rejection and deferral, see \citet{DBLP:conf/aaai/RuggieriP25}). The framework has since been developed along several axes. On the theoretical side,~\citep{DBLP:conf/icml/MozannarS20} derive consistent surrogate losses for the joint model-plus-deferral objective,~\citep{DBLP:conf/icml/VermaN22} propose a one-vs-all formulation with improved calibration of the deferral decision, and~\citep{DBLP:conf/aistats/MozannarLWSDS23} study exact algorithms for learning who should predict. For further work developing LtD methods with theoretical guarantees, see \citet{okati2021triage, charusaie2024deferandfusion, DBLP:conf/nips/CaoM0W023, DBLP:conf/aistats/LiuCZF024, DBLP:conf/aaai/GaoY25, li2017applying, DBLP:conf/icml/MontreuilCNO25, montreuil2026optimalqueryallocationextractive, DBLP:journals/tmlr/FangN26}. Other theoretical extensions of the LtD approach involved considering multiple-expert settings \citep{DBLP:conf/aistats/VermaBN23, DBLP:conf/nips/MaoMM023, DBLP:journals/corr/abs-2504-12988, DBLP:conf/icml/MaoM025, DBLP:conf/aaai/ZhangNWDRC26, DBLP:journals/corr/abs-2602-17144}, multi-task settings \citep{pugnana2025ask, DBLP:conf/icml/MontreuilHCNO25}, and aspects such as rejection of unexplainable decisions \citep{DBLP:journals/corr/abs-2507-12900}.

On the empirical side,~\citep{DBLP:conf/iui/HemmerWSVVS23} examine how delegation affects task performance and satisfaction with real participants, \citep{DBLP:conf/aistats/PalombaPAR25} evaluate deferring systems through a causal lens, and \citep{DBLP:conf/aaai/BondiKSCBCPD22} investigate how communicating a deferral affects human accuracy. Other works have examined the impact class distribution on rejected instances \citep{DBLP:journals/dmlr/PugnanaPDR24}, with \citet{pesenti2026samealgorithmichumanbias} examining how class imbalances may affect users' interactions with LtD systems. Deferral and rejection have also been studied in healthcare \citep{DBLP:conf/aaai/StrongMN25,DBLP:journals/npjdm/KompaSB21}, vehicle engineering \citep{DBLP:conf/adma/HendrickxMCD22}, and question answering \citep{montreuil2026optimalqueryallocationextractive}.
The defining feature of LtD is that authority over the final decision is allocated: on deferred instances the model abstains, and on the remaining ones the human is bypassed entirely. Our setting is structurally different. The human is \emph{always} the decision-maker, and what is allocated is not authority but \emph{information}: the policy decides what the human sees, never what the human decides. Consequently our risk is defined over human actions in two information regimes (\cref{eq:risks}), rather than over a model prediction and a human prediction.

\paragraph{AI-assisted decision-making.}
A second line of work keeps the human as the final decision-maker and asks what form the support should take.
Prediction sets are a prominent example: ~\citep{DBLP:conf/icml/StraitouriR24} design decision-support systems based on counterfactual prediction sets, ~\citep{DBLP:conf/nips/ToniOTSR24} construct prediction sets that target the expert's accuracy rather than coverage, and ~\citep{DBLP:conf/icml/CresswellSKV24} show, in a pre-registered randomized controlled trial, that conformal prediction sets improve human accuracy over fixed-size (top-$k$) sets with the same coverage.
~\citep{DBLP:conf/iui/SchemmerKBBS23} instead conceptualize appropriate reliance on AI advice and study how explanations affect it.
Broader syntheses are provided by ~\citep{DBLP:journals/corr/abs-2402-06287}, who survey learning paradigms for hybrid decision-making systems, and by ~\citep{DBLP:journals/ejis/HemmerSKVS25}, who conceptualize human-AI complementarity and review its sources and the empirical evidence for it.

Closer to our setting, a smaller body of work decides, instance by instance, whether (and which) support to show.
~\citep{DBLP:conf/ijcai/Noti023} study a regression task (pretrial risk assessment) and learn from past human predictions a policy that, given the case, the algorithmic risk score and the human's initial estimate, reveals the score only when it is predicted to be more accurate than that estimate; in a large-scale experiment, this improves human predictions over always showing the score.
~\citep{10.1145/3544548.3581058} do not learn the disclosure rule itself: they estimate the decision-maker's correctness likelihood on each instance by applying an approximation of their individual decision rules to similar labelled cases, and compare it with the AI's calibrated confidence; whenever the human is predicted to be more likely correct, the AI recommendation is either withheld (only its explanation is shown) or revealed only after an independent human judgement.
Other works choose among several forms of support.
~\citep{DBLP:conf/aaai/BhattCCKKWT25} cast this choice as a stochastic contextual bandit and learn online, separately for each new decision-maker, a policy that selects, for each input, the form of support (e.g., none, a model or LLM prediction, expert consensus) expected to minimize that individual's error.
~\citep{DBLP:journals/corr/abs-2403-05911} instead apply offline reinforcement learning to previously collected interaction data to choose among no assistance, explanation only, recommendation with explanation, and on-demand advice, optimizing for immediate accuracy, for the decision-maker's learning, or for both; their policies condition on a discrete state that includes the decision-maker's need for cognition and task knowledge and, as a proxy for AI uncertainty, the ground-truth correctness of the AI recommendation.
In the symmetric direction, ~\citep{pugnana2025ask} propose Learning to Ask, where a model decides under a budget when to query a human for enriched feedback, and characterize the optimal querying rule as a threshold on the risk difference between a standard and an enriched predictor; relatedly, ~\citep{wilder2020complement} and ~\citep{charusaie2024deferandfusion} train predictors that complement, or directly incorporate, human decisions.

Compared with the approaches that adapt the support shown to the human, LSD differs in three respects.
First, disclosure is subject to a budget, which makes the optimal policy a non-negative, budget-dependent threshold on VoI; without a budget and with only two actions, the optimal policy of ~\citep{DBLP:conf/aaai/BhattCCKKWT25} reduces to disclosing whenever an individual-level VoI is positive, i.e., the unconstrained ($B=1$) case of our threshold rule.
Second, VoI is the causal reduction in human decision risk induced by disclosure, and therefore depends on how decision-makers actually use the disclosed information, whereas ~\citep{DBLP:conf/ijcai/Noti023} and ~\citep{10.1145/3544548.3581058} condition disclosure on whether the AI is predicted to be more accurate than the human alone.
Third, we establish when VoI is identifiable from data collected under the two regimes, and bound both the degradation of the resulting plug-in policies relative to no disclosure and their regret relative to the optimal policy.

\paragraph{Active feature acquisition.}
In Active Feature Acquisition (AFA), an agent decides which missing feature values to acquire, at a cost, in order to improve a downstream predictive model~\citep{DBLP:journals/ml/JanischPL20,DBLP:journals/corr/abs-2502-11067,saar2009active,ji2007cost}.
~\citep{saar2009active} acquire feature values at training time to improve model induction, whereas at test time ~\citep{ji2007cost} formulate cost-sensitive acquisition and classification jointly and ~\citep{DBLP:journals/ml/JanischPL20} cast classification with costly features as a sequential decision problem solved by reinforcement learning.
~\citep{DBLP:journals/corr/abs-2502-11067} provide a unified view, organizing existing methods into embedded cost-aware predictors, model-based approaches, model-free policies, and hybrid strategies.
Related in spirit are ~\citep{DBLP:conf/icml/MaTPHNZ19} and ~\citep{DBLP:conf/nips/GongTNTHZ19}, who score candidate acquisitions with information-theoretic criteria.
Our VoI instead follows the classical decision-theoretic notion of value of information as the expected reduction in decision loss~\citep{4082064}, with one twist: for a Bayesian decision-maker this quantity is never negative, whereas ours is measured on actual human decisions and can be.
Two differences matter.
First, the acquisition target is a \emph{model}'s prediction rather than a \emph{human}'s decision, so the relevant risk is a model risk and the counterfactual ``what would the predictor do without this feature'' is directly computable, whereas the human counterfactual is not; this is precisely what forces the causal treatment of \cref{app:causal}.
Second, AFA chooses \emph{which} features to acquire, often sequentially and under a per-instance budget, whereas LSD makes a single binary decision on an indivisible block of support information, under a budget on the expected disclosure rate across cases.

\paragraph{Policy learning under budget constraints.}
The causal reading of our framework connects it to \emph{optimal policy learning} (OPL), which studies how to learn, from data, treatment assignment rules that maximize a welfare objective~\citep{https://doi.org/10.1111/j.1468-0262.2004.00530.x,https://doi.org/10.3982/ECTA15732,DBLP:journals/ijdsa/Cerulli26}.
Without constraints, the first-best rule treats every unit whose conditional average treatment effect (CATE) is positive; ~\citep{kitagawa2018should} propose empirical welfare maximization over constrained classes of treatment rules, ~\citep{BHATTACHARYA2012168} show that when a budget caps the fraction of treated units the optimal rule treats those whose CATE exceeds a quantile threshold, and ~\citep{DBLP:conf/aistats/CarranzaA25} extend policy learning to observational data from multiple sources using doubly robust estimators.
Our \cref{thm:optimal_disclosure} recovers a threshold rule of the same shape, with the VoI in the role of the CATE and the budget $B$ in the role of the capacity constraint.
As in budget-constrained OPL, the threshold is non-negative, so disclosure is withheld whenever it is expected to harm the decision, even if budget remains; in our setting this case is far from marginal, since support information can mislead the decision-maker, e.g., by inducing over-reliance.
The substantive difference is where the treatment acts.
In standard OPL the treatment is applied to the unit whose outcome is measured.
Here the treatment (disclosure) is applied to the \emph{decision-maker}, while the outcome is the correctness of a decision about a \emph{case}.
This has two consequences we make explicit in \cref{app:causal}: the missing-potential-outcome problem arises at the level of the decision episode rather than the case, and is addressed by assigning different decision-makers to the two regimes, which requires those observed under each regime to be exchangeable; and SUTVA must be stated over episodes, since what must not interfere is the information disclosed to a given decision-maker in a given episode.

\paragraph{Heterogeneous treatment effect estimation.}
Since the VoI is the negative conditional average treatment effect (CATE) of disclosure on decision loss (\cref{subsec:causal}), estimating it connects our work to the literature on heterogeneous treatment effects~\citep{nogueira2022methods}. Meta-learners reduce CATE estimation to standard supervised learning, either by fitting one outcome model per treatment arm (T-learner) or a single model with the treatment as input (S-learner)~\citep{kunzel2019metalearners}; more refined estimators include the R-learner~\citep{10.1093/biomet/asaa076}, doubly robust learners~\citep{10.1214/23-EJS2157}, causal forests~\citep{Wager03072018} and neural architectures sharing representations across arms~\citep{DBLP:conf/icml/ShalitJS17,DBLP:conf/nips/ShiBV19}. In our setting, doubly robust corrections are less critical than in observational studies, since each case is observed under both regimes and disclosure is assigned independently of the case covariates. 
Moreover, accurate effect estimation is neither necessary nor sufficient for good causal decisions~\citep{doi:10.1287/ijds.2021.0006}: ~\citep{DBLP:conf/nips/FrauenMSSF25} show that thresholding CATE estimators trained for estimation accuracy can yield suboptimal policies, since accuracy away from the decision boundary is irrelevant to the decision, and ~\citep{DBLP:journals/corr/abs-2602-03517} show that, when individuals must be prioritized, recovering the ranking of treatment effects is easier than estimating their magnitude, and propose to learn it directly. Our setting requires both: under a binding budget, the optimal policy of \cref{thm:optimal_disclosure} depends only on the ranking of the VoI, while the non-negative threshold depends on its sign. These are precisely the properties on which our plug-in guarantees (\cref{subsec:guarantees}) and our ablation (\cref{app:q1}) focus.

\end{document}